\newcommand{\aaaa}{\mathrm{(a)}}
\newcommand{\bbbb}{\mathrm{(b)}}
\newcommand{\cccc}{\mathrm{(c)}}
\newcommand{\dddd}{\mathrm{(d)}}
\newcommand{\eeee}{\mathrm{(e)}}
\newcommand{\ffff}{\mathrm{(f)}}
\newcommand{\gggg}{\mathrm{(g)}}
\newcommand{\hhhh}{\mathrm{(h)}}
\newcommand{\Prob}[1]{\mathbb{P} \left( #1 \right)}
\newcommand{\lp}{\left(}
\newcommand{\rp}{\right)}
\newcommand{\lb}{\left[}
\newcommand{\rb}{\right]}
\newcommand{\lbp}{\left\{}
\newcommand{\rbp}{\right\}}
\newcommand{\wtild}{\widetilde}
\newcommand{\mb}{\mathbf}
\newcommand{\diag}{\text{diag}}
\DeclareMathOperator*{\argmin}{arg\,min}
\newcommand{\bmat}{\left[\begin{matrix}}
\newcommand{\emat}{\end{matrix}\right]}
\newcommand{\trace}[1]{\textbf{tr}\lp #1\rp}
\begin{document}

\title{Redundancy Techniques for Straggler Mitigation in Distributed Optimization and Learning}

\author{\name Can Karakus \email karakus@ucla.edu  \\
       \addr Department of Electrical and Computer Engineering\\
       University of California, Los Angeles\\
       Los Angeles, CA 90095, USA
       \AND
       \name Yifan Sun \email ysun13@cs.ubc.ca \\
       \addr Department of Computer Science\\
	University of British Columbia\\
       Vancouver, BC, Canada
       \AND
       \name Suhas Diggavi \email suhas@ee.ucla.edu \\
       \addr Department of Electrical and Computer Engineering\\
       University of California, Los Angeles\\
       Los Angeles, CA 90095, USA
       \AND
       \name Wotao Yin \email wotaoyin@math.ucla.edu \\
       \addr Department of Mathematics\\
       University of California, Los Angeles\\
       Los Angeles, CA 90095, USA
       }

\maketitle

\begin{abstract}%   <- trailing '%' for backward compatibility of .sty file
Performance of distributed optimization and learning systems is bottlenecked by ``straggler'' nodes and slow communication links, which significantly delay computation. We propose a distributed optimization framework where the dataset is ``encoded'' to have an over-complete representation with built-in redundancy, and the straggling nodes in the system are dynamically left out of the computation at every iteration, whose loss is compensated by the embedded redundancy. We show that oblivious application of several popular optimization algorithms on encoded data, including gradient descent, L-BFGS, proximal gradient under data parallelism, and coordinate descent under model parallelism, converge to either approximate or exact solutions of the original problem when stragglers are treated as erasures. These convergence results are deterministic, \emph{i.e.}, they establish sample path convergence for arbitrary sequences of delay patterns or distributions on the nodes, and are independent of the tail behavior of the delay distribution. We demonstrate that equiangular tight frames have desirable properties as encoding matrices, and propose efficient mechanisms for encoding large-scale data. We implement the proposed technique on Amazon EC2 clusters, and demonstrate its performance over several learning problems, including matrix factorization, LASSO, ridge regression and logistic regression, and compare the proposed method with uncoded, asynchronous, and data replication strategies.
\end{abstract}

\begin{keywords}
  Distributed optimization, straggler mitigation, proximal gradient, coordinate descent, restricted isometry property
\end{keywords}

\section{Introduction}\label{sec:introduction}
Solving learning and optimization problems at present scale often requires parallel and distributed implementations to deal with otherwise infeasible computational and memory requirements. However, such distributed implementations often suffer from system-level issues such as slow communication and unbalanced computational nodes. The runtime of many distributed implementations are therefore throttled by that of a few slow nodes, called stragglers, or a few slow communication links, whose delays significantly encumber the overall learning task. In this paper, we propose a distributed optimization framework based on proceeding with each iteration without waiting for the stragglers, and encoding the dataset across nodes to add redundancy in the system in order to mitigate the resulting potential performance degradation due to lost updates.

%  An obvious mitigating solution is to simply continue without waiting for stragglers; e.g. using only a subset of worker node outputs at each iteration. However, in general, this can significantly degrade the performance of many optimization algorithms [citation]. In this paper, we propose a distributed optimization framework based on encoding the dataset across nodes to add redundancy in the system, eliminating this degradation for a wide class of algorithms, and thus providing  robustness against stragglers.

We consider the master-worker architecture, where the dataset is distributed across a set of worker nodes, which directly communicate to a master node to optimize a global objective. The encoding framework consists of an efficient linear transformation (coding) of the dataset that results in an overcomplete representation, which is then partitioned and distributed across the worker nodes. The distributed optimization algorithm is then performed directly on the encoded data, with all worker nodes oblivious to the encoding scheme, \emph{i.e.}, no explicit decoding of the data is performed, and nodes simply solve the effective optimization problem after encoding. In order to mitigate the effect of stragglers, in each iteration, the master node only waits for the first $k$ updates to arrive from the $m$ worker nodes (where $k\leq m$ is a design parameter) before moving on; the remaining $m - k$ node results are effectively erasures, whose loss is compensated by the data encoding.

The framework is applicable to both the data parallelism and model parallelism paradigms of distributed learning, and can be applied to distributed implementations of several popular optimization algorithms, including gradient descent, limited-memory-BFGS, proximal gradient, and block coordinate descent. We show that if the linear transformation is designed to satisfy a spectral condition resembling the restricted isometry property, the iterates resulting from the encoded version of these algorithms deterministically converge to an exact solution for the case of model paralellism, and an approximate one under data parallelism, where the approximation quality only depends on the properties of encoding and the parameter $k$. These convergence guarantees are deterministic in the sense that they hold for any pattern of node delays, \emph{i.e.}, even if an adversary chooses which nodes to delay at every iteration. In addition, the convergence behavior is independent of the tail behavior of the node delay distribution. Such a worst-case guarantee is not possible for the asynchronous versions of these algorithms, whose convergence rates deteriorate with increasing node delays. We point out that our approach is particularly suited to computing networks with a high degree of variability and unpredictability, where a large number of nodes can delay their computations for arbitrarily long periods of time.

Our contributions are as follows: {\sf (i)} We propose the encoded distributed optimization framework, and prove deterministic convergence guarantees under this framework for gradient descent, L-BFGS, proximal gradient and block coordinate descent algorithms; {\sf (ii)} we provide three classes of encoding matrices, and discuss their properties, and describe how to efficiently encode with such matrices on large-scale data; {\sf (iii)} we implement
the proposed technique on Amazon EC2 clusters and compare their performance to uncoded, replication, and asynchronous strategies for problems such as ridge regression, collaborative filtering, logistic regression, and LASSO. In these tasks we show that in the presence of stragglers, the technique can result in significant speed-ups (specific amounts depend on the underlying system, and examples are provided in Section~\ref{sec:numerical}) compared to the uncoded case when all workers are waited for in each iteration, to achieve the same test error.

\paragraph{Related work.}
The approaches to mitigating the effect of stragglers can be broadly classified into three categories: replication-based techniques, asynchronous optimization, and coding-based techniques.

Replication-based techniques consist of either re-launching a certain task if it is delayed, or pre-emptively assigning each task to multiple nodes and moving on with the copy that completes first. Such techniques have been proposed and analyzed in \cite{GardnerZbarsky_15, AnanthanarayananGhodsi_13, ShahLee_16, WangJoshi_15, YadwadkarHariharan_16}, among others. Our framework does not preclude the use of such system-level strategies, which can still be built on top of our encoded framework to add another layer of robustness against stragglers. However, it is not possible to achieve the worst-case guarantees provided by encoding with such schemes, since it is still possible for both replicas to be delayed.

Perhaps the most popular approach in distributed learning to address the straggler problem is asynchronous optimization, where each worker node asynchronously pushes updates to and fetches iterates from a parameter server independently of other workers, hence the stragglers do not hold up the entire computation. This approach was studied in \cite{RechtRe_11, AgarwalDuchi_11, DeanCorrado_12, LiAndersen_14} (among many others) for the case of data parallelism, and \cite{LiuWright_15, YouLian_16, PengXu_16, SunHannah_17} for coordinate descent methods (model parallelism). Although this approach has been largely successful, all asynchronous convergence results depend on either a hard bound on the allowable delays on the updates, or a bound on the moments of the delay distribution, and the resulting convergence rates explicitly depend on such bounds. In contrast, our framework allows for completely unbounded delays. Further, as in the case of replication, one can still consider asynchronous strategies on top of the encoding, although we do not focus on such techniques within the scope of this paper.

A more recent line of work that address the straggler problem is based on coding-theory-inspired techniques \cite{TandonLei_17, LeeLam_16, DuttaCadambe_16, KarakusSun_17, KarakusSun_17_2, YangGrover_17, HalbawiAzizanRuhi_17, ReisizadehPrakash_17}. Some of these works focus exclusively on coding for distributed linear operations, which are considerably simpler to handle. The works in \cite{TandonLei_17, HalbawiAzizanRuhi_17} propose coding techniques for distributed gradient descent that can be applied more generally. However, the approach proposed in these works require a redundancy factor of $r+1$ in the code, to mitigate $r$ stragglers. Our approach relaxes the exact gradient recovery requirement of these works, consequently reducing the amount of redundancy required by the code.

The proposed technique, especially under data parallelism, is also closely related to randomized linear algebra and sketching techniques in \cite{Mahoney_11, DrineasMahoney_11, PilanciWainwright_15}, used for dimensionality reduction of large convex optimization problems. The main difference between this literature and the proposed coding technique is that the former focuses on reducing the problem dimensions to lighten the computational load, whereas encoding \emph{increases} the dimensionality of the problem to provide robustness. As a result of the increased dimensions, coding can provide a much closer approximation to the original solution compared to sketching techniques. In addition, unlike these works, our model allows for an arbitrary convex regularizer in addition to the encoded loss term.

\section{Encoded Distributed Optimization}\label{sec:encoded}
We will use the notation $[j] = \lbp i \in \mathbb{Z}: 1\leq i \leq j\rbp$. All vector norms refer to 2-norm, and all matrix norms refer to spectral norm, unless otherwise noted. The superscript $^c$ will refer to complement of a subset, \emph{i.e.}, for $A \subseteq [m]$, $A^c = [m] \backslash A$. For a sequence of matrices $\lbp M_i \rbp$ and a set $A$ of indices, we will denote $\lb M_i\rb_{i \in A}$ to mean the matrix formed by stacking the matrices $M_i$ vertically. The main notation used throughout the paper is provided in Table~\ref{tb:notation}.

We consider a distributed computing network where the dataset $\lbp \lp x_i, y_i\rp \rbp_{i=1}^n$ is stored across a set of $m$ worker nodes, which directly communicate with a single master node. In practice the master node can be implemented using a fully-connected set of nodes, but this can still be abstracted as a single master node.

It is useful to distinguish between two paradigms of distributed learning and optimization; namely, data parallelism, where the dataset is partitioned across data samples, and model parallelism, where it is partitioned across features (see Figures~\ref{fig:data_dist} and \ref{fig:model_dist}). We will describe these two models in detail next.

\begin{table} \small
\centering
\begin{tabular}{|c|l|}
\hline
{\bf Notation} & {\bf Explanation} \\
\hline
$[j]$ & The set $\lbp i \in \mathbb{Z}: 1\leq i \leq j\rbp$ \\
$m$ & Number of worker nodes \\
$n, p$ & The dimensions of the data matrix $X \in \mathbb{R}^{n \times p}$, vector $y \in \mathbb{R}^{n \times 1}$ \\
$k_t$ & Number of updates the master node waits for in iteration $t$, before moving on \\
$\eta_t$ & Fraction of nodes waited for in iteration, \emph{i.e.}, $\eta_t = \frac{k_t}{m}$ \\
$A_t$ & The subset of nodes $[m]$ which send the fastest $k_t$ updates at iteration $t$ \\
$f(w), \wtild f(w)$ & The original and encoded objectives, respectively, under data parallelism \\
$g(w) = \phi(Xw)$ & The original objective under model parallelism \\
$\wtild g(v) = \phi(XS^\top v)$ & The encoded objective under model parallelism \\
$h(w)$ & Regularization function (potentially non-smooth) \\
$\nu$ & Strong convexity parameter \\
$L$ & Smoothness parameter for $h(w)$ (if smooth), and $g(w)$ \\
$\lambda$ & Regularization parameter \\
$\Psi_t$ & Mapping from gradient updates to step $\lbp \nabla f_i (t)\rbp_{i \in A_t} \mapsto d_t$\\
$d_t$ & Descent direction chosen by the algorithm \\
$\alpha_t$, $\alpha$ & Step size \\
$M, \mu$ & Largest and smallest eigenvalues of $X^\top X$, respectively \\
$\beta$ & Redundancy factor ($\beta \geq 1$) \\
$S$ & Encoding matrix with dimensions $\beta n \times n$ \\
$S_i$ & $i$th row-block of $S$, corresponding to worker $i$ \\
$S_A$ & Submatrix of $S$ formed by $\lbp S_i \rbp_{i \in A \subseteq [m]}$ stacked vertically \\
\hline
\end{tabular}
\caption{Notation used in the paper.}
\label{tb:notation}
\end{table}

\begin{figure}
\centering
\begin{minipage}{.46\textwidth}
  \centering
  \includegraphics[scale=0.65]{./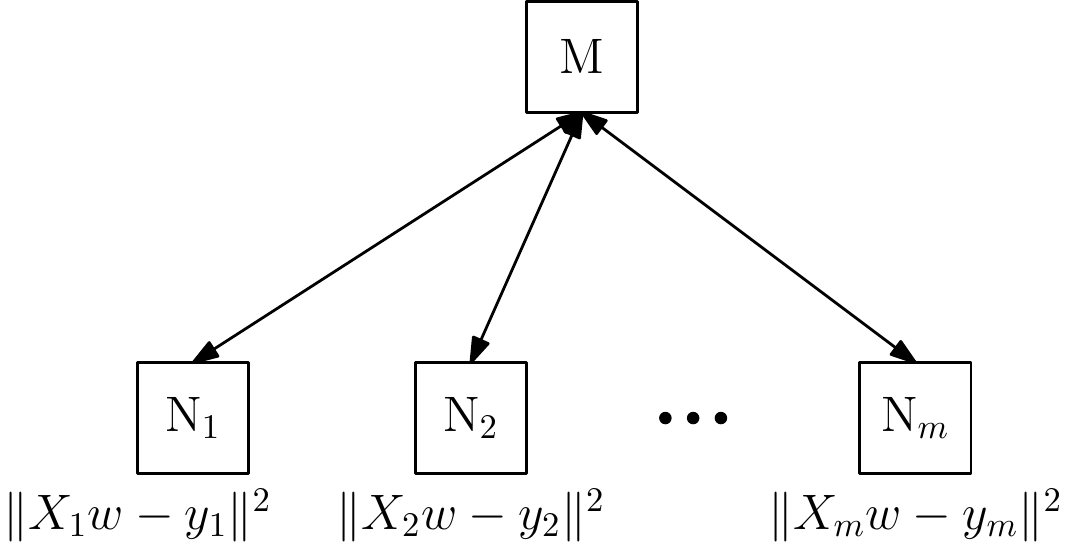}
  \caption{Uncoded distributed optimization with data parallelism, where $X$ and $y$ are partitioned as $X = \lb X_i\rb_{i \in [m]}$ and $y = \lb y_i\rb_{i \in [m]}$.}
  \label{fig:data_dist}
\end{minipage}\hfill
\begin{minipage}{.46\textwidth}
  \centering
  \includegraphics[scale=0.62]{./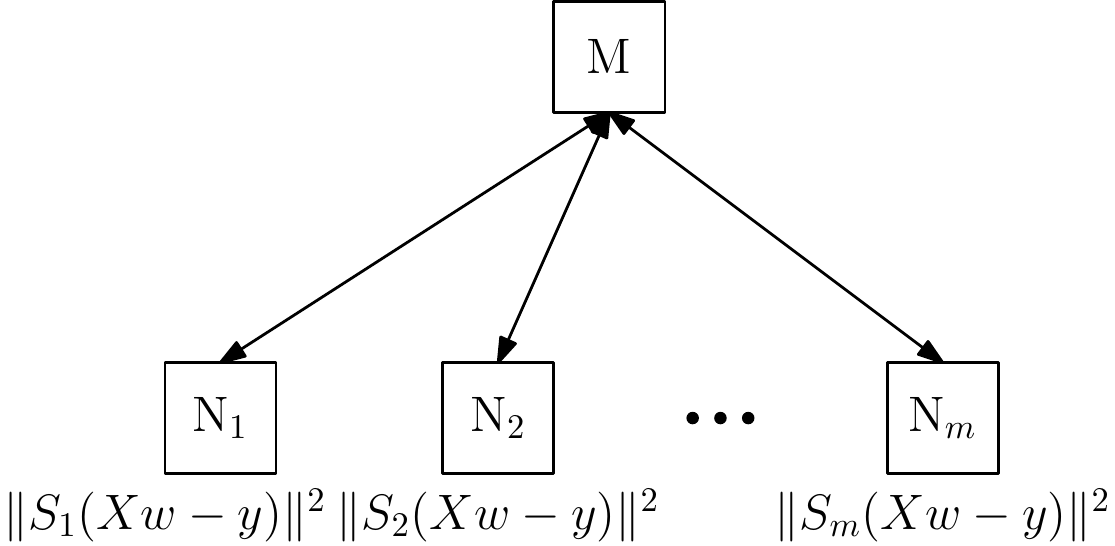}
  \caption{Encoded setup with data parallelism, where node $i$ stores $\lp S_i X, S_i y\rp$, instead of $\lp X_i, y_i\rp$. The uncoded case corresponds to $S=I$.}
  \label{fig:data_dist_enc}
\end{minipage}
\end{figure}

\subsection{Data parallelism}
We focus on objectives of the form
\begin{align}
f(w) = \frac{1}{2n} \| Xw - y\|^2 + \lambda h(w), \label{eq:original_data}
\end{align}
where $X$ and $y$ are the data matrix and data vector, respectively. We assume each row of $X$ corresponds to a data sample, and the data samples and response variables can be horizontally partitioned as $X = \lb X_1^\top \; X_2^\top \; \cdots \; X_m^\top \rb^\top$ and $y = \lb y_1^\top \; y_2^\top \; \cdots \; y_m^\top \rb^\top$. In the uncoded setting, machine $i$ stores the row-block $X_i$ (Figure~\ref{fig:data_dist}). We denote the largest and smallest eigenvalues of $X^\top X$ with $M > 0$, and $\mu \geq 0$, respectively. We assume $\lambda \geq 0$, and $h(w) \geq 0$ is a convex, extended real-valued function of $w$ that does not depend on data. Since $h(w)$ can take the value $h(w) = \infty$, this model covers arbitrary convex constraints on the optimization.

The encoding consists of solving the proxy problem
\begin{align}
\wtild f(w) = \frac{1}{2n} \| S \lp Xw - y \rp\|^2 + \lambda h(w) = \frac{1}{2n} \sum_{i=1}^m \underbrace{\| S_i \lp Xw - y \rp\|^2}_{f_i (w)} + \lambda h(w) , \label{eq:encoded_data}
\end{align}
instead, where $S \in \mathbb{R}^{\beta n \times n}$ is a designed encoding matrix with redundancy factor $\beta \geq 1$, partitioned as $S = \lb S_1^\top \; S_2^\top \; \cdots \; S_m^\top \rb^\top$ across $m$ machines. Based on this partition, worker node $i$ stores $\lp S_i X, S_i y\rp$, and operates to solve the problem \eqref{eq:encoded_data} in place of \eqref{eq:original_data} (Figure~\ref{fig:data_dist_enc}). We will denote $\hat w \in \argmin \wtild f(w)$, and $w^* \in \argmin f(w)$.

In general, the regularizer $h(w)$ can be non-smooth. We will say that $h(w)$ is $L$-smooth if $\nabla h(w)$ exists everywhere and satisfies
\begin{align*}
h(w') \leq h(w) + \langle \nabla h(w), w' - w\rangle + \frac{L}{2} \| w' - w\|^2
\end{align*}
for some $L>0$, for all $w,w'$. The objective $f$ is $\nu$-strongly convex if, for all $x,y$,
\begin{align*}
f(y) \geq f(x) + \left\langle \nabla f(x), y-x\right\rangle + \frac{\nu}{2}\| x-y\|^2.
\end{align*}

Once the encoding is done and appropriate data is stored in the nodes, the optimization process works in iterations. At iteration $t$, the master node broadcasts the current iterate $w_t$ to the worker nodes, and wait for $k_t$ gradient updates $\nabla f_i (w)$ to arrive, corresponding to that iteration, and then chooses a step direction $d_t$ and a step size $\alpha_t$ (based on algorithm $\Psi_t$ that maps the set of gradients updates to a step) to update the parameters. We will denote $\eta_t = \frac{k_t}{m}$. We will also drop the time dependence of $k$ and $\eta$ whenever it is kept constant.

The set of fastest $k_t$ nodes to send gradients for iteration $t$ will be denoted as $A_t$. Once $k_t$ updates have been collected, the remaining nodes, denoted $A_t^c$, are interrupted by the master node\footnote{If the communication is already in progress at the time when $k_t$ faster gradient updates arrive, the communication can be finished without interruption, and the late update can be dropped upon arrival. Otherwise, such interruption can be implemented by having the master node send an interrupt signal, and having one thread at each worker node keep listening for such a signal.}. Algorithms~\ref{alg:opt_data_master} and \ref{alg:opt_data_worker} describe the generic mechanism of the proposed distributed optimization scheme at the master node and a generic worker node, respectively.

The intuition behind the encoding idea is that waiting for only $k_t < m$ workers prevents the stragglers from holding up the computation, while the redundancy provided by using a tall matrix $S$ compensates for the information lost by proceeding without the updates from stragglers (the nodes in the subset $A_t^c$).

We next describe the three specific algorithms that we consider under data parallelism, to compute $d_t$.

\paragraph{Gradient descent.} In this case, we assume that $h(w)$ is $L$-smooth. Then we simply set the descent direction
\begin{align*}
	d_t = - \lp \frac{1}{2n \eta}\sum_{i \in A_t} \nabla f_i (w_t) + \lambda \nabla h(w_t) \rp.
\end{align*} 
We keep $k_t=k$ constant, chosen based on the number of stragglers in the network, or based on the desired operating regime.

\paragraph{Limited-memory-BFGS.} We assume that $h(w) = \| w\|^2$, and assume $\mu + \lambda > 0$. Although L-BFGS is traditionally a batch method, requiring updates from all nodes, its stochastic variants have also been proposed by \cite{MokhtariRibeiro_15, BerahasNocedal_16}. The key modification to ensure convergence in this case is that the Hessian estimate must be computed via  gradient components that are common in two consecutive iterations, \emph{i.e.}, from the nodes in $A_t \cap A_{t-1}$. We adapt this technique to our scenario.
For $t>0$, define $u_t := w_t - w_{t-1}$, and
\begin{align*}
r_t &: = \frac{m}{2 n\left| A_t \cap A_{t-1}\right|}\sum_{i \in A_t \cap A_{t-1}} \lp \nabla f_i(w_t) - \nabla f_i(w_{t-1})\rp. 
\end{align*}
Then once the gradient terms $\lbp \nabla f_i(w_t)\rbp_{i \in A_t}$ are collected, the descent direction is computed by $d_t = -B_t \wtild g_t$,
where $\wtild g_t = \frac{1}{2\eta n}\sum_{i \in A_t} \nabla f_i(w_t)$, and $B_t$ is the inverse Hessian estimate for iteration $t$, which is computed by
\begin{align*}
B_t^{(\ell+1)} = V_{j_{\ell,t}}^\top B_t^{(\ell)} V_{j_{\ell,t}} + \rho_{j_{\ell,t}} u_{j_{\ell, t}}u_{j_{\ell,t}}^\top, \;\;\; \rho_j = \frac{1}{r_j^\top u_j}, \;\;\; V_j = I - \rho_j r_j u_j^\top
\end{align*}
with $j_{\ell,t}= t-\wtild \sigma + \ell$, $B_t^{(0)} = \frac{r_t^\top r_t}{r_t^\top u_t} I$, and $B_t := B_t^{(\wtild \sigma)}$ with $\wtild \sigma := \min\lbp t, \sigma\rbp$, where $\sigma$ is the L-BFGS memory length. Once the descent direction $d_t$ is computed, the step size is determined through exact line search\footnote{Note that exact line search is not more expensive than backtracking line search for a quadratic loss, since it only requires a single matrix-vector multiplication.}. To do this, each worker node computes $S_i X d_t$, and sends it to the master node. Once again, the master node only waits for the fastest $k_t$ nodes, denoted by $D_t \subseteq [m]$ (where in general $D_t \neq A_t$), to compute the step size that minimizes the function along $d_t$, given by
\begin{align}
    \alpha_t = -\rho\frac{d_t^\top \wtild g_t}{d_t^\top \wtild X^\top_D \wtild X_D d_t} \label{eq:exact_ls},
\end{align}
where $\wtild X_D = \lb S_i X\rb_{i \in D_t}$, and $0<\rho<1$ is a back-off factor of choice.

%We choose $k_t = \min \lbp k: \left| A_t (k) \cap A_{t-1} \right| > \frac{m}{\beta} \rbp$. In practice, the algorithm may perform well even when this condition is not satisfied, as we will demonstrate in Section~\ref{sec:numerical}.

\paragraph{Proximal gradient. } Here, we consider the general case of non-smooth $h(w) \geq 0, \lambda \geq 0$. The descent direction $d_t$ is given by
\begin{align*}
d_t = \argmin_w \wtild F_t (w) - w_t,
\end{align*}
where 
\begin{align*}
\wtild F_t(w) &:= \frac{1}{2\eta n}\sum_{i \in A_t}  f_i (w_t) + \left\langle \frac{1}{2\eta n}\sum_{i \in A_t} \nabla f_i (w_t), w - w_t \right\rangle + \lambda h(w) + \frac{1}{2\alpha} \| w - w_t\|^2.
\end{align*}
We keep the step size $\alpha_t = \alpha$ and $k_t = k$ constant.

\begin{algorithm}
\scriptsize
\begin{algorithmic}[1]
\STATE Given: $\Psi_t$, a sequence of functions that map gradients $\lbp \nabla f_i (w_t) \rbp_{i \in A_t}$ to a descent direction $d_t$
\STATE Initialize $w_0$, $\alpha_0$
\FOR{$t = 1,\dots,T$}
	\STATE broadcast $w_t$ to all worker nodes
	\STATE wait to receive $k_t$ gradient updates $\lbp \nabla f_i (w_t) \rbp_{i \in A_t}$
	\STATE send interrupt signal the nodes in $A_t^c$
	\STATE compute the descent direction $d_t = \Psi_t \lp \lbp\nabla f_i \lp w_t\rp \rbp_{i \in A_t}\rp$
	\STATE determine step size $\alpha_t$
	\STATE take the step $w_{t+1} = w_t + \alpha_t d_t$
\ENDFOR
\end{algorithmic}
\caption{Generic encoded distributed optimization procedure under data parallelism, at the master node.}
\label{alg:opt_data_master}
\end{algorithm}

\begin{algorithm}
\scriptsize
\begin{algorithmic}[1]
\STATE Given: $f_i(w) = \| S_i (Xw - y)\|^2$
\FOR{$t = 1,\dots,T$}
	\STATE wait to receive $w_t$ \label{line:worker_standby}
	\WHILE{not interrupted by master}
		\STATE compute $\nabla f_i (w_t)$
	\ENDWHILE
	\IF{computation was interrupted}
		\STATE continue
	\ELSE
		\STATE send $\nabla f_i (w_t)$
	\ENDIF
\ENDFOR
\end{algorithmic}
\caption{Generic encoded distributed optimization procedure under data parallelism, at worker node $i$.}
\label{alg:opt_data_worker}
\end{algorithm}

\subsection{Model parallelism}
Under the model parallelism paradigm, we focus on objectives of the form
\begin{align}
\min g(w) := \min_w \phi\lp Xw\rp = \min_w \phi\lp \sum_{i=1}^m X_i w_i\rp, \label{eq:original_model}
\end{align}
where the data matrix is partitioned as $X = \lb X_1 \; X_2 \; \cdots \; X_m\rb$, the parameter vector is partitioned as $w = \lb w_1^\top \; w_2^\top \; \cdots \; w_m^\top\rb^\top$, $\phi$ is convex, and $g(w)$ is $L$-smooth. Note that the data matrix $X$ is partitioned horizontally, meaning that the dataset is split across features, instead of data samples (see Figure~\ref{fig:model_dist}). Common machine learning models, such as any regression problem with generalized linear models, support vector machine, and many other convex problems fit within this model.

We encode the problem \eqref{eq:original_model} by setting $w = S^\top v$, and solving the problem
\begin{align}
\min_v \wtild g (v) := \phi\lp XS^\top v\rp = \min_v \phi\lp \sum_{i=1}^m X S^\top_i v_i\rp, \label{eq:encoded_model}
\end{align}
where $w \in \mathbb{R}^p$ and $S^\top = \lb S_1^\top \; S_2^\top \; \cdots \; S_m^\top\rb \in \mathbb{R}^{p \times \beta p}$ (see Figure~\ref{fig:model_dist_enc}). As a result, worker $i$ stores the column-block $XS_i^\top$, as well as the iterate partition $v_i$. Note that we increase the dimensions of the parameter vector by multiplying the dataset $X$ with a wide encoding matrix $S^\top$ from the right, and as a result we have redundant coordinates in the system. As in the case of data parallelism, such redundant coordinates provide robustness against erasures arising due to stragglers. Such increase in coordinates means that the problem is simply lifted onto a larger dimensional space, while preserving the original geometry of the problem. We will denote $u_{i,t} = XS^\top_i v_{i,t}$, where $v_{i,t}$ is the parameter iterates of worker $i$ at iteration $t$. In order to compute updates to its parameters $v_i$, worker $i$ needs the up-to-date value of $\wtild z_i := \sum_{j \neq i} u_j$, which is provided by the master node at every iteration.

\begin{figure}
\centering
\begin{minipage}{.46\textwidth}
  \centering
  \includegraphics[scale=0.65]{./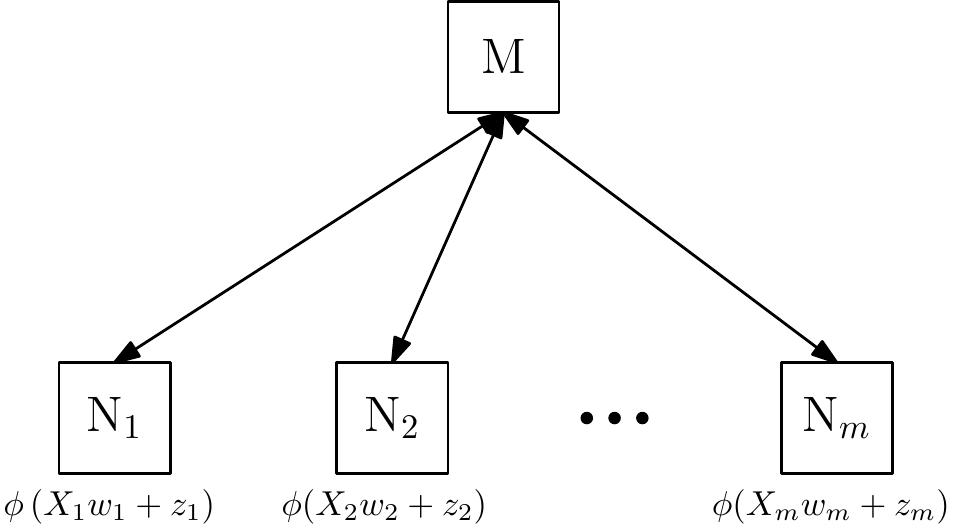}
  \caption{Uncoded distributed optimization with model parallelism, where $i$th node stores the $i$th partition of the model $w_i$. For $i=1,\dots,m$, $z_i = \sum_{j\neq i} X_j w_j$.}
  \label{fig:model_dist}
\end{minipage}\hfill
\begin{minipage}{.46\textwidth}
  \centering
  \includegraphics[scale=0.65]{./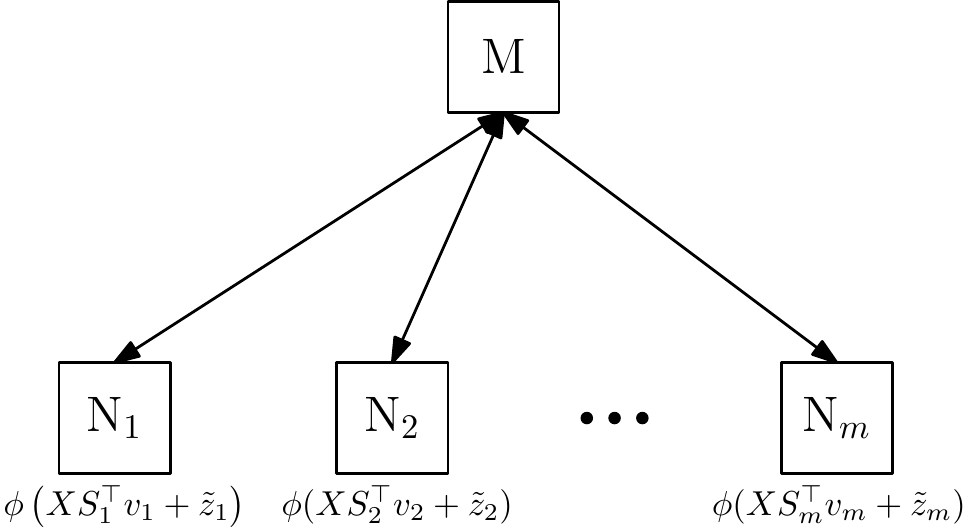}
  \caption{Encoded setup with model parallelism, where $i$th node stores the partition $v_i$ of the model in the ``lifted'' space. For $i=1,\dots,m$, $\wtild z_i = \sum_{j\neq i}  u_j = \sum_{j\neq i} X S^\top_j v_j$.}
  \label{fig:model_dist_enc}
\end{minipage}
\end{figure}

Let $\mathcal{S} = \argmin_w g(w)$, and given $w$, let $w^*$ be the projection of $w$ onto $\mathcal{S}$. We will say that $g(w)$ satisfies $\nu$-restricted-strong convexity (\cite{LaiYin_13}) if
\begin{align*}
\langle \nabla g(w), w - w^*\rangle \geq \nu \| w - w^*\|^2
\end{align*}
for all $w$. Note that this is weaker than (implied by) strong convexity since $w^*$ is restricted to be the projection of $w$, but unlike strong convexity, it is satisfied under the case where $\phi$ is strongly convex, but $X$ has a non-trivial null space, \emph{e.g.}, when it has more columns than rows.

For a given $w \in \mathbb{R}^p$, we define the level set of $g$ at $w$ as $D_g(w) := \lbp w': g(w') \leq g(w)\rbp$. We will say that the level set at $w_0$ has diameter $R$ if
\begin{align*}
\sup \lbp \| w - w'\|: w, w' \in D_g(w_0)\rbp \leq R.
\end{align*}

As in the case of data parallelism, we assume that the master node waits for $k$ updates at every iteration, and then moves onto the next iteration (see Algorithms~\ref{alg:opt_model_worker} and \ref{alg:opt_model_master}). We similarly define $A_t$ as the set of $k$ fastest nodes in iteration $t$, and also define
\begin{align*}
I_{i,t} = \lbp
\begin{array}{ll}
1 & i \in A_t  \\
0 & i \notin A_t.
\end{array}
\right.
\end{align*}

\begin{algorithm}
\scriptsize
\begin{algorithmic}[1]
\STATE Given: $X_i$, $v_i$.
\FOR{$t = 1,\dots,T$}
	\STATE wait to receive $\lp I_{i,t-1}, \wtild z_{i,t}\rp$ \label{line:worker_standby}
	\IF{ $I_{i,t} == 1$} \label{line:check_begin}
		\STATE take step $v_{i, t} = v_{i,t-1} + d_{i, t-1}$
	\ELSE
		\STATE set $v_{i, t} = v_{i,t-1}$
	\ENDIF\label{line:check_end}
	\WHILE{not interrupted by master} 
		\STATE compute next step $d_{i,t} = \alpha S_i X^\top\nabla \phi \lp XS^\top_i v_{i,t} + \wtild z_{i,t}\rp$
		\STATE compute $u_{i,t} = XS^\top_i v_{i,t}$
	\ENDWHILE
	\IF{computation was interrupted} 
		\STATE continue
	\ELSE
		\STATE send $u_{i,t}$ to master node
	\ENDIF
\ENDFOR
\end{algorithmic}
\caption{Encoded block coordinate descent at worker node $i$.}
\label{alg:opt_model_worker}
\end{algorithm}

\begin{algorithm}
\scriptsize
\begin{algorithmic}[1]
\FOR{$t = 1,\dots,T$}
	\FOR{$i = 1,\dots,m$}
		\STATE send $\lp I_{i, t-1}, \wtild z_{i,t}\rp$ to worker $i$
	\ENDFOR
	\STATE wait to receive $k$ updated parameters $\lbp u_{i,t} \rbp_{i \in A_t}$
	\STATE send interrupt signal the nodes in $A_t^c$
	\STATE set $u_{i,t} = u_{i,t-1}$ for $i \in A_t^c$
	\STATE compute $\wtild z_{i,t} = \sum_{j \neq i} u_{j,t}$ for all $i$
\ENDFOR
\end{algorithmic}
\caption{Encoded block coordinate descent at the master node.}
\label{alg:opt_model_master}
\end{algorithm}

Under model parallelism, we consider block coordinate descent, described in Algorithm~\ref{alg:opt_model_worker}, where worker $i$ stores the current values of the partition $v_i$, and performs updates on it, given the latest values of the rest of the parameters. The parameter estimate at time $t$ is denoted by $v_{i,t}$, and we also define $\wtild z_{i,t} = \sum_{j \neq i} u_{i,t} = \sum_{j \neq i} XS^\top_jv_j$. The iterates are updated by
\begin{align*}
v_{i,t} - v_{i,t-1} = \Delta_{i,t} := \lbp 
\begin{array}{cl}
-\alpha\nabla_i  \wtild g(v_{t-1}), & \text{if $i \in A_t$} \\
0, & \text{otherwise,} \\
\end{array}
\right.
\end{align*}
for a step size parameter $\alpha > 0$, where $\nabla_i$ refers to gradient only with respect to the variables $v_i$, \emph{i.e.}, $\nabla \wtild g = \lb \nabla_i \wtild g\rb_{i \in [m]}$. Note that if $i \notin A_t$ then $v_i$ does not get updated in worker $i$, which ensures the consistency of parameter values across machines. This is achieved by lines \ref{line:check_begin}--\ref{line:check_end} in Algorithm~\ref{alg:opt_model_worker}. Worker $i$ learns about this in the next iteration, when $I_{i, t-1}$ is sent by the master node. 
 
%We can express this update more compactly in the following forms:
%\begin{align*}
%\Delta_t = -\frac{\alpha}{L'} \nabla \wtild f_t(\wtild v^t) = -\frac{\alpha}{L'}P\lb
%\begin{array}{c}
%\wtild S_t^\top \nabla f(Sv^t) \\
%0
%\end{array}
%\rb,
%\end{align*}
%where $P$ is a permutation matrix that maps the indices $\lbp 1,\dots,\left| A_t\right|\rbp$ onto $A_t$.

\section{Main Theoretical Results: Convergence Analysis}\label{sec:convergence}
% condition
In this section, we prove convergence results for the algorithms described in Section~\ref{sec:encoded}. Note that since we modify the original optimization problem and solve it obliviously to this change, it is not obvious that the solution has any optimality guarantees with respect to the original problem. We show that, it is indeed possible to provide convergence guarantees in terms of the \emph{original} objective under the encoded setup.

\subsection{A spectral condition}
In order to show convergence under the proposed framework, we require the encoding matrix $S$ to satisfy a certain spectral criterion on $S$. Let $S_A$ denote the submatrix of $S$ associated with the subset of machines $A$, \emph{i.e.}, $S_A = \lb S_i\rb_{i \in A}$. Then the criterion in essence requires that for any sufficiently large subset $A$, $S_A$ behaves approximately like a matrix with orthogonal columns. We make this precise in the following statement.

\begin{definition}
Let $\beta \geq 1$, and $\frac{1}{\beta} \leq \eta \leq 1$ be given. A matrix $S \in \mathbb{R}^{\beta n \times n}$ is said to satisfy the $(m, \eta, \epsilon)$-block-restricted isometry property ($(m, \eta, \epsilon)$-BRIP) if for any $A \subseteq [m]$ with $|A| = \eta m$, 
\begin{align}
(1-\epsilon) I_n \preceq \frac{1}{\eta}S_A^\top S_A \preceq (1+\epsilon) I_n. \label{eq:rip}
\end{align}
\end{definition}
Note that this is similar to the restricted isometry property used in compressed sensing (\cite{CandesTao_05}), except that we do not require \eqref{eq:rip} to hold for every submatrix of $S$ of size $\mathbb{R}^{\eta n \times n}$. Instead, \eqref{eq:rip} needs to hold only for the submatrices of the form $S_A = \lb S_i\rb_{i \in A}$, which is a less restrictive condition. In general, it is known to be difficult to analytically prove that a structured, deterministic matrix satisfies the general RIP condition. Such difficulty extends to the BRIP condition as well. However, it is known that i.i.d. sub-Gaussian ensembles and randomized Fourier ensembles satisfy this property (\cite{CandesTao_06}). In addition, numerical evidence suggests that there are several families of constructions for $S$ whose submatrices have eigenvalues that mostly tend to concentrate around 1. We point out that although the strict BRIP condition is required for the theoretical analysis, in practice the algorithms perform well as long as the bulk of the eigenvalues of $S_A$ lie within a small interval $(1-\epsilon, 1+\epsilon)$, even though the extreme eigenvalues may lie outside of it (in the non-adversarial setting). In Section~\ref{sec:implementation}, we explore several classes of matrices and discuss their relation to this condition.

\subsection{Convergence of encoded gradient descent}
We first consider the algorithms described under data parallelism architecture. The following theorem summarizes our results on the convergence of gradient descent for the encoded problem.
\begin{theorem}\label{th:gd}
Let $w_t$ be computed using encoded gradient descent with an encoding matrix that satisfies $(m, \eta, \epsilon)$-BRIP, with step size $\alpha_t = \frac{2\zeta}{M(1+\epsilon)+L}$ for some $0 < \zeta \leq 1$, for all $t$. Let $\{ A_t\}$ be an arbitrary sequence of subsets of $[m]$ with cardinality $\left| A_t \right|\geq \eta m$ for all $t$. Then, for $f$ as given in \eqref{eq:original_data},
\begin{enumerate}
\item 
\begin{align*}
\frac{1}{t}\sum_{\tau=1}^t f(w_\tau) - \kappa_1 f(w^*) \leq \frac{ 4\epsilon f(w_0) + \frac{1}{2\alpha} \| w_{0} - w^* \|^2}{\lp 1-7\epsilon \rp t}
\end{align*}
\item If $f$ is in addition $\nu$-strongly convex, then
\begin{align*}
f(w_t) - \frac{\kappa_2^2(\kappa_2-\gamma)}{1-\kappa_2\gamma}f\lp w^*\rp \leq \lp \kappa_2 \gamma\rp^t f(w_0), \quad t=1,2,\ldots,
\end{align*}
\end{enumerate}
where $\kappa_1 = \frac{1+3\epsilon}{1-7\epsilon}$, 
$\kappa_2 = \frac{1+\epsilon}{1-\epsilon}$, and $\gamma = \lp 1 - \frac{4\nu\zeta(1-\zeta)}{M\lp 1+\epsilon\rp + L}\rp$, where $\epsilon$ is assumed to be small enough so that $\kappa_2 \gamma < 1$. 
\end{theorem}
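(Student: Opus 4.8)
The plan is to reinterpret each iteration of encoded gradient descent as one step of \emph{exact} (full) gradient descent applied to a surrogate objective that varies with the iteration, and then to use the BRIP condition to control how far each surrogate deviates from the true objective $f$. Writing $\nabla f_i(w) = 2X^\top S_i^\top S_i(Xw-y)$, the prescribed direction is exactly $d_t = -\nabla \wtild f_t(w_t)$ with
$\wtild f_t(w) := \frac{1}{2\eta n}\sum_{i\in A_t} f_i(w) + \lambda h(w) = \frac{1}{2n}(Xw-y)^\top P_t (Xw-y) + \lambda h(w)$, where $P_t := \frac{1}{\eta}S_{A_t}^\top S_{A_t}$. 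Since $|A_t|\ge \eta m$, BRIP gives $(1-\epsilon)I_n \preceq P_t \preceq (1+\epsilon)I_n$ for every $t$, \emph{uniformly} over the arbitrary (possibly adversarial) sequence $\{A_t\}$. Two consequences drive everything: (i) the pointwise sandwich $(1-\epsilon)f(w) \le \wtild f_t(w) \le (1+\epsilon)f(w)$ for all $w$ and $t$, using $\lambda h(w)\ge 0$; and (ii) $\wtild f_t$ is $\wtild L_t$-smooth with $\wtild L_t \le M(1+\epsilon)+L$ and, when $f$ is $\nu$-strongly convex, $(1-\epsilon)\nu$-strongly convex, since its Hessian $\tfrac1n X^\top P_t X + \lambda\nabla^2 h \succeq (1-\epsilon)\big(\tfrac1n X^\top X + \lambda\nabla^2 h\big)$. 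In particular the step $\alpha_t = \frac{2\zeta}{M(1+\epsilon)+L}$ satisfies $\alpha_t \le 2\zeta/\wtild L_t$ for \emph{every} admissible $A_t$, which is precisely what lets one bound hold for all delay patterns.

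For Part 1 I would run the standard one-step analysis of gradient descent for a smooth convex function on $\wtild f_t$: combining convexity, $\langle \nabla\wtild f_t(w_t), w_t-w^*\rangle \ge \wtild f_t(w_t) - \wtild f_t(w^*)$, the expansion of $\|w_{t+1}-w^*\|^2$, and the sufficient-decrease inequality to dominate the $\|\nabla\wtild f_t(w_t)\|^2$ term, giving a bound of the shape $\wtild f_t(w_t) - \wtild f_t(w^*) \le \frac{1}{2\alpha}\big(\|w_t-w^*\|^2 - \|w_{t+1}-w^*\|^2\big) + (\text{decrease terms})$. Summing over $\tau = 1,\dots,t$ telescopes the $\|w_\tau - w^*\|^2$ terms down to $\|w_0-w^*\|^2$. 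The remaining work is to translate back to $f$: replace $\wtild f_\tau(w_\tau)$ and $\wtild f_\tau(w^*)$ via the sandwich, and absorb the fact that the surrogate \emph{changes} each step through the swap bound $\wtild f_{\tau+1}(w) \le \frac{1+\epsilon}{1-\epsilon}\wtild f_\tau(w)$; the telescoped decrease terms then contribute an $O(\epsilon)$ multiple of $f(w_0)$ (the $4\epsilon f(w_0)$), and the two-sided $\epsilon$-distortions combine into $\kappa_1 = \frac{1+3\epsilon}{1-7\epsilon}$ after dividing by $t$.

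For Part 2, strong convexity of $\wtild f_t$ upgrades the one-step inequality to a \emph{contraction} in function value: GD with $\alpha \le 2\zeta/\wtild L_t$ on an $\wtild L_t$-smooth, $(1-\epsilon)\nu$-strongly-convex function gives $\wtild f_t(w_{t+1}) - \wtild f_t(w^*) \le \gamma\big(\wtild f_t(w_t) - \wtild f_t(w^*)\big)$ with $\gamma = 1 - \frac{4\nu\zeta(1-\zeta)}{M(1+\epsilon)+L}$. Composing with the surrogate-swap inflation $\wtild f_{t+1}(w_{t+1}) \le \kappa_2\,\wtild f_t(w_{t+1})$, $\kappa_2 = \frac{1+\epsilon}{1-\epsilon}$, and $\wtild f_t(w^*) \le (1+\epsilon)f(w^*)$, yields an affine recursion $b_{t+1} \le \kappa_2\gamma\, b_t + c\, f(w^*)$ for $b_t := \wtild f_t(w_t)$; iterating it — here is where the hypothesis $\kappa_2\gamma<1$ enters — summing the geometric tail, and sandwiching $b_t$ back to $f(w_t)$ gives the $(\kappa_2\gamma)^t f(w_0)$ rate together with the additive $\frac{\kappa_2^2(\kappa_2-\gamma)}{1-\kappa_2\gamma}f(w^*)$.

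The main obstacle — and the only genuine departure from textbook GD — is that one never optimizes a fixed function: $A_t$ may be adversarial, so neither $f(w_t)$ nor any single surrogate is monotone along the trajectory. Everything hinges on the \emph{uniform} two-sided comparison $(1-\epsilon)f \le \wtild f_t \le (1+\epsilon)f$ supplied by BRIP and on carefully charging the bounded multiplicative distortion $\kappa_2$ at each surrogate swap; ensuring these $\epsilon$-order errors aggregate (rather than compound) in Part 1, and that $\kappa_2\gamma<1$ absorbs them in Part 2, is where the bookkeeping is delicate. A secondary, purely mechanical point is carrying the $\|\nabla\wtild f_t(w_t)\|^2$ terms through the regime $\alpha_t \le 2\zeta/\wtild L_t$ (with $\zeta$ possibly exceeding $1/2$) via the sufficient-decrease inequality.
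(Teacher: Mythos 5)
Your plan for Part 2 is essentially the paper's proof: a one-step descent/contraction inequality for the instantaneous surrogate $\wtild f_t$, composed with the two-sided sandwich $(1-\epsilon)f \leq \wtild f_t \leq (1+\epsilon)f$ to produce an affine recursion that is then iterated (this is exactly Lemma~\ref{lem:final_arg}). One genuine difference: by measuring the contraction relative to $\wtild f_t(w^*)$ rather than $\min_w \wtild f_t(w)$, you bypass the paper's Lemmas~\ref{lem:solution_ball} and \ref{lem:reg_solution_ball} (the bound $f(\wtild w_t^*) \leq \kappa_2^2 f(w^*)$ on the instantaneous minimizer, which requires a separate optimality-condition argument). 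That is a legitimate simplification — the contraction inequality survives replacing the minimum value by any larger reference value — but it costs you an extra factor of $\kappa_2$ on the transient term ($\kappa_2(\kappa_2\gamma)^t$ rather than $(\kappa_2\gamma)^t$), so as written it proves a slightly weaker statement than the one claimed; recovering the exact constants requires running the recursion on $f(w_t)$ directly, as the paper does.

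For Part 1 you depart from the paper, and this is where the real gap is. The paper does \emph{not} analyze gradient descent on the surrogate and convert at the end; it runs the standard estimate on the true smooth part $p(w)$ and bounds the mismatch term $\langle \nabla p(w_t) - \nabla \wtild p_t(w_t),\, w_{t+1}-w^*\rangle$ by a polarization-identity decomposition into four quadratic forms in $\Delta = I - S_A^\top S_A$, each controlled by $\epsilon$ times a function value; this is precisely where $4\epsilon f(w_t) + 3\epsilon f(w_{t+1}) + 3\epsilon f(w^*)$ and hence $\kappa_1 = \frac{1+3\epsilon}{1-7\epsilon}$ and the $(1-7\epsilon)$ denominator come from. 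Your route leaves two loose ends that you dismiss as bookkeeping but are actually where it can fail: (i) the ``decrease terms'' $\wtild f_t(w_t) - \wtild f_t(w_{t+1})$ do not telescope because the surrogate changes, producing cross terms $\wtild f_t(w_t) - \wtild f_{t-1}(w_t)$ of size $2\epsilon f(w_t)$ at \emph{every} iteration (a sum growing like $\epsilon \sum_\tau f(w_\tau)$ that must be reabsorbed into the left-hand side, not an ``$O(\epsilon)$ multiple of $f(w_0)$'' as you assert); and (ii) for $\zeta > \tfrac12$ the leftover $\tfrac{\alpha}{2}\|\nabla\wtild f_t(w_t)\|^2$ can only be charged against the sufficient-decrease inequality with a coefficient $\tfrac{1}{2(1-\zeta)}$ that degenerates as $\zeta \to 1$, so your constants would depend on $\zeta$ (and blow up at $\zeta = 1$), unlike the stated $\zeta$-free bound. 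To reproduce the theorem as stated you need the paper's mismatch bound (or an equivalent), not just the sandwich.
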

The proof is provided in Appendix~\ref{ap:gdlbfgs},
which relies on the fact that the solution to the effective ``instantaneous" problem corresponding to the subset $A_t$ lies in a bounded set $\{ w:f(w) \leq \kappa f(w^*) \}$ (where $\kappa$ depends on the encoding matrix and strong convexity assumption on $f$), and therefore each gradient descent step attracts the iterate towards a point in this set, which must eventually converge to this set. Theorem~\ref{th:gd} shows that encoded gradient descent can achieve the standard $O\lp \frac{1}{t}\rp$ convergence rate for the general case, and linear convergence rate for the strongly convex case, up to an approximate minimum.
For the convex case, the convergence is shown on the running mean of past function values,
whereas for the strongly convex case we can bound the function value at every step. Note that although the nodes actually minimize the encoded objective $\wtild f(w)$, the convergence guarantees are given in terms of the original objective $f(w)$. 

Theorem~\ref{th:gd} provides deterministic, sample path convergence guarantees under any (adversarial) sequence of active sets $\lbp A_t\rbp$, which is in contrast to the stochastic methods, which show convergence typically in expectation. Further, the convergence rate is not affected by the tail behavior of the delay distribution, since the delayed updates of stragglers are not applied to the iterates.

Note that since we do not seek exact solutions under data parallelism, we can keep the redundancy factor $\beta$ fixed regardless of the number of stragglers. Increasing number of stragglers in the network simply results in a looser approximation of the solution, allowing for a graceful degradation. This is in contrast to existing work \cite{TandonLei_17} seeking exact convergence under coding, which shows that the redundancy factor must grow linearly with the number of stragglers.

\subsection{Convergence of encoded L-BFGS}
We consider the variant of L-BFGS described in Section~\ref{sec:encoded}. For our convergence result for L-BFGS, we need another assumption on the matrix $S$, in addition to \eqref{eq:rip}. Defining $\breve S_t = \lb S_i\rb_{i \in A_t \cap A_{t-1}}$ for $t>0$, we assume that for some $\delta>0$, 
\begin{align}
    \delta I \preceq \breve S^\top_t \breve S_t \label{eq:overlap}
\end{align}
for all $t>0$. Note that this requires that one should wait for sufficiently many nodes to send updates so that the overlap set $A_t \cap A_{t_1}$ has more than $\frac{1}{\beta}$ nodes, and thus the matrix $\breve S_t$ can be full rank. When the columns of $X$ are linearly independent, this is satisfied if $\eta \geq \frac{1}{2} + \frac{1}{2\beta}$ in the worst-case, and in the case where node delays are i.i.d. across machines, it is satisfied in expectation if $\eta \geq \frac{1}{\sqrt{\beta}}$. One can also choose $k_t$ adaptively so that $k_t = \min \lbp k: \left| A_t(k) \cap A_{t-1}\right| > \frac{1}{\beta} \rbp$. We note that although this condition is required for the theoretical analysis, the algorithm may perform well in practice even when this condition is not satisfied.

We first show that this algorithm results in stable inverse Hessian estimates under the proposed model, under arbitrary realizations of $\lbp A_t\rbp$ (of sufficiently large cardinality), which is done in the following lemma.
\begin{lemma}\label{lem:hessian_stability}
Let $\mu + \lambda > 0$. Then there exist constants $c_1, c_2 >0$ such that for all $t$, the inverse Hessian estimate $B_t$ satisfies $c_1 I \preceq B_t\preceq c_2 I$.
\end{lemma}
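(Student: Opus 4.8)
The plan is to establish the standard L-BFGS bounds by adapting the classical argument (as in Liu--Nocedal) to the encoded setting, where the key quantities $r_t$ and $u_t$ now involve only the overlap block $\breve S_t$. The workhorse is the Dennis--Mor\'e / Powell trace-determinant bookkeeping on the $\widetilde\sigma$ rank-one updates that build $B_t$ from $B_t^{(0)}$. To run that argument we need two ingredients: (i) uniform upper and lower bounds on the ``curvature ratio'' quantities $\frac{r_t^\top u_t}{u_t^\top u_t}$ and $\frac{r_t^\top r_t}{r_t^\top u_t}$, and (ii) a uniform upper bound on $\|r_t\|/\|u_t\|$. Both follow from the quadratic structure of $f_i$ together with BRIP and the overlap condition \eqref{eq:overlap}.

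First I would make the curvature estimates explicit. Since each $f_i(w) = \|S_i(Xw-y)\|^2$ is quadratic, $\nabla f_i(w_t) - \nabla f_i(w_{t-1}) = 2 X^\top S_i^\top S_i X\, u_t$, so
\begin{align*}
r_t = \frac{m}{\left|A_t \cap A_{t-1}\right|} X^\top \breve S_t^\top \breve S_t X\, u_t = \frac{1}{\eta_{\mathrm{ov}}} X^\top \breve S_t^\top \breve S_t X\, u_t,
\end{align*}
where $\eta_{\mathrm{ov}} = |A_t \cap A_{t-1}|/m$. Because $h(w) = \|w\|^2$ here, the relevant Hessian in the L-BFGS memory is $H_t := \frac{1}{\eta_{\mathrm{ov}}}X^\top \breve S_t^\top \breve S_t X$ (the data term); under \eqref{eq:rip} and \eqref{eq:overlap} this matrix satisfies $\frac{\delta}{\eta_{\mathrm{ov}}}\mu' I \preceq H_t \preceq \frac{1}{\eta_{\mathrm{ov}}}\|\breve S_t\|^2 M I$ on the relevant subspace, and combined with $\mu + \lambda > 0$ the effective curvature seen by the secant pairs is bounded as $a I \preceq$ (secant operator) $\preceq b I$ for constants $0 < a \le b$ depending only on $\mu,\lambda,M,\delta,\eta,\beta$. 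From this, $u_t^\top r_t \ge a\|u_t\|^2 > 0$ (so $\rho_t > 0$ and the updates are well-defined and positive-definite), $\frac{r_t^\top u_t}{u_t^\top u_t} \in [a,b]$, and $\frac{r_t^\top r_t}{r_t^\top u_t} \in [a,b]$, which in particular pins the initialization $B_t^{(0)} = \frac{r_t^\top r_t}{r_t^\top u_t} I$ into $[a^{-1}, b^{-1}]$-type bounds (I will track which direction is the inverse Hessian).

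Next I would feed these into the trace/determinant recursion. Writing $B_t$ as $\widetilde\sigma \le \sigma$ BFGS updates applied to $B_t^{(0)}$, the trace of the inverse-Hessian estimate grows by at most an additive $\frac{\|u_j\|^2}{r_j^\top u_j} \le 1/a$ per update plus a controlled contraction term, giving $\mathrm{tr}(B_t) \le C_2$ uniformly; simultaneously, $\det(B_t) \ge \det(B_t^{(0)}) \prod_j \frac{r_j^\top u_j}{r_j^\top B_j^{(\cdot)} r_j} \ge C_1' > 0$, and together with the trace upper bound this forces the smallest eigenvalue of $B_t$ to stay above some $c_1 > 0$, while the trace bound gives $B_t \preceq c_2 I$. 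The constants depend only on $\sigma$ and on $a,b$, hence are uniform in $t$ and in the (adversarial) sequence $\{A_t\}$.

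The main obstacle is ingredient (i) in the degenerate directions: when $X$ has a nontrivial null space, $H_t$ is only positive \emph{semi}definite, so $r_t$ may have vanishing component relative to $u_t$ and $\rho_t$ could blow up. The resolution is that $h(w) = \|w\|^2$ contributes a $+2\lambda I$ to every $\nabla f_i$-difference's effective role via the full objective — but note $r_t$ as defined uses only the $f_i$ terms, not $\lambda h$. So I must check carefully whether the L-BFGS curvature pairs in this algorithm actually include the regularizer's curvature; if they do not, the lower bound $r_t^\top u_t \ge a\|u_t\|^2$ requires $\mu > 0$ rather than $\mu + \lambda > 0$, and I would need to either (a) restrict $u_t$ to the row space of $X$ using that the iterates stay in an affine subspace determined by the (full-rank-on-its-range) updates, or (b) reinterpret $r_t$ to include the $\lambda$-term. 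I expect option (b) — i.e., the intended $r_t$ secretly carries the $+ \lambda \nabla h$ difference, making $r_t^\top u_t \ge (\text{something})\,(\mu+\lambda)\|u_t\|^2$ — and that is the one delicate point I would pin down before the rest goes through mechanically.
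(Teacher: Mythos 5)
Your proposal follows essentially the same route as the paper's proof: the same secant identity $r_t = (\mathrm{const})\cdot X^\top \breve S_t^\top \breve S_t X u_t$ from the quadratic structure, the same curvature bounds ($r_t^\top u_t \geq \delta\mu\|u_t\|^2$ from \eqref{eq:overlap} and $\|r_t\|^2/(r_t^\top u_t) \leq (1+\epsilon)M$ from \eqref{eq:rip}), and the same trace--determinant bookkeeping over the $\wtild\sigma$ rank-one updates. The delicate point you flag about the regularizer is resolved exactly as in your option (b): the paper (in the proof of Theorem~\ref{th:lbfgs}) absorbs $\lambda\|w\|^2$ into the quadratic loss by replacing $X$ with $\lb X^\top \;\; \sqrt{\lambda} I\rb^\top$, so the effective smallest eigenvalue becomes $\mu+\lambda$ and the curvature lower bound is nonvacuous under $\mu+\lambda>0$.
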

The proof, provided in Appendix~\ref{ap:gdlbfgs}, is based on the well-known trace-determinant method. Using Lemma~\ref{lem:hessian_stability}, we can show the following convergence result.
\begin{theorem}\label{th:lbfgs}
Let $\mu + \lambda > 0$, and let $w_t$ be computed using the L-BFGS method described in Section~\ref{sec:encoded}, with an encoding matrix that satisfies $(m, \eta, \epsilon)$-BRIP. Let $\{ A_t\}, \{D_t\}$ be arbitrary sequences of subsets of $[m]$ with cardinality $\left| A_t \right|, \left| D_t \right| \geq \eta m$ for all $t$. Then, for $f$ as described in Section~\ref{sec:encoded},
\begin{align*}
f(w_t) - \frac{\kappa^2(\kappa-\gamma)}{1-\kappa\gamma}f\lp w^*\rp \leq \lp \kappa \gamma\rp^t f(w_0),
\end{align*}
where $\kappa = \frac{1+\epsilon}{1-\epsilon}$, and $\gamma = \lp 1-\frac{4(\mu+\lambda) c_1 c_2}{\lp M+\lambda\rp(1+\epsilon) \lp c_1+c_2\rp^2}\rp$, where $c_1$ and $c_2$ are the constants in Lemma~\ref{lem:hessian_stability}. 
\end{theorem}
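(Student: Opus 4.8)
The plan is to reprise the structure of the proof of Theorem~\ref{th:gd}(2): analyse a single encoded L-BFGS step as a contracting step for the \emph{instantaneous} quadratic objective seen at iteration $t$, and then use the BRIP bound \eqref{eq:rip} to transfer the contraction to the original objective $f$. Introduce the shorthands $\ell(w)=\frac{1}{2n}\|Xw-y\|^2$ and, for $A\subseteq[m]$ with $|A|=\eta m$, $\ell_A(w)=\frac{1}{2\eta n}\|S_A(Xw-y)\|^2$ and $f_A:=\ell_A+\lambda h$, so that $f=\ell+\lambda h$ and the objective effectively descended at iteration $t$ is $f_{A_t}$, with $\wtild g_t=\nabla f_{A_t}(w_t)$ (the data-independent term $\lambda\nabla h$ being added at the master). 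Applying \eqref{eq:rip} to the vector $Xw-y$ gives $(1-\epsilon)\ell(w)\le\ell_A(w)\le(1+\epsilon)\ell(w)$ for every admissible $A$ and all $w$, and since $\lambda h\ge0$ this yields the two-sided sandwich $(1-\epsilon)f(w)\le f_A(w)\le(1+\epsilon)f(w)$. In particular, writing $w^*_A\in\argmin f_A$, we obtain $f(w^*_A)\le\frac{1}{1-\epsilon}f_A(w^*_A)\le\frac{1}{1-\epsilon}f_A(w^*)\le\kappa f(w^*)$ with $\kappa=\frac{1+\epsilon}{1-\epsilon}$, so every instantaneous minimizer already lies in the sublevel set $\{w:f(w)\le\kappa f(w^*)\}$; this is the mechanism behind the approximate-solution guarantee.

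For the per-step analysis, note that each $f_{A_t}$ is a strongly convex quadratic whose Hessian $H_{A_t}$ has, by \eqref{eq:rip}, smallest eigenvalue of order $\mu+\lambda$ and largest eigenvalue of order $(M+\lambda)(1+\epsilon)$ (the precise constants absorbing the normalization of $M$, $\mu$, $\lambda$). By Lemma~\ref{lem:hessian_stability} the inverse-Hessian estimate satisfies $c_1I\preceq B_t\preceq c_2I$, hence $d_t=-B_t\wtild g_t$ is a descent direction for $f_{A_t}$ with $\wtild g_t^\top d_t\le-c_1\|\wtild g_t\|^2$ and $\|d_t\|\le c_2\|\wtild g_t\|$, and the eigenvalues of $B_tH_{A_t}$ lie between quantities of order $c_1(\mu+\lambda)$ and $c_2(M+\lambda)(1+\epsilon)$. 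Substituting the back-off exact-line-search step size \eqref{eq:exact_ls} into the quadratic expansion of $f_{A_t}$ along $d_t$, and controlling the line-search denominator $d_t^\top\wtild X_D^\top\wtild X_D d_t$ via \eqref{eq:rip} applied to $D_t$ (with $0<\rho<1$ absorbing the residual mismatch between $A_t$ and $D_t$), yields a Kantorovich-type one-step contraction
\begin{align*}
f_{A_t}(w_{t+1})-f_{A_t}(w^*_{A_t})\le\gamma\,\bigl(f_{A_t}(w_t)-f_{A_t}(w^*_{A_t})\bigr),
\end{align*}
with $\gamma=1-\frac{4(\mu+\lambda)c_1c_2}{(M+\lambda)(1+\epsilon)(c_1+c_2)^2}$, where the form of the denominator comes from the crude bound $\bigl(c_1(\mu+\lambda)+c_2(M+\lambda)(1+\epsilon)\bigr)^2\le(c_1+c_2)^2\bigl((M+\lambda)(1+\epsilon)\bigr)^2$ on the Kantorovich quantity.

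It then remains to telescope. From the one-step contraction, $f_{A_t}(w_{t+1})\le\gamma f_{A_t}(w_t)+(1-\gamma)f_{A_t}(w^*_{A_t})$; using $f_{A_t}(w_{t+1})\ge(1-\epsilon)f(w_{t+1})$, $f_{A_t}(w_t)\le(1+\epsilon)f(w_t)$ and $f_{A_t}(w^*_{A_t})\le f_{A_t}(w^*)\le(1+\epsilon)f(w^*)$ turns this into the scalar recursion $f(w_{t+1})\le\kappa\gamma\,f(w_t)+\kappa(1-\gamma)f(w^*)$. Since $\epsilon$ is small enough that $\kappa\gamma<1$, unrolling this recursion and bounding the accumulated additive terms by the geometric sum $\frac{\kappa(1-\gamma)}{1-\kappa\gamma}f(w^*)$ (and using $f(w_0)\ge0$, $f(w^*)\ge0$) gives $f(w_t)-\frac{\kappa(1-\gamma)}{1-\kappa\gamma}f(w^*)\le(\kappa\gamma)^t f(w_0)$; since $\kappa\ge1$ forces $\frac{\kappa(1-\gamma)}{1-\kappa\gamma}\le\frac{\kappa^2(\kappa-\gamma)}{1-\kappa\gamma}$, the bound claimed in the theorem follows (and a slightly lossier bookkeeping step yields the stated constant directly).

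The main obstacle is the per-step estimate, which is harder here than for gradient descent for two reasons. First, the step uses the data-dependent preconditioner $B_t$, so one must invoke Lemma~\ref{lem:hessian_stability} (proved via trace-determinant bounds using the overlap condition \eqref{eq:overlap}) and push its constants through a condition-number/Kantorovich argument rather than a plain descent lemma. Second, and more delicately, the descent direction is built from the gradient subset $A_t$ whereas the step length \eqref{eq:exact_ls} is an exact line search over a \emph{different} subset $D_t$; reconciling these requires applying \eqref{eq:rip} to both $A_t$ and $D_t$ and relying on the back-off factor $\rho<1$ so that the step still guarantees a decrease of $f_{A_t}$ by the stated factor. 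Once that single-step contraction is in place, the rest is the same BRIP-sandwich-and-telescope bookkeeping used for encoded gradient descent.
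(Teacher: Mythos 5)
Your proposal is correct and follows essentially the same route as the paper: a one-step Kantorovich-type contraction for the instantaneous encoded objective (combining smoothness, the backed-off exact line search \eqref{eq:exact_ls} with BRIP applied to both $S_{A_t}$ and $S_{D_t}$, the bounds $c_1 I \preceq B_t \preceq c_2 I$ from Lemma~\ref{lem:hessian_stability}, and strong convexity), followed by the $(1-\epsilon)f \leq \wtild f^A \leq (1+\epsilon)f$ sandwich and the unrolled linear recursion (the paper's Lemma~\ref{lem:final_arg}). The one place you genuinely depart from the paper is in bounding $f$ at the instantaneous minimizer $\wtild w_t^*$: the paper devotes Lemmas~\ref{lem:solution_ball} and \ref{lem:reg_solution_ball} to this (via the optimality condition, a triangle inequality, and an optimized shift $cI - S_A^\top S_A$), obtaining $f(\wtild w_t^*) \leq \kappa^2 f(w^*)$, whereas your three-line chain $f(\wtild w_t^*) \leq \tfrac{1}{1-\epsilon}\wtild f^A(\wtild w_t^*) \leq \tfrac{1}{1-\epsilon}\wtild f^A(w^*) \leq \kappa f(w^*)$ is both more elementary and tighter, and you correctly observe that the resulting additive constant $\tfrac{\kappa(1-\gamma)}{1-\kappa\gamma}$ is dominated by the stated $\tfrac{\kappa^2(\kappa-\gamma)}{1-\kappa\gamma}$. (The paper's longer route pays off only in the constrained/regularized proximal setting where the level-set argument of Lemma~\ref{lem:reg_solution_ball} is reused; for this theorem your shortcut suffices.) One small imprecision: the factor $\tfrac{4c_1c_2}{(c_1+c_2)^2}$ does not come from the crude squared-sum bound you parenthetically describe, but from the Kantorovich inequality (Lemma~\ref{lem:rotation_bound}) applied to $B_t$ alone, i.e.\ $\tfrac{(u^\top B_t u)^2}{\|B_t u\|^2} \geq \tfrac{4c_1c_2}{(c_1+c_2)^2}\|u\|^2$, with the $(M+\lambda)(1+\epsilon)$ and $\mu+\lambda$ factors entering separately through the BRIP-smoothness bound on $\|Xd_t\|^2/\|d_t\|^2$ and the strong-convexity lower bound on $\|\nabla \wtild f^A(w_t)\|^2$; this is bookkeeping rather than a gap, since you identify all the right ingredients and land on the correct $\gamma$.
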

Similar to Theorem~\ref{th:gd}, the proof is based on the observation that the solution of the effective problem at time $t$ lies in a bounded set around the true solution $w^*$. As in gradient descent, coding enables linear convergence deterministically, unlike the stochastic and multi-batch variants of L-BFGS, \emph{e.g.}, \cite{MokhtariRibeiro_15, BerahasNocedal_16}.

\subsection{Convergence of encoded proximal gradient}
Next we consider the encoded proximal gradient algorithm, described in Section~\ref{sec:encoded}, for objectives with potentially non-smooth regularizers $h(w)$. The following theorem characterizes our convergence results under this setup.
\begin{theorem}\label{th:prox}
Let $w_t$ be computed using encoded proximal gradient with an encoding matrix that satisfies $(m, \eta, \epsilon)$-BRIP, with step size $\alpha_t =\alpha < \frac{1}{M}$, and where $\epsilon < \frac{1}{7}$. Let $\{ A_t\}$ be an arbitrary sequence of subsets of $[m]$ with cardinality $\left| A_t \right|\geq \eta m$ for all $t$. Then, for $f$ as described in Section~\ref{sec:encoded},
\begin{enumerate}
\item For all $t$,
\begin{align*}
\frac{1}{t}\sum_{\tau=1}^t f(w_\tau) - \kappa f(w^*) \leq \frac{ 4\epsilon f(w_0) + \frac{1}{2\alpha} \| w_{0} - w^* \|^2}{\lp 1-7\epsilon \rp t},
\end{align*}
\item For all $t$,
\begin{align*}
f(w_{t+1}) \leq \kappa  f(w_t),
\end{align*}
where $\kappa = \frac{1+7\epsilon}{1-3\epsilon}$.
\end{enumerate}
\end{theorem}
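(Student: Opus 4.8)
The plan is to reduce everything to a single structural fact and then run a textbook proximal-gradient argument against it. Fix an iteration $t$ with active set $A_t$ and introduce the \emph{instantaneous} smooth part $\ell_{A_t}(w):=\frac{1}{2\eta n}\sum_{i\in A_t}\lVert S_i(Xw-y)\rVert^2=\frac{1}{2\eta n}\lVert S_{A_t}(Xw-y)\rVert^2$ and the instantaneous objective $f_{A_t}(w):=\ell_{A_t}(w)+\lambda h(w)$; by the definition of $\wtild F_t$, the update is exactly $w_{t+1}=\mathrm{prox}_{\alpha\lambda h}\lp w_t-\alpha\nabla\ell_{A_t}(w_t)\rp$. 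Substituting the vector $v=Xw-y$ into the BRIP inequality \eqref{eq:rip} gives $(1-\epsilon)\,\ell(w)\le\ell_{A_t}(w)\le(1+\epsilon)\,\ell(w)$ for \emph{every} $w$, where $\ell(w)=\frac{1}{2n}\lVert Xw-y\rVert^2$; since $\ell\ge0$, $h\ge0$ and $1-\epsilon\le1\le1+\epsilon$, this upgrades to the pointwise sandwich
\[
(1-\epsilon)\,f(w)\ \le\ f_{A_t}(w)\ \le\ (1+\epsilon)\,f(w)\qquad\text{for all }w.
\]
The same quadratic-form bound shows $\ell_{A_t}$ is convex with smoothness at most $(1+\epsilon)$ times that of $\ell$, so the step $\alpha<\frac1M$ sits within a $(1+\epsilon)$ factor of the stable range for a prox-gradient step on $f_{A_t}$; and the sandwich also yields $f(\hat w_t)\le\tfrac{1}{1-\epsilon}f_{A_t}(\hat w_t)\le\tfrac{1}{1-\epsilon}f_{A_t}(w^*)\le\tfrac{1+\epsilon}{1-\epsilon}f(w^*)$ for the instantaneous minimizer $\hat w_t$, which is the ``effective solution lies in a bounded sublevel set'' fact referenced after Theorem~\ref{th:gd}.

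Next I would invoke the standard three-point proximal-gradient inequality for the pair $(\ell_{A_t},\lambda h)$: for every $z$,
\[
f_{A_t}(w_{t+1})\ \le\ f_{A_t}(z)+\tfrac{1}{2\alpha}\lVert z-w_t\rVert^2-\tfrac{1}{2\alpha}\lVert z-w_{t+1}\rVert^2+\Big(\tfrac{(1+\epsilon)M}{2}-\tfrac{1}{2\alpha}\Big)\lVert w_{t+1}-w_t\rVert^2 .
\]
Because $\alpha<\frac1M$, the last coefficient is at most $\tfrac{M\epsilon}{2}$; call that term $E_t$. Taking $z=w_t$ makes the net coefficient of $\lVert w_{t+1}-w_t\rVert^2$ negative, which gives simultaneously the monotonicity $f_{A_t}(w_{t+1})\le f_{A_t}(w_t)$ and a sufficient-decrease estimate $\tfrac{1}{2\alpha}\lVert w_{t+1}-w_t\rVert^2\le f_{A_t}(w_t)-f_{A_t}(w_{t+1})\le(1+\epsilon)f(w_t)$; combined with $\alpha M<1$ this bounds $E_t\le c\,\epsilon\,f(w_t)$ for an explicit $O(1)$ constant $c$.

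From here the two claims follow. For part 2, $f_{A_t}(w_{t+1})\le f_{A_t}(w_t)$ together with the sandwich gives $(1-\epsilon)f(w_{t+1})\le f_{A_t}(w_{t+1})\le f_{A_t}(w_t)\le(1+\epsilon)f(w_t)$, and $\tfrac{1+7\epsilon}{1-3\epsilon}\ge\tfrac{1+\epsilon}{1-\epsilon}$ delivers $f(w_{t+1})\le\kappa f(w_t)$ with slack to spare. For part 1, set $z=w^*$ in the three-point inequality, bound $f_{A_\tau}(w^*)\le(1+\epsilon)f(w^*)$ on the right and $f_{A_\tau}(w_{\tau+1})\ge(1-\epsilon)f(w_{\tau+1})$ on the left, and sum over $\tau=0,\dots,t-1$: the $\tfrac{1}{2\alpha}\lVert w^*-w_\tau\rVert^2$ terms telescope to $\tfrac{1}{2\alpha}\lVert w^*-w_0\rVert^2$, and $\sum_\tau E_\tau\le c\epsilon\sum_\tau f(w_\tau)$ is moved to the left-hand side. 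After the index shift (which leaves a stray multiple of $\epsilon f(w_0)$) and collecting the $\epsilon$-losses, one lands on an inequality of the form $(1-7\epsilon)\sum_{\tau=1}^t f(w_\tau)\le(1-7\epsilon)\kappa\, t\, f(w^*)+4\epsilon f(w_0)+\tfrac{1}{2\alpha}\lVert w_0-w^*\rVert^2$; dividing by $(1-7\epsilon)t$ gives the claim, and this is where the hypothesis $\epsilon<\tfrac{1}{7}$ is consumed.

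The main obstacle is controlling the error terms $E_\tau$ across the sum. Since the active set $A_\tau$ changes from iteration to iteration, the instantaneous decreases $f_{A_\tau}(w_\tau)-f_{A_\tau}(w_{\tau+1})$ do \emph{not} telescope in $\tau$, so each one has to be absorbed individually through the crude bound $f_{A_\tau}(w_\tau)\le(1+\epsilon)f(w_\tau)$; it is precisely this step that degrades the leading constant from $1$ to $1-7\epsilon$ and forces $\epsilon<\tfrac{1}{7}$, and getting the bookkeeping tight enough to land on the stated constants (rather than something messier) is the fiddly part. A secondary subtlety is that $w^*$ minimizes $f$ but not any $f_{A_\tau}$, so the argument must use only the sandwich bound $f_{A_\tau}(w^*)\le(1+\epsilon)f(w^*)$ and never first-order optimality of $w^*$ for an instantaneous problem — which is exactly why the comparison point in the three-point inequality is taken to be the fixed $w^*$ rather than the moving minimizer $\hat w_\tau$.
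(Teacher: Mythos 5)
Your proof is correct, but it takes a genuinely different route from the paper's. The paper writes the three-point proximal-gradient inequality for the \emph{true} pair $(p,\lambda h)$ with $p(w)=\tfrac12\|Xw-y\|^2$, so the update (which uses $\nabla \wtild p_t$ rather than $\nabla p$) leaves an explicit gradient-mismatch term $\langle \nabla p(w_t)-\nabla\wtild p_t(w_t),\,w_{t+1}-w^*\rangle$; this is then bounded by a polarization identity in $\Delta=I-S_A^\top S_A$ that expresses the inner product through four quadratic forms and yields $\epsilon\lb 4f(w_t)+3f(w_{t+1})+3f(w^*)\rb$, which is exactly where the constants $1-7\epsilon$ and $1+3\epsilon$ come from. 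You instead run the exact three-point inequality for the \emph{instantaneous} pair $(\ell_{A_t},\lambda h)$ — for which $w_{t+1}$ really is an unperturbed prox-gradient step — and convert back to $f$ only at the endpoints via the pointwise sandwich $(1-\epsilon)f\le f_{A_t}\le(1+\epsilon)f$, paying separately for the fact that $\alpha<1/M$ is marginally too large for the $(1+\epsilon)M$-smooth $\ell_{A_t}$ through the residual $E_t$, which you correctly absorb using sufficient decrease (the exact coefficient in that step should be $\tfrac1\alpha-\tfrac{(1+\epsilon)M}{2}\ge\tfrac{(1-\epsilon)M}{2}$ rather than $\tfrac{1}{2\alpha}$, a harmless adjustment giving $E_t\le\tfrac{\epsilon(1+\epsilon)}{1-\epsilon}f(w_t)$). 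Carried through, your bookkeeping actually yields \emph{stronger} constants than stated — roughly $1-\tfrac{7\epsilon}{3}$ in place of $1-7\epsilon$, $1+\epsilon$ in place of $1+3\epsilon$, and $\tfrac{1+\epsilon}{1-\epsilon}$ in place of $\tfrac{1+7\epsilon}{1-3\epsilon}$ in part 2 — so the stated inequalities follow a fortiori. What your approach buys is a cleaner argument that avoids the polarization computation entirely and reuses textbook prox-gradient machinery; what the paper's buys is a mismatch-term technique that transfers verbatim to part 2 and to the gradient-descent analysis of Theorem~\ref{th:gd}, without ever having to reason about the step size being outside the stable range for the instantaneous smoothness constant.
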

As in the previous algorithms, the convergence guarantees hold for arbitrary sequences of active sets $\lbp A_t \rbp$. Note that as in the gradient descent case, the convergence is shown on the mean of past function values. Since this does not prevent the iterates from having a sudden jump at a given iterate, we include the second part of the theorem to complement the main convergence result, which implies that the function value cannot increase by more than a small factor of its current value.

\subsection{Convergence of encoded block coordinate descent}

Finally, we consider the convergence of encoded block coordinate descent algorithm. The following theorem characterizes our main convergence result for this case.
\begin{theorem}\label{th:bcd}
Let $w_t = S^\top v_t$, where $v_t$ is computed using encoded block coordinate descent as described in Section~\ref{sec:encoded}. Let $S$ satisfy $(m, \eta, \epsilon)$-BRIP, and the step size satisfy $\alpha < \frac{1}{L(1+\epsilon)}$. Let $\{ A_t\}$ be an arbitrary sequence of subsets of $[m]$ with cardinality $\left| A_t \right|\geq \eta m$ for all $t$. Let the level set of $g$ at the first iterate $D_g(w_0)$ have diameter $R$. Then, for $g(w) = \phi(Xw)$ as described in Section~\ref{sec:encoded}, the following hold.
\begin{enumerate}
\item If $\phi$ is convex, then
\begin{align*}
g(w_t) - g(w^*) \leq \frac{1}{\frac{1}{\pi_0} + Ct},
\end{align*}
where $\pi_0 = g(w_0) - g(w^*)$, and $C =\frac{(1-\epsilon)\alpha}{R}\lp 1 - \frac{\alpha L'}{2}\rp$.
\item If $g$ is $\nu$-restricted-strongly convex, then
\begin{align*}
g(w_t) - g(w^*) \leq \lp 1 - \frac{1}{\xi}\rp^t \lp g(w_0) - g(w^*) \rp,
\end{align*}
where $\xi = \frac{1}{\nu (1-\epsilon) \alpha} \lp 1 - \frac{L(1+\epsilon) \alpha}{2}\rp^{-1}$.
\end{enumerate}
\end{theorem}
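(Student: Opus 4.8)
The plan is to exploit the key structural fact that, at each iteration, the active set $A_t$ picks out a submatrix $S_{A_t}$ whose Gram matrix is spectrally close to $\eta I$ by the BRIP condition, so that the block coordinate step on the encoded problem behaves, in the ``real'' space $w = S^\top v$, like a scaled gradient step on the original objective $g$. Concretely, I would first write the update in the lifted space: for $i \in A_t$ the block moves by $-\alpha \nabla_i \wtild g(v_{t-1}) = -\alpha S_i X^\top \nabla\phi(X S^\top v_{t-1})$, and for $i \notin A_t$ it is frozen. Pulling this back through $w_t = S^\top v_t$ gives $w_{t+1} - w_t = -\alpha S_{A_t}^\top S_{A_t} X^\top \nabla\phi(X w_t) = -\alpha S_{A_t}^\top S_{A_t} \nabla g(w_t)$. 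The operator $P_t := S_{A_t}^\top S_{A_t}$ satisfies $(1-\epsilon)\eta I \preceq P_t \preceq (1+\epsilon)\eta I$ by \eqref{eq:rip}, so the step is a positive-definite-preconditioned gradient step on $g$, with preconditioner eigenvalues pinned in $[(1-\epsilon)\eta,(1+\epsilon)\eta]$.

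Next I would run the standard descent-lemma argument for preconditioned gradient descent. Using $L$-smoothness of $g$ together with $\|P_t\| \le (1+\epsilon)\eta \le 1+\epsilon$ (absorbing $\eta\le 1$), a Taylor/descent inequality gives
\begin{align*}
g(w_{t+1}) \le g(w_t) - \alpha \nabla g(w_t)^\top P_t \nabla g(w_t) + \frac{L\alpha^2}{2}\nabla g(w_t)^\top P_t^2 \nabla g(w_t),
\end{align*}
and then the lower bound $\nabla g(w_t)^\top P_t \nabla g(w_t) \ge (1-\epsilon)\alpha^{-1}\cdot\alpha\,(1-\epsilon)^{-1}$\dots{} — more precisely I would bound $\nabla g^\top P_t \nabla g \ge (1-\epsilon)\|\nabla g\|^2$ (taking $\eta$'s contribution into the constant or noting $\eta\ge 1/\beta$ can be folded in) and $\nabla g^\top P_t^2 \nabla g \le (1+\epsilon)^2\|\nabla g\|^2$, yielding
\begin{align*}
g(w_{t+1}) \le g(w_t) - (1-\epsilon)\alpha\lp 1 - \tfrac{\alpha L(1+\epsilon)^2}{2(1-\epsilon)}\rp \|\nabla g(w_t)\|^2 .
\end{align*}
With $\alpha < \frac{1}{L(1+\epsilon)}$ the bracket is positive (matching the $L' $ and $\xi$ appearing in the statement up to how $L'$ is defined), so the objective is monotonically decreasing and all iterates stay in $D_g(w_0)$, which has diameter $R$. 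This monotonicity is what lets me invoke the diameter bound uniformly.

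From here the two cases are the textbook endgame. For the convex case, I would bound $\|\nabla g(w_t)\| \ge (g(w_t)-g(w^*))/\|w_t - w^*\| \ge (g(w_t)-g(w^*))/R$ using convexity and $w_t\in D_g(w_0)$, substitute into the descent inequality, set $\pi_t := g(w_t)-g(w^*)$, obtain $\pi_{t+1} \le \pi_t - C'\pi_t^2$ for the appropriate constant $C'$ matching $C = \frac{(1-\epsilon)\alpha}{R}(1-\frac{\alpha L'}{2})$ (note the $R$ here is $R^2$ absorbed, or $R$ as the paper writes it — I would match their normalization), and then the standard lemma $\pi_{t+1}\le \pi_t - C'\pi_t^2 \Rightarrow \pi_t \le 1/(1/\pi_0 + C' t)$ finishes it. For the restricted-strongly-convex case, I would instead use $\langle \nabla g(w_t), w_t - w_t^*\rangle \ge \nu\|w_t - w_t^*\|^2$ together with Cauchy–Schwarz to get $\|\nabla g(w_t)\|^2 \ge \nu^2\|w_t-w_t^*\|^2 \ge \nu(g(w_t)-g(w^*))$ (the last step by $\nu$-restricted strong convexity again, or by the standard consequence $g(w)-g(w^*)\le \frac{1}{2\nu}\|\nabla g(w)\|^2$-type inequality), plug into the descent inequality, and read off the linear rate $1 - 1/\xi$ with $\xi$ as stated.

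The main obstacle I anticipate is handling the \emph{frozen blocks} cleanly: the update is not a full preconditioned gradient step but one restricted to the coordinates in $A_t$, and the identity $w_{t+1}-w_t = -\alpha S_{A_t}^\top S_{A_t}\nabla g(w_t)$ relies on $\nabla_i \wtild g(v) = S_i X^\top\nabla\phi(Xw)$ being expressible purely through $w = S^\top v$ — which holds because $\wtild g(v) = \phi(XS^\top v) = g(S^\top v)$, so $\nabla\wtild g(v) = S\nabla g(S^\top v)$ and the $i$-th block is exactly $S_i\nabla g(w)$. That is the crucial algebraic step that makes the BRIP submatrix $S_{A_t}$ (rather than all of $S$) the relevant object, and it is where the ``oblivious'' encoding pays off. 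A secondary subtlety is that $w_t^*$, the projection of $w_t$ onto $\mathcal{S}$, changes with $t$; I would handle this exactly as in \cite{LaiYin_13}, using that $g(w_t^*) = g(w^*)$ is constant and that the restricted-strong-convexity inequality is stated per-point, so no uniformity in $w^*$ is needed.
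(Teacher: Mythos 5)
Your proposal is essentially the paper's own argument: the paper works in the lifted $v$-space using $L'$-smoothness of $\wtild g$ (Lemma~\ref{lem:bcd_smooth}) to get $\wtild g(v_{t+1})-\wtild g(v_t)\le -\lp\tfrac{1}{\alpha}-\tfrac{L'}{2}\rp\|\Delta_t\|^2$ and then lower-bounds $\|\Delta_t\|^2=\alpha^2\nabla g(w_t)^\top S_{A_t}^\top S_{A_t}\nabla g(w_t)\ge(1-\epsilon)\alpha^2\|\nabla g(w_t)\|^2$, which is exactly your ``preconditioned gradient step'' view pushed through $w_t=S^\top v_t$; the level-set/diameter step and the two recursion endgames are identical. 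Two small corrections: (i) your second-order bound $\nabla g^\top P_t^2\nabla g\le(1+\epsilon)^2\|\nabla g\|^2$ yields the looser bracket $1-\tfrac{\alpha L(1+\epsilon)^2}{2(1-\epsilon)}$; to recover the stated constants use $\nabla g^\top P_t^2\nabla g\le(1+\epsilon)\,\nabla g^\top P_t\nabla g$ first and only then apply the lower spectral bound, which gives $-\alpha\lp 1-\tfrac{\alpha L'}{2}\rp(1-\epsilon)\|\nabla g\|^2$ exactly. (ii) In the restricted-strongly-convex case the chain $\nu^2\|w_t-w_t^*\|^2\ge\nu\lp g(w_t)-g^*\rp$ uses Lemma~\ref{lem:rsc} in the wrong direction (that lemma gives $g(w)-g^*\ge\nu\|w-w^*\|^2$, not the reverse); the correct route is the parenthetical alternative you mention, combining convexity's $\pi_t\le\|\nabla g(w_t)\|\,\|w_t-w_t^*\|$ with $\|w_t-w_t^*\|^2\le\pi_t/\nu$ to obtain $\|\nabla g(w_t)\|^2\ge\nu\pi_t$, which is precisely how the paper derives the linear rate.
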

Theorem~\ref{th:bcd} demonstrates that the standard $O\lp \frac{1}{t}\rp$ rate for the general convex, and linear rate for the strongly convex case can be obtained under the encoded setup. Note that unlike the data parallelism setup, we can achieve exact minimum under model parallelism, since the underlying geometry of the problem does not change under encoding; the same objective is simply mapped onto a higher-dimensional space, which has redundant coordinates. Similar to the previous cases, encoding allows for deterministic convergence guarantees under adversarial failure patterns. This comes at the expense of a small penalty in the convergence rate though; one can observe that a non-zero $\epsilon$ slightly weakens the constants in the convergence expressions. Still, note that this penalty in convergence rate only depends on the encoding matrix and not on the delay profile in the system. This is in contrast to the asynchronous coordinate descent methods; for instance, in \cite{LiuWright_15}, the step size is required to shrink \emph{exponentially} in the maximum allowable delay, and thus the guaranteed convergence rate can exponentially degrade with increasing worst-case delay in the system. The same is true for the linear convergence guarantee in \cite{PengXu_16}.

\section{Code Design}\label{sec:implementation}
% code design with regard to rip [etf + random]
\subsection{Block RIP condition and code design}

We first discuss two classes of encoding matrices with regard to the BRIP condition; namely equiangular tight frames, and random matrices.

\paragraph{Tight frames.} A unit-norm \emph{frame} for $\mathbb{R}^n$ is a set of vectors $F = \lbp a_i \rbp_{i=1}^{n\beta}$ with $\|a_i\|=1$, where $\beta \geq 1$, such that there exist constants $\xi_2 \geq \xi_1>0$ such that, for any $u \in \mathbb{R}^n$, 
\begin{align*}
    \xi_1 \| u\|^2 \leq \sum_{i=1}^{n\beta} \left| \langle u, a_i\rangle\right|^2\leq \xi_2 \| u\|^2.
\end{align*}
The frame is  \emph{tight} if the above satisfied with $\xi_1 = \xi_2$. In this case, it can be shown that the constants are equal to the redundancy factor of the frame, \emph{i.e.}, $\xi_1=\xi_2=\beta$. If we form $S \in \mathbb{R}^{(\beta n) \times n}$ by rows that form a \emph{tight frame}, then we have $S^\top S = \beta I$, which ensures $\| Xw - y\|^2=\tfrac{1}{\beta}\| SXw - Sy\|^2$. 
Then for any solution $\hat w$ to the encoded problem (with $k=m$),  
\begin{align*}
\nabla \wtild f(\hat w) = X^\top S^\top S (X \hat w-y) = \beta X^\top (X\hat w-y) = \beta \nabla f( \hat w).
\end{align*}
Therefore, the solution to the encoded problem satisfies the optimality condition for the original problem as well:
\begin{align*}
     -\nabla \wtild f(\hat w) \in \partial h(\hat w), \quad\Leftrightarrow \quad -\nabla f(\hat w) \in \partial h(\hat w), 
\end{align*}
and if $f$ is also strongly convex, then $\hat w = w^*$ is the unique solution. This means that for $k=m$, obliviously solving the encoded problem results in the same objective value as in the original problem.

Define the maximal inner product of a unit-norm tight frame $F = \lbp a_i\rbp_{i=1}^{n\beta}$, where $a_i \in \mathbb{R}^n, \forall i$, by
\begin{align*}
\omega(F) := \max_{\substack{a_i, a_j \in F \\ i \neq j}} \left| \langle a_i, a_j\rangle \right|.
\end{align*}
A tight frame is called an \emph{equiangular tight frame} (ETF) if $\left| \langle a_i, a_j\rangle \right| = \omega(F)$ for every $i \neq j$. 
\begin{proposition}[\cite{Welch_74}]\label{prop:welch}
Let $F = \lbp a_i\rbp_{i=1}^{n\beta}$ be a tight frame. Then $\omega(F) \geq \sqrt{\frac{\beta-1}{n\beta-1}}$. Moreover, equality is satisfied if and only if $F$ is an equiangular tight frame.
\end{proposition}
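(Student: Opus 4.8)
The plan is to run the classical Welch-bound computation through the Gram matrix, using the identity $S^\top S = \beta I_n$ that tightness already supplies (as noted above). Write $N = n\beta$, and let $S \in \mathbb{R}^{N \times n}$ be the matrix whose $i$th row is $a_i^\top$, so that the Gram matrix $G := SS^\top \in \mathbb{R}^{N\times N}$ has entries $G_{ij} = \langle a_i, a_j\rangle$ and unit diagonal (since $\|a_i\|=1$). Because $G = SS^\top$ and $S^\top S = \beta I_n$ share the same nonzero spectrum, $G$ has eigenvalue $\beta$ with multiplicity $n$ and eigenvalue $0$ with multiplicity $N-n$; in particular $\trace{G^2} = n\beta^2$ and $\trace{G} = n\beta = N$.

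First I would evaluate $\trace{G^2}$ a second way, by summing squared entries: $\trace{G^2} = \sum_{i,j}|\langle a_i, a_j\rangle|^2 = N + \sum_{i\neq j}|\langle a_i, a_j\rangle|^2$. Equating the two expressions gives $\sum_{i\neq j}|\langle a_i,a_j\rangle|^2 = n\beta^2 - n\beta = n\beta(\beta - 1)$. Then I would average this sum over the $N(N-1) = n\beta(n\beta - 1)$ ordered index pairs $(i,j)$ with $i\neq j$; since $\omega(F)^2$ is the largest of these summands, it is at least their average:
\begin{align*}
\omega(F)^2 \;\geq\; \frac{1}{N(N-1)}\sum_{i\neq j}|\langle a_i,a_j\rangle|^2 \;=\; \frac{n\beta(\beta-1)}{n\beta(n\beta-1)} \;=\; \frac{\beta-1}{n\beta-1},
\end{align*}
and taking square roots gives the stated bound.

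For the equality case, equality in the averaging step holds if and only if every off-diagonal $|\langle a_i,a_j\rangle|$ equals the maximum $\omega(F)$, which is exactly the statement that $F$ is equiangular; conversely, for an ETF every off-diagonal summand equals $\omega(F)^2$, so the average equals the maximum and the inequality is tight. I would also point out that tightness was used only to pin down the spectrum of $G$: for a general unit-norm frame one has $\trace{G^2} \geq \trace{G}^2/n = N^2/n$ by Cauchy--Schwarz applied to the at most $n$ nonzero eigenvalues of $G$, with equality precisely when the frame is tight --- so it is ``tight'' together with ``equiangular'' that makes equality in Welch's bound equivalent to being an ETF.

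The computation has no genuinely hard step; the only points requiring care are the bookkeeping of which of $SS^\top$ and $S^\top S$ carries which eigenvalues, and counting ordered versus unordered off-diagonal index pairs consistently throughout.
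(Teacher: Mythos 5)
Your proof is correct: the Gram-matrix trace computation, the max-versus-average step, and the equality analysis (equiangularity being the only remaining condition once tightness is assumed) are all sound, and this is precisely the classical argument behind the cited result of Welch. The paper itself gives no proof --- it defers entirely to the citation --- so your write-up simply supplies the standard derivation the authors are invoking.
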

Therefore, an ETF minimizes the correlation between its individual elements, making each submatrix $ S_A^\top S_A$ as close to orthogonal as possible. This, combined with the property that tight frames preserve the optimality condition when all nodes are waited for ($k=m$), make ETFs good candidates for encoding, in light of the required property \eqref{eq:rip}. We specifically evaluate the Paley ETF from \cite{Paley_33} and \cite{GoethalsSeidel_67}; Hadamard ETF from \cite{Szollosi_13} (not to be confused with Hadamard matrix); and Steiner ETF from \cite{FickusMixon_12} in our experiments.

\begin{figure}
\centering
\begin{minipage}{.46\textwidth}
  \centering
  \includegraphics[scale=0.42]{./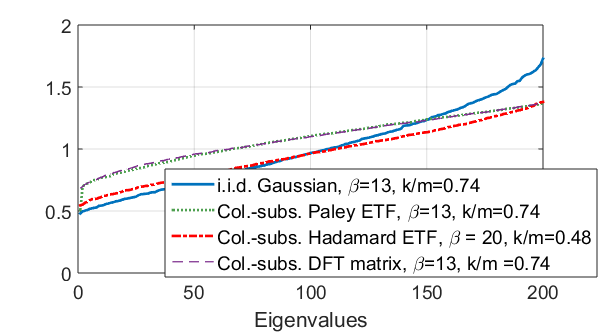}
  \caption{Sample spectrum of $S_A^\top S_A$ for various constructions with high redundancy, and small $k$ (normalized).}
  \label{fig:high_red}
\end{minipage}\hfill
\begin{minipage}{.46\textwidth}
  \centering
  \includegraphics[scale=0.42]{./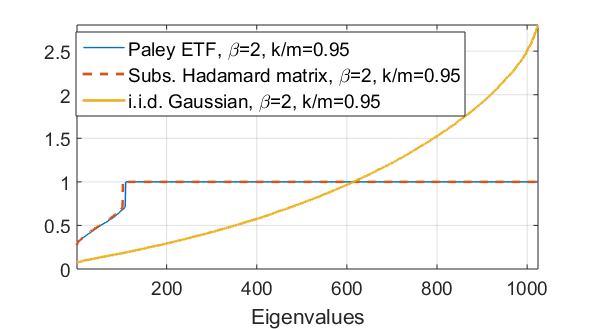}
  \caption{Sample spectrum of $S_A^\top S_A$ for various constructions with moderate redundancy, and large $k$ (normalized).}
  \label{fig:low_red}
\end{minipage}
\end{figure}

Although the derivation of tight eigenvalue bounds for subsampled ETFs is a long-standing problem, numerical evidence (see Figures~\ref{fig:high_red}, \ref{fig:low_red}) suggests that they tend to have their eigenvalues more tightly concentrated around 1 than random matrices (also supported by the fact that they satisfy Welch bound, Proposition~\ref{prop:welch} with equality). 

Note that 
%even though 
our theoretical results focus on the extreme eigenvalues due to a worst-case analysis; in practice, 
%the distribution of the bulk of the eigenvalues have a larger impact on the final accuracy of the solution, since 
most of the energy of the gradient lies on the eigen-space associated with the bulk of the eigenvalues, which the following proposition shows can be identically 1.
\begin{proposition}\label{prop:etf_bulk}
If the rows of $S$ are chosen to form an ETF with redundancy $\beta$, then for $\eta \geq 1- \frac{1}{\beta}$, $\frac{1}{\beta}S_A^\top S_A$ has $n(1-\beta(1-\eta))$ eigenvalues equal to 1.
\end{proposition}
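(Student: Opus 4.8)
The plan is to reduce the statement about $S_A$ to a nullity count for the complementary submatrix $S_{A^c}$, using only the tight-frame identity.

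First I would invoke the fact (established in the ``Tight frames'' paragraph) that if the rows of $S$ form a unit-norm tight frame with redundancy $\beta$, then $S^\top S = \beta I_n$. Splitting the rows of $S$ into the blocks indexed by $A$ and by $A^c$ gives
\begin{align*}
\beta I_n = S^\top S = S_A^\top S_A + S_{A^c}^\top S_{A^c},
\qquad\text{hence}\qquad
\tfrac{1}{\beta} S_A^\top S_A = I_n - \tfrac{1}{\beta} S_{A^c}^\top S_{A^c}.
\end{align*}
Therefore a vector $v$ is an eigenvector of $\tfrac{1}{\beta} S_A^\top S_A$ with eigenvalue $1$ if and only if $S_{A^c}^\top S_{A^c} v = 0$, which (taking the inner product with $v$) is equivalent to $\|S_{A^c} v\|^2 = 0$, i.e. $v \in \ker S_{A^c}$. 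Consequently the multiplicity of the eigenvalue $1$ of $\tfrac1\beta S_A^\top S_A$ equals $\dim \ker S_{A^c} = n - \mathrm{rank}(S_{A^c})$.

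Next I would count rows. The $\beta n$ rows of $S$ are partitioned into $m$ worker blocks of $\beta n/m$ rows each, and $|A^c| = (1-\eta)m$, so $S_{A^c}$ has $(1-\eta)\beta n$ rows. Since $\eta \ge 1 - \tfrac1\beta$ we have $(1-\eta)\beta n \le n$, so $\mathrm{rank}(S_{A^c}) \le (1-\eta)\beta n$ and hence $\dim\ker S_{A^c} \ge n - (1-\eta)\beta n = n\bigl(1-\beta(1-\eta)\bigr)$. Combined with the previous paragraph, this shows $\tfrac1\beta S_A^\top S_A$ has at least $n\bigl(1-\beta(1-\eta)\bigr)$ eigenvalues equal to $1$, and at $\eta=1$ (where $S_A = S$) the count is exactly $n$, consistent with the formula.

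There is essentially no analytic obstacle here; the only point that needs care is the precise meaning of the claim. The argument above uses only tightness (to get $S^\top S = \beta I$) together with a trivial row-count bound, and it yields the asserted number of unit eigenvalues as a \emph{lower} bound, which is exactly what is needed for the ``bulk of the spectrum concentrates at $1$'' conclusion. Equality holds precisely when $S_{A^c}$ has full row rank; this is the generic situation and holds in the explicit ETF constructions used in the paper, but establishing it for \emph{all} ETFs would require a spark-type statement about equiangular tight frames that is not needed for the proposition as used.
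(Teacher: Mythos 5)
Your proof is correct, but it takes a genuinely different route from the paper's. The paper deduces the proposition from the Cauchy interlacing theorem: since $S^\top S=\beta I_n$, the matrix $\tfrac1\beta SS^\top$ is a rank-$n$ orthogonal projection with $n$ eigenvalues equal to $1$ and $(\beta-1)n$ equal to $0$; the $\eta\beta n\times\eta\beta n$ principal submatrix $\tfrac1\beta S_AS_A^\top$ therefore has its $j$th eigenvalue pinched between $\lambda_j$ and $\lambda_{j+(1-\eta)\beta n}$, forcing it to equal $1$ for all $j\le n(1-\beta(1-\eta))$, and the claim transfers to $\tfrac1\beta S_A^\top S_A$ because the two Gram matrices share their nonzero spectra. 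You instead work with the complementary block decomposition $\beta I_n=S_A^\top S_A+S_{A^c}^\top S_{A^c}$ and identify the unit eigenspace of $\tfrac1\beta S_A^\top S_A$ with $\ker S_{A^c}$, whose dimension you bound below by counting the $(1-\eta)\beta n$ rows of $S_{A^c}$. The two arguments use exactly the same input (tightness only; equiangularity plays no role in either) and produce the same bound, but yours is more elementary, avoiding interlacing altogether, and it makes explicit something the paper's phrasing glosses over: the stated multiplicity is really a lower bound, with equality precisely when $S_{A^c}$ has full row rank. Your closing remark that verifying full row rank for arbitrary ETFs would need a spark-type result, and that the lower bound is all the paper actually uses, is an accurate and worthwhile clarification.
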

This follows immediately from Cauchy interlacing theorem, using the fact that $S_AS_A^\top$ and $S_A^\top S_A$ have the same spectra except zeros. Therefore for sufficiently large $\eta$, ETFs have a mostly flat spectrum even for low redundancy, and thus in practice one would expect ETFs to perform well even for small amounts of redundancy. This is also confirmed by Figure~\ref{fig:low_red}, as well as our numerical results.

\paragraph{Random matrices.} Another natural choice of encoding could be to use i.i.d. random matrices. Although encoding with such random matrices can be computationally expensive and may not have the desirable properties of encoding with tight frames, their eigenvalue behavior can be characterized analytically. In particular, using the existing results on the eigenvalue scaling of large i.i.d. Gaussian matrices from \cite{Geman_80, Silverstein_85} and union bound, it can be shown that
\begin{align}
    &\Prob{\max_{A:\left|A\right|=k}\lambda_{\max}\lp \frac{1}{\beta \eta n} S_A^\top S_A\rp > \lp 1 + \sqrt{\frac{1}{\beta \eta}} \rp^2} \to 0 \label{eq:random1}\\& \Prob{\min_{A:\left|A\right|=k}\lambda_{\min}\lp \frac{1}{\beta \eta n} S_A^\top S_A\rp < \lp 1 - \sqrt{\frac{1}{\beta \eta}} \rp^2} \to 0, \label{eq:random2}
\end{align}
as $n \to \infty$, if the elements of $S_A$ are drawn i.i.d. from $N(0,1)$. Hence, for sufficiently large redundancy and problem dimension, i.i.d. random matrices are good candidates for encoding as well. However, for finite $\beta$, even if $k=m$, in general the optimum of the original problem is not recovered exactly, for such matrices.

\subsection{Efficient encoding}

In this section we discuss some of the possible practical approaches to encoding. Some of the practical issues involving encoding include the the computational complexity of encoding, as well as the loss of sparsity in the data due to the multiplication with $S$, and the resulting increase in time and space complexity. We address these issues in this section. 

\subsubsection{Efficient distributed encoding with sparse matrices} \label{subsec:sparse}
% setup, definitions
Let the dataset $(X,y)$ lie in a database, accessible to each worker node, where each node is responsible for computing their own encoded partitions $S_iX$ and $S_i y$. We assume that $S$ has a sparse structure. Given $S$, define $B_i (S) = \lbp j: S_{ij} \neq 0\rbp$ as the set of indices of the non-zero elements of the $i$th row of $S$. For a set $\mathcal{I}$ of rows, we define $B_{\mathcal{I}}(S) = \cup_{i \in \mathcal{I}} B_i(S)$. 

% partitioning of indices, storage indices
Let us partition the set of rows of $S$, $[\beta n]$, into $m$ machines, and denote the partition of machine $k$ as $\mathcal{I}_k$, \emph{i.e.}, $\bigsqcup_{k=1}^m \mathcal{I}_k = [\beta n]$, where $\sqcup$ denotes disjoint union. Then the set of non-zero columns of $S_k$ is given by $B_{\mathcal{I}_k} (S)$. Note that in order to compute $S_k X$, machine $k$ only requires the rows of $X$ in the set $B_{\mathcal{I}_k} (S)$. In what follows, we will denote this submatrix of $X$ by $\wtild X_k$, \emph{i.e.}, if $x_i^\top$ is the $i$th row of $X$, $\wtild X_k := \lb x_i^\top\rb_{i \in B_{\mathcal{I}_k} (S)}$. Similarly $\wtild y_k = \lb y_i\rb_{i \in B_{\mathcal{I}_k} (S)}$, where $y_i$ is the $i$th element of $y$.

Consider the specific computation that needs to be done by worker $k$ during the iterations, for each algorithm. Under the data parallelism setting, worker $k$ computes the following gradient:
\begin{align}
\nabla f_k(w) = X^\top S_k^\top S_k (Xw - y) \overset{\aaaa}{=} \wtild X_k^\top S_k^\top S_k (\wtild X_k w - \wtild y_k) \label{eq:gradient_data}
\end{align}
where (a) follows since the rows of $X$ that are not in $B_{\mathcal{I}_k}$ get multiplied by zero vector. Note that the last expression can be computed without any matrix-matrix multiplication. This gives a natural storage and computation scheme for the workers. Instead of computing $S_k X$ offline and storing it, which can result in a loss of sparsity in the data, worker $k$ can store $\wtild X_k$ in uncoded form, and compute the gradient through \eqref{eq:gradient_data} whenever needed, using only matrix-vector multiplications. Since $S_k$ is sparse, the overhead associated with multiplications of the form $S_k v$ and $S_k^\top v$ is small.

Similarly, under model parallelism, the computation required by worker $k$ is
\begin{align}
\nabla_k \wtild g(v) = S_k X^\top \nabla_k \phi \lp XS^\top_k v_k + \wtild z_k\rp = S_k \wtild X_k^\top \nabla_k \phi \lp \wtild X_k S^\top_k v_k + \wtild z_k\rp, \label{eq:gradient_model}
\end{align}
and as in the data parallelism case, the worker can store $\wtild X_k$ uncoded, and compute \eqref{eq:gradient_model} online through matrix-vector multiplications. 

\paragraph{Example: Steiner ETF.}  We illustrate the described technique through Steiner ETF, based on the construction proposed in \cite{FickusMixon_12}, using $(2,2,v)$-Steiner systems. Let $v$ be a power of 2, let $H \in \mathbb{R}^{v \times v}$ be a real Hadamard matrix, and let $h_i$ be the $i$th column of $H$, for $i=1,\dots,v$. Consider the matrix $V \in \lbp 0,1\rbp^{v \times v(v-1)/2}$, where each column is the incidence vector of a distinct two-element subset of $\lbp 1,\dots,v\rbp$. For instance, for $v=4$,
\begin{align*}
V = \lb
\begin{array}{cccccc}
1&1&1&0&0&0\\
1&0&0&1&1&0\\
0&1&0&1&0&1\\
0&0&1&0&1&1
\end{array}
\rb.
\end{align*}
Note that each of the $v$ rows have exactly $v-1$ non-zero elements. We construct Steiner ETF $S$ as a $v^2 \times \frac{v(v-1)}{2}$ matrix by replacing each 1 in a row with a distinct column of $H$, and normalizing by $\sqrt{v-1}$. For instance, for the above example, we have
\begin{align*}
S = \frac{1}{\sqrt{3}} \lb 
\begin{array}{cccccc}
h_2&h_3&h_4&0&0&0\\
h_2&0&0&h_3&h_4&0\\
0&h_2&0&h_3&0&h_4\\
0&0&h_2&0&h_3&h_4
\end{array}
\rb.
\end{align*}
We will call a set of rows of $S$ that arises from the same row of $V$ a block. In general, this procedure results in a matrix $S$ with redundancy factor $\beta = \frac{2v}{v-1}$. In full generality, Steiner ETFs can be constructed for larger redundancy levels; we refer the reader to \cite{FickusMixon_12} for a full discussion of these constructions. 

We partition the rows of the $V$ matrix into $m$ machines, so that each machine gets assigned $\frac{v}{m}$ rows of $V$, and thus the corresponding $\frac{v}{m}$ blocks of $S$. %Let us denote the block indices that get assigned to node $k$ with $\mathcal{J}_k$, and the $j$th block at the $k$th machine with $S_k^{(j)}$, so that $S_k = \lb S_k^{(j)}\rb_{j \in \mathcal{J}_k}$.

%Recall that the encoding matrix is partitioned as $S = \lb S_1^\top \; \dots \; S_m^\top\rb^\top$ across the machines. Let us further decompose each block $S_k$ as $S_k = \lb S_k^{(1)} \; \dots \; S_k^{\lp \frac{v}{m}\rp}\rb$, so that the sub-block $S_k^{(j)}$ corresponds to 
This construction and partitioning scheme is particularly attractive for our purposes for two reasons. First, it is easy to see that for any node $k$, $\left| B_{\mathcal{I}_k}\right|$ is upper bounded by $\frac{v(v-1)}{m} = \frac{2n}{m}$, which means the memory overhead compared to the uncoded case is limited to a factor\footnote{In practice, we have observed that the convergence performance improves when the blocks are broken into multiple machines, so one can, for instance, assign half-blocks to each machine.} of $\beta$. Second, each block of $S_k$ consists of (almost) a Hadamard matrix, so the multiplication $S_k v$ can be efficiently implemented through Fast Walsh-Hadamard Transform.

\paragraph{Example: Haar matrix.} Another possible choice of sparse matrix is column-subsampled Haar matrix, which is defined recursively by
\begin{align*}
H_{2n} = \frac{1}{\sqrt{2}}\lb \begin{array}{c} H_{n} \otimes \lb 1 \; 1\rb \\  I_{n} \otimes \lb 1 \; -1\rb\end{array}\rb, \;\; H_1 = 1,
\end{align*}
where $\otimes$ denotes Kronecker product. Given a redundancy level $\beta$, one can obtain $S$ by randomly sampling $\frac{n}{\beta}$ columns of $H_n$. It can be shown that in this case, we have $\left| B_{\mathcal{I}_k}\right| \leq \frac{\beta n \log(n)}{m}$, and hence encoding with Haar matrix incurs a memory cost by logarithmic factor.

\subsubsection{Fast transforms}\label{subsec:encdist2:efficient:fast}

Another computationally efficient method for encoding is to use fast transforms: Fast Fourier Transform (FFT), if $S$ is chosen as a subsampled DFT matrix, and the Fast Walsh-Hadamard Transform (FWHT), if $S$ is chosen as a subsampled real Hadamard matrix. In particular, one can insert rows of zeroes at random locations into the data pair $(X,y)$, and then take the FFT or FWHT of each column of the augmented matrix. This is equivalent to a randomized Fourier or Hadamard ensemble, which is known to satisfy the RIP with high probability by \cite{CandesTao_06}. However, such transforms do not have the memory advantages of the sparse matrices, and thus they are more useful for the setting where the dataset is dense, and the encoding is done offline.

\subsection{Cost of encoding}\label{subsec:encdist2:efficient:cost}
Since encoding increases the problem dimensions, it clearly comes with the cost of increased space complexity. The memory and storage requirement of the optimization still increases by a factor of 2, if the encoding is done offline (for dense datasets), or if the techniques described in the previous subsection are applied (for sparse datasets)\footnote{Note that the increase in space complexity is not higher for sparse matrices, since the sparsity loss can be avoided using the technique described in Section~\ref{subsec:sparse}}. Note that the added redundancy can come by increasing the amount of effective data points per machine, by increasing the number of machines while keeping the load per machine constant, or a combination of the two. In the first case, the computational load per machine increases by a factor of $\beta$. Although this can make a difference if the system is bottlenecked by the computation time, distributed computing systems are typically communication-limited, and thus we do not expect this additional cost to dominate the speed-up from the mitigation of stragglers.

%One can also consider Haar transform as a method of efficient encoding. Haar transform has two possible advantages. First, it allows linear time encoding, unlike FFT and FWHT. Second, due to its sparsity, it enables the efficient distributed implementation described in the previous subsection. We will explore the performance of Haar matrix for encoding in the next section.

%\subsection{System-level design}
%In this section, we explain the communication and interruption procedures in slightly more detail. Although there can be many ways to address such system-level issues 

% keep brief, don't go into detail. just clarification on interruption (when and how?).

\section{Numerical Results}\label{sec:numerical}

We implement the proposed technique on four problems: ridge regression, matrix factorization, logistic regression, and LASSO.

\subsection{Ridge regression}
We generate the elements of matrix $X$ i.i.d. $\sim N(0,1)$, and the elements of $y$ are generated from $X$ and an i.i.d. $N(0,1)$ parameter vector $w^*$, through a linear model with Gaussian noise, for dimensions $(n,p)=(4096, 6000)$. We solve the problem $\min_w \frac{1}{2 n}\left\| S \lp Xw - y\rp \right\|^2 + \frac{\lambda}{2} \| w\|^2$,
for regularization parameter $\lambda=0.05$. We evaluate column-subsampled Hadamard matrix with redundancy $\beta=2$ (encoded using FWHT), replication and uncoded schemes. We implement distributed L-BFGS as described in Section~\ref{sec:convergence} on an Amazon EC2 cluster using \texttt{mpi4py} Python package, over $m=32$ \texttt{m1.small} instances as worker nodes, and a single \texttt{c3.8xlarge} instance as the central server.

\begin{figure}
\centering
  \includegraphics[scale=0.38]{./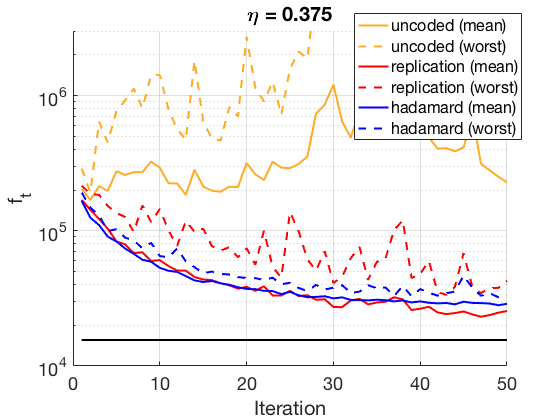}
  \includegraphics[scale=0.38]{./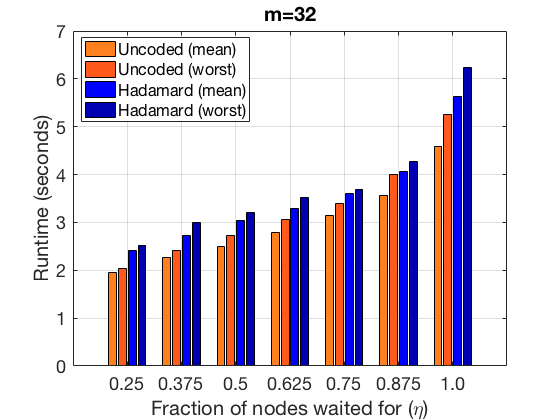}
  \caption{{\bf Left:} Sample evolution of uncoded, replication, and Hadamard (FWHT)-coded cases, for $k=12$, $m=32$. {\bf Right:} Runtimes of the schemes for different values of $\eta$, for the same number of iterations for each scheme. Note that this essentially captures the delay profile of the network, and does not reflect the relative convergence rates of different methods.}
  \label{fig:ec2}
\end{figure}

Figure~\ref{fig:ec2} shows the result of our experiments, which are aggregated from 20 trials. In addition to uncoded scheme, we consider data replication, where each uncoded partition is replicated $\beta=2$ times across nodes, and the server discards the duplicate copies of a partition, if received in an iteration. It can be seen that for low $\eta$, uncoded L-BFGS may not converge when a fixed number of nodes are waited for, whereas the Hadamard-coded case stably converges. We also observe that the data replication scheme converges on average, but its performance may deteriorate if both copies of a partition are delayed. Figure~\ref{fig:ec2} suggests that this performance can be achieved with an approximately $40\%$ reduction in the runtime, compared to waiting for all the nodes.

\begin{figure}
\centering
\begin{minipage}{.64\textwidth}
  \centering
    \includegraphics[width=1.85in]{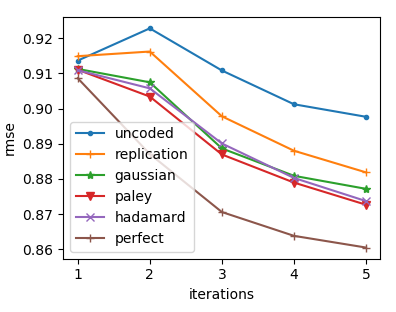}
    \includegraphics[width=1.85in]{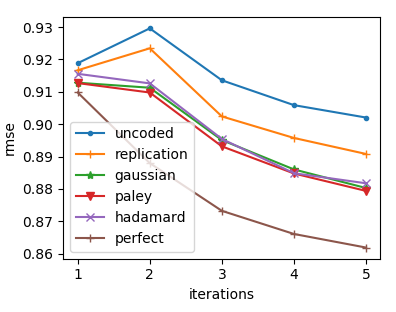}
    \includegraphics[width=1.85in]{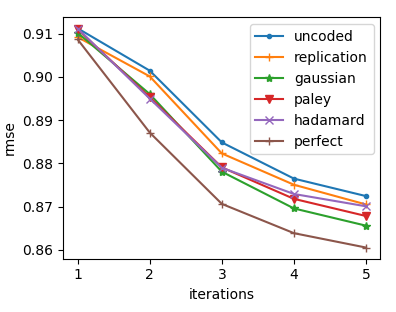}
    \includegraphics[width=1.85in]{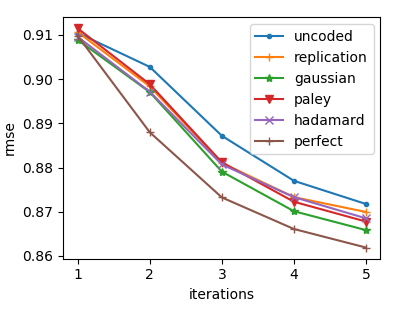}
  \caption{Test RMSE for $m = 8$ (left) and $m = 24$ (right) nodes, where the server waits for $k = m / 8$ (top) and $k = m/2$ (bottom) responses. ``Perfect" refers to the case where $k=m$.}
  \label{fig:movielens_perf}
\end{minipage}\hfill
\begin{minipage}{.33\textwidth}
\vfill
  \centering
    \includegraphics[width=1.8in]{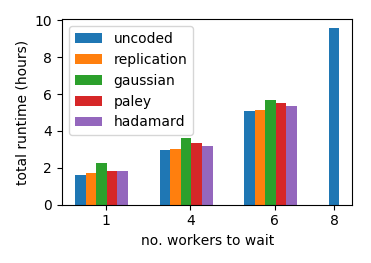}
    \includegraphics[width=1.8in]{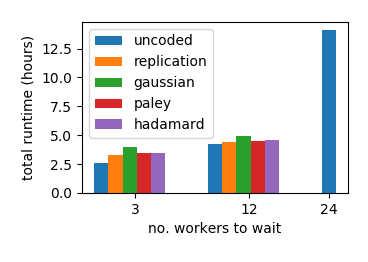}
    \caption{Total runtime with $m=8$ and $m=24$ nodes for different values of $k$, under fixed 100 iterations for each scheme.}
  \label{fig:movielens_time}
\end{minipage}
\end{figure}

\subsection{Matrix factorization}
We next apply matrix factorization on the MovieLens-1M dataset (\cite{RiedlKonstan_98}) for the movie recommendation task. We are given $R$, a sparse matrix of movie ratings 1--5, of dimension $\# users \times \# movies$, where $R_{ij}$ is specified if user $i$ has rated movie $j$. We withhold randomly 20\% of these ratings to form an 80/20 train/test split. 
The goal is to recover user vectors $x_i\in \mathbb{R}^p$ and movie vectors $y_i\in \mathbb{R}^p$ (where $p$ is the embedding dimension) such that $R_{ij} \approx  x_i^Ty_j + u_i + v_j +b$, where $u_i$, $v_j$, and $b$ are user, movie, and global biases, respectively. The optimization problem is given by
\begin{equation}
\min_{x_i, y_j, u_i, v_j} \sum_{i,j \text{: observed}}(R_{ij} - u_i - v_j - x_i^Ty_j-b)^2 + \lambda \left(\sum_i \|x_i\|_2^2 +\|u\|_2^2 + \sum_j\|y_j\|_2^2 + \|v\|_2^2\right).
\label{eq:movielens_full}
\end{equation}
We choose $b = 3$, $p = 15$, and $\lambda = 10$, which achieves test RMSE 0.861, close to the current best test RMSE on this dataset using matrix factorization\footnote{\texttt{http://www.mymedialite.net/examples/datasets.html}}. 

Problem \eqref{eq:movielens_full} is often solved using alternating minimization, minimizing first over all $\lp x_i, u_i\rp$, and then all $\lp y_j, v_j\rp$, in repetition. Each such step further decomposes by row and column, made smaller by the sparsity of $R$. To solve for $\lp x_i, u_i\rp$, we first extract $I_i = \{j \mid r_{ij} \text{ is observed}\}$, and minimize 
\begin{equation}
\left(\bmat y_{I_i}^T, \mb 1 \emat
\bmat x_{i} \\ u_i \emat -
(R_{i,I_i}^T - v_{I_i} - b \mb 1)\right)^2 + \lambda \lp\sum_i \|x_i\|_2^2 +\|u\|_2^2\rp
\label{eq:movielens_step}
\end{equation}
for each $i$, which gives a sequence of regularized least squares problems with variable $w = [x_i^T, u_i]^T$, which we solve distributedly using coded L-BFGS; and repeat for $w = [y_j^T, v_j]^T$, for all $j$.

The Movielens experiment is run on a single 32-core machine
with Linux 4.4.
In order to simulate network latency, an artificial delay of $\Delta \sim \text{exp}(\text{10 ms})$ is imposed each time the worker completes a task. Small problem instances ($n< 500$) are solved locally at the central server, using the built-in function \texttt{numpy.linalg.solve}.
To reduce overhead, we create a bank of encoding matrices $\lbp S_n\rbp$ for Paley ETF and Hadamard ETF, for $n=100, 200, \hdots, 3500$, and then given a problem instance, subsample the columns of the appropriate matrix $S_n$ to match the dimensions.
Overall, we observe that encoding overhead is amortized by the speed-up of the distributed optimization.

Figure \ref{fig:movielens_perf} gives the final performance of our distributed L-BFGS for various encoding schemes, for each of the 5 epochs, which shows that coded schemes are most robust for small $k$.
A full table of results is given in Appendix~\ref{ap:movielens}.

\subsection{Logistic regression}
In our next experiment, we apply logistic regression for document classification for Reuters Corpus Volume 1 (\texttt{rcv1.binary}) dataset  from \cite{LewisYang_04}, where we consider the binary task of classifying the documents into corporate/industrial/economics vs. government/social/markets topics. The dataset has 697,641 documents, and 47,250 term frequency-inverse document frequency (tf-idf) features. We randomly select 32,500 features for the experiment, and reserve 100,000 documents for the test set. We use logistic regression with $\ell_2$-regularization for the classification task, with the objective
\begin{align*}
\min_{w,b} \frac{1}{n} \sum_{i=1}^n \log \lp 1 + \exp\lbp -z_i^\top w + b\rbp\rp + \lambda \| w\|^2,
\end{align*}
where $z_i = y_i x_i$ is the data sample $x_i$ multiplied by the label $y_i \in \lbp -1,1\rbp$, and $b$ is the bias variable. We solve this optimization using encoded distributed block coordinate descent as described in Section~\ref{sec:encoded}, and implement Steiner and Haar encoding as described in Section~\ref{sec:implementation}, with redundancy $\beta = 2$. In addition we implement the asynchronous coordinate descent, as well as replication, which represents the case where each partition $Z_i$ is replicated across two nodes, and the faster copy is used in each iteration. We use $m=128$ \texttt{t2.medium} instances as worker nodes, and a single \texttt{c3.4xlarge} instance as the master node, which communicate using the \texttt{mpi4py} package. We consider two models for stragglers. In the first model, at each node, we add a random delay drawn from a Gaussian mixture distribution $q \mathcal{N}(\mu_1, \sigma_1^2) + (1-q)\mathcal{N}(\mu_2, \sigma_2^2)$, where $q=0.5$, $\mu_1 = 0.5$s, $\mu_2 = 20$s, $\sigma_1=0.2$s, $\sigma_2=5$s. In the second model, we do not directly add any delay, but at each machine we launch a number of dummy background tasks (matrix multiplication) that are executed throughout the computation. The number of background tasks across the nodes is distributed according to a power law with exponent $\alpha = 1.5$. The number of background tasks launched is capped at 50.

\begin{figure}
\centering
\begin{minipage}{.48\textwidth}
  \centering
    \includegraphics[width=3in]{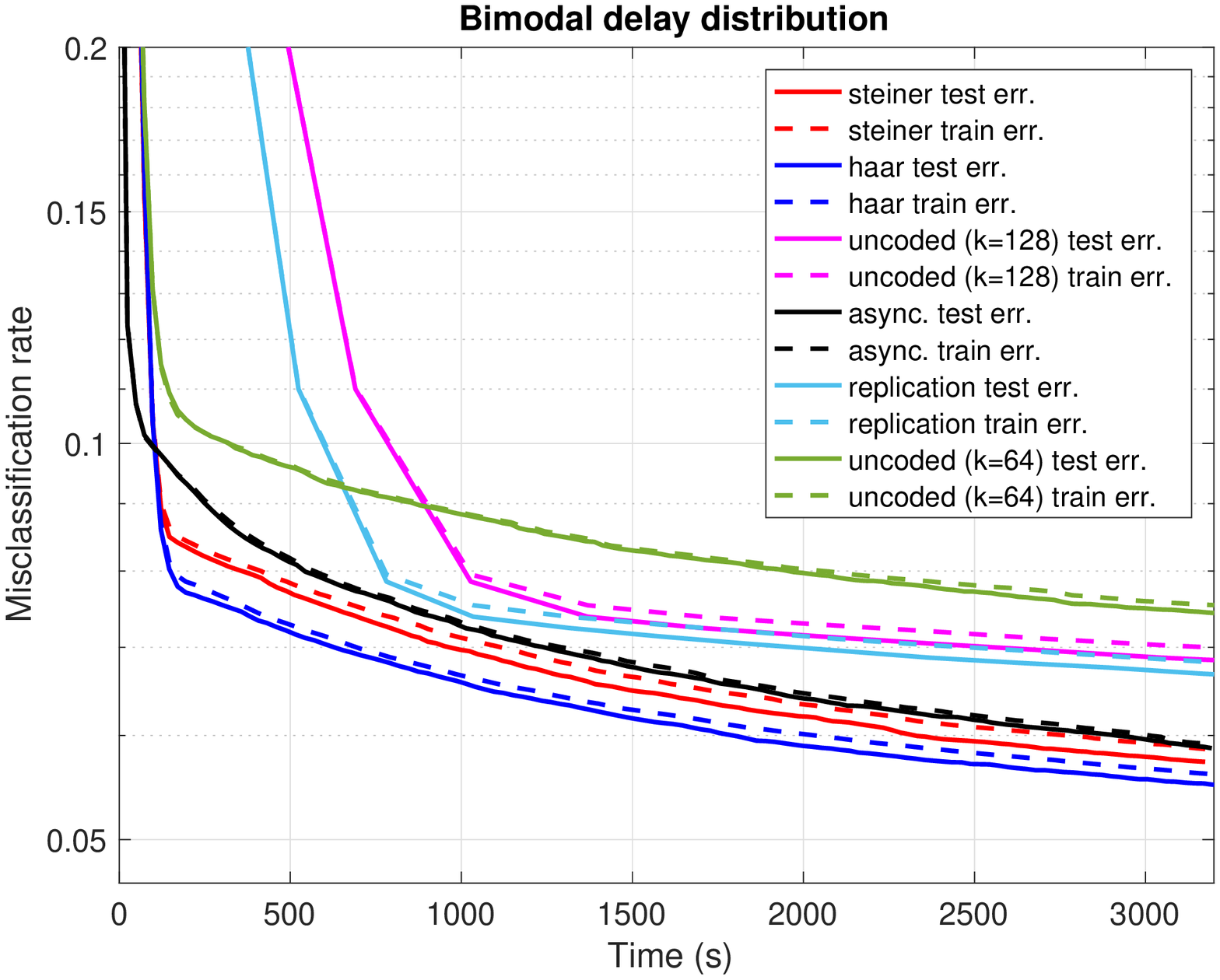}
  \caption{Test and train errors over time (in seconds) for each scheme, for the bimodal delay distribution. Steiner and Haar encoding is done with $k=64$, $\beta=2$.}
  \label{fig:bimodal}
\end{minipage}\hfill
\begin{minipage}{.48\textwidth}
\vfill
  \centering
    \includegraphics[width=2.85in]{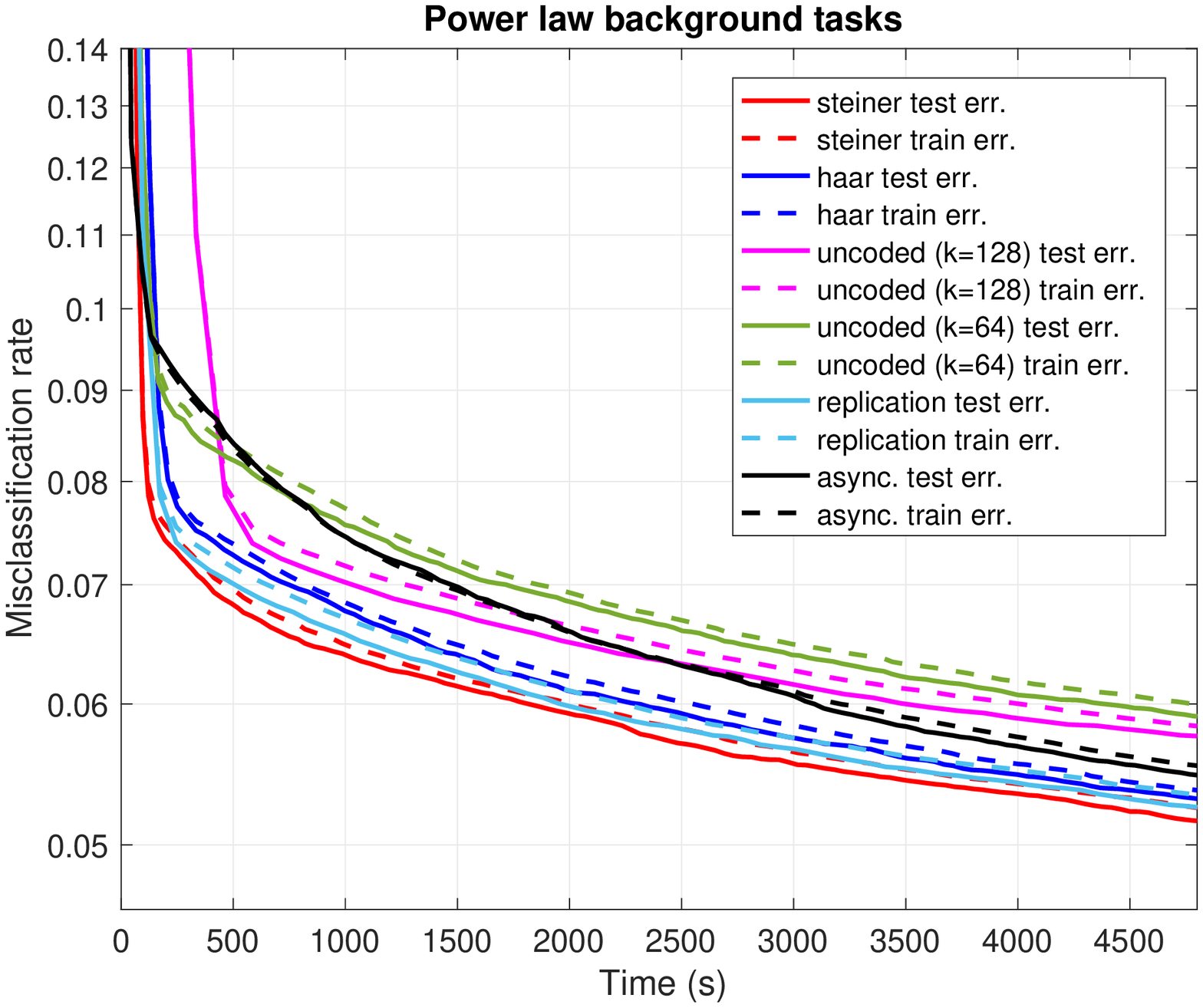}
    \caption{Test and train errors over time (in seconds) for each scheme. Number of background tasks follow a power law. Steiner and Haar encoding is done with $k=80$, $\beta=2$.}
  \label{fig:powerlaw}
\end{minipage}
\end{figure}

\begin{figure}
\centering
\begin{minipage}{.46\textwidth}
  \centering
    \includegraphics[width=2.6in]{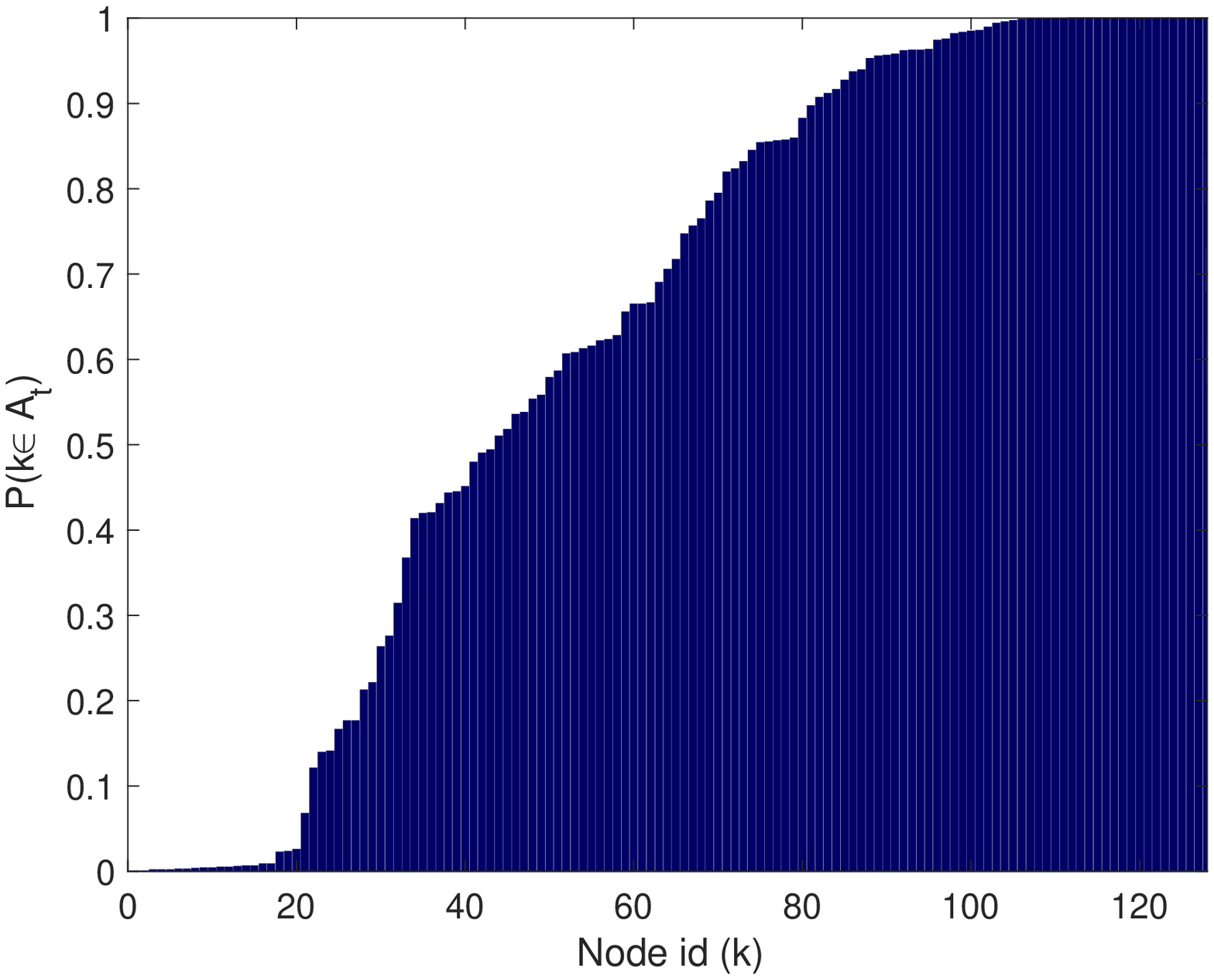}
  \caption{The fraction of iterations each worker node participates in (the empirical probability of the event $\{k \in A_t\}$), plotted for Steiner encoding with $k=80$, $m=128$. The number of background tasks are distributed by a power law with $\alpha = 1.5$ (capped at 50).}
  \label{fig:stat_steiner}
\end{minipage}\hfill
\begin{minipage}{.46\textwidth}
\vfill
  \centering
    \includegraphics[width=2.6in]{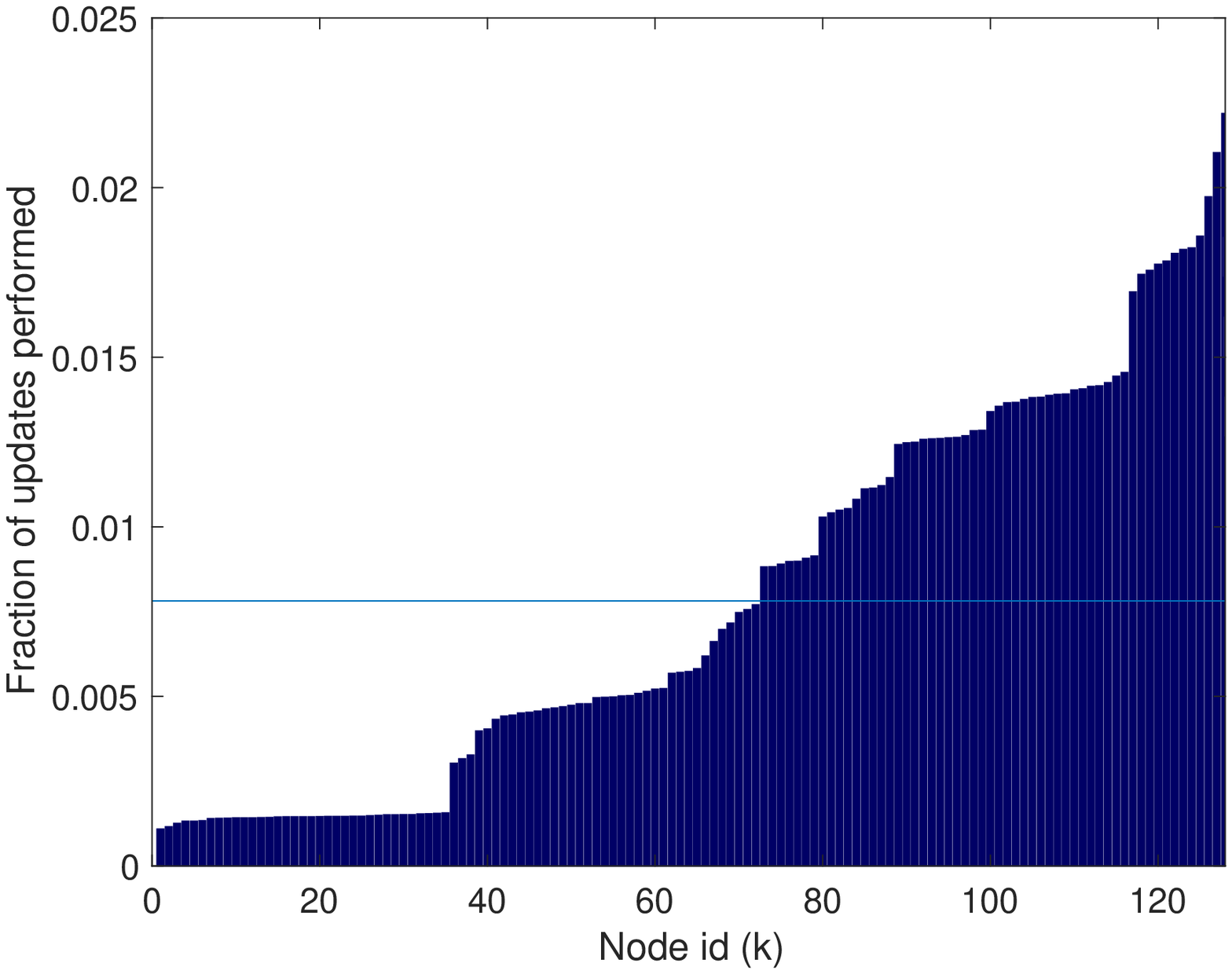}
    \caption{The fraction of updates performed by each node, for asynchronous block coordinate descent. The horizontal line represents the uniformly distributed case. The number of background tasks are distributed by a power law with $\alpha = 1.5$ (capped at 50).}
  \label{fig:stat_async}
\end{minipage}
\end{figure}

Figures~\ref{fig:bimodal} and \ref{fig:powerlaw} shows the evolution of training and test errors as a function of wall clock time. We observe that for each straggler model, either Steiner or Haar encoded optimization dominates all schemes.
Figures~\ref{fig:stat_steiner} and \ref{fig:stat_async} show the statistics of how frequent each node participates in an update, for the case with background tasks, for encoded and asynchronous cases, respectively. We observe that the stark difference in the relative speeds of different machines result in vastly different update frequencies for the asynchronous case, which results in updates with large delays, and a corresponding performance loss.

\subsection{LASSO}
We solve the LASSO problem, with the objective
\begin{align*}
\min_w \frac{1}{2n} \| Xw - y\|^2 + \lambda \| w\|_1^2,
\end{align*}
where $X \in \mathbb{R}^{130,000 \times 100,000}$ is a matrix with i.i.d. $N(0,1)$ entries, and $y$ is generated from $X$ and a parameter vector $w^*$ through a linear model with Gaussian noise:
\begin{align*}
y = Xw^* + \sigma z,
\end{align*}
where $\sigma = 40$, $z \sim N(0,1)$. The parameter vector $w^*$ has 7695 non-zero entries out of 100,000, where the non-zero entries are generated i.i.d. from $N(0,4)$. We choose $\lambda = 0.6$ and consider the sparsity recovery performance of the corresponding LASSO problem, solved using proximal gradient (iterative shrinkage/thresholding algorithm).

We implement the algorithm over 128 \texttt{t2.medium} worker nodes which collectively store the matrix $X$, and a \texttt{c3.4xlarge} master node. We measure the sparsity recovery performance of the solution using the F1 score, defined as the harmonic mean
\begin{align*}
F1 = \frac{2PR}{P+R},
\end{align*}
where $P$ and $R$ are precision recall of the solution vector $\hat w$ respectively, defined as
\begin{align*}
P = \frac{\left| \lbp i: w^*_i \neq 0, \hat w_i \neq 0\rbp\right|}{\left| i: \hat w_i \neq 0 \right|}, \;\; R = \frac{\left| \lbp i: w^*_i \neq 0, \hat w_i \neq 0\rbp\right|}{\left| i: w^*_i \neq 0 \right|}
\end{align*}.

\begin{figure}
\centering
\includegraphics[scale=0.4]{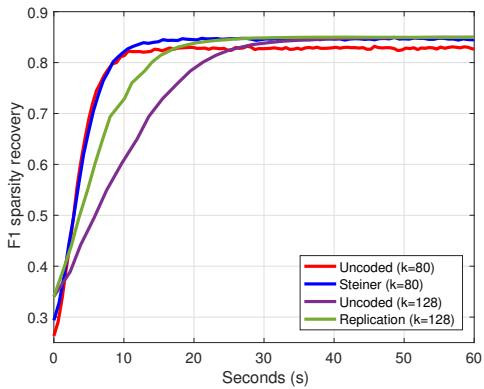}
\caption{Evolution of F1 sparsity recovery performance for each scheme.}
\label{fig:lasso}
\end{figure}
Figure~\ref{fig:lasso} shows the sample evolution of the F1 score of the model under uncoded, replication, and Steiner encoded scenarios, with artificial multi-modal communication delay distribution $q_1 \mathcal{N}(\mu_1, \sigma_1^2) + q_2\mathcal{N}(\mu_2, \sigma_2^2) + q_3\mathcal{N}(\mu_3, \sigma_3^2)$, where $q_1 = 0.8$, $q_2=0.1$, $q_3 = 0.1$; $\mu_1 = 0.2$s, $\mu_2 = 0.6$s, $\mu_3 = 1$s; and $\sigma_1= 0.1$s, $\sigma = 0.2$s, $\sigma_3 = 0.4$s, independently at each node.
We observe that the uncoded case $k=80$ results in a performance loss in sparsity recovery due to data dropped from delayed noes, and uncoded and replication with $k=128$ converges slow due to stragglers, while Steiner coding with $k=80$ is not delayed by stragglers, while maintaining almost the same sparsity recovery performance as the solution of the uncoded $k=128$ case.

\acks{The work of Can Karakus and Suhas Diggavi was supported in part by NSF
grants \#1314937 and \#1514531. The work of Wotao Yin was supported by ONR Grant N000141712162, and NSF Grant DMS-1720237.}

%\newpage

% organize all proofs

\appendix

\section{Proofs of Theorems~\ref{th:gd} and \ref{th:lbfgs}}\label{ap:gdlbfgs}
In the proofs, we will ignore the normalization constants on the objective functions for brevity. We will assume the normalization $\frac{1}{\sqrt{\eta}}$ is absorbed into the encoding matrix $S_A$. Let $\wtild f^A_t := \| S_{A_t} \lp X w_t - y \rp \|^2 + \lambda h(w)$, and $\wtild f^A(w) := \| S_{A_t} \lp X w - y \rp\|^2 + \lambda h(w)$, where we set $A \equiv A_t$. Let $\wtild w_t^*$ denote the solution to the effective ``instantaneous" problem at iteration $t$, \emph{i.e.}, $\wtild w_t^* = \argmin_w \wtild f^A (w)$.

Throughout this appendix, we will also denote
\begin{align*}
w^*&= \argmin_{w} \| Xw - y\|^2 + \lambda h(w) \\
\hat w &= \argmin_{w} \| S_A\lp Xw - y \rp\|^2 + \lambda h(w)
\end{align*}
unless otherwise noted, where $A$ is a fixed subset of $[m]$.

\subsection{Lemmas}

\begin{lemma}\label{lem:solution_ball}
 If $S$ satisfies \eqref{eq:rip} for any $A \subseteq [m]$ with $\left| A \right| \geq k$, for any convex set $C$,
\begin{align*}
    \| X\hat w - y\|^2 \leq \kappa^2 \| X w^* - y\|^2,
\end{align*}
where $\kappa = \frac{1+\epsilon}{1-\epsilon}$, $\hat w = \argmin_{w \in C} \| S_A\lp Xw - y\rp\|^2$, and $w^* = \argmin_{w \in C} \| Xw - y\|^2$.
\end{lemma}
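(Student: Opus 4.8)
The plan is to exploit the BRIP condition to sandwich the encoded objective between two multiples of the original squared-residual, and then use the optimality of $\hat w$ against the feasible point $w^*$. Concretely, write $r(w) := Xw - y$, so $\|X\hat w - y\|^2 = \|r(\hat w)\|^2$ and similarly for $w^*$. The spectral bound \eqref{eq:rip} (with the $\tfrac{1}{\eta}$ normalization absorbed into $S_A$, as stated at the top of the appendix) says precisely that for every $u \in \mathbb{R}^n$,
\begin{align*}
(1-\epsilon)\|u\|^2 \leq \|S_A u\|^2 \leq (1+\epsilon)\|u\|^2.
\end{align*}
Applying this with $u = r(\hat w)$ on the lower side and $u = r(w^*)$ on the upper side gives
\begin{align*}
(1-\epsilon)\|r(\hat w)\|^2 \leq \|S_A r(\hat w)\|^2 \quad\text{and}\quad \|S_A r(w^*)\|^2 \leq (1+\epsilon)\|r(w^*)\|^2.
\end{align*}

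The middle step is the optimality of $\hat w$: since $\hat w = \argmin_{w \in C}\|S_A(Xw-y)\|^2$ and $w^* \in C$ is feasible, we have $\|S_A r(\hat w)\|^2 \leq \|S_A r(w^*)\|^2$. Chaining the three inequalities yields
\begin{align*}
(1-\epsilon)\|r(\hat w)\|^2 \leq \|S_A r(\hat w)\|^2 \leq \|S_A r(w^*)\|^2 \leq (1+\epsilon)\|r(w^*)\|^2,
\end{align*}
and dividing by $(1-\epsilon)>0$ (valid since $\epsilon < 1$, which is implicit in the BRIP condition being useful) gives $\|r(\hat w)\|^2 \leq \frac{1+\epsilon}{1-\epsilon}\|r(w^*)\|^2 = \kappa \|r(w^*)\|^2$. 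This is actually the bound with $\kappa$, not $\kappa^2$; I would double-check the exponent in the statement, but a $\kappa^2$ bound follows a fortiori whenever $\kappa \geq 1$, which holds here. So the claimed inequality follows immediately, and there is essentially no obstacle — the lemma is a one-line consequence of BRIP plus a minimality comparison.

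The only subtlety worth flagging is that the lemma is stated for an arbitrary convex set $C$ (and later applied with $C$ being, e.g., a sublevel set or the whole space when there is no constraint, or incorporating the regularizer's domain), so the argument must not use any structure of $C$ beyond $w^* \in C$ and the fact that both $\argmin$'s are over the same $C$; the proof above indeed uses only that. A second point: the BRIP hypothesis is invoked for the specific set $A = A_t$ of size $\geq \eta m$, and since \eqref{eq:rip} is assumed to hold for all such $A$, it holds for this one in particular. I would write the proof in exactly the three-inequality chain above, note the $\kappa$ vs.\ $\kappa^2$ point in passing, and move on; the real work in the paper lies in the subsequent lemmas that turn this ``instantaneous'' comparison into a convergence statement.
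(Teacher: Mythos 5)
Your proof is correct, and it takes a genuinely different --- and simpler --- route than the paper's. You sandwich the encoded objective directly,
$(1-\epsilon)\| X\hat w - y\|^2 \leq \| S_A(X\hat w - y)\|^2 \leq \| S_A(Xw^*-y)\|^2 \leq (1+\epsilon)\| Xw^*-y\|^2$,
using only the two-sided BRIP bound and the minimality of $\hat w$ with $w^*$ feasible, which yields the constant $\kappa=\frac{1+\epsilon}{1-\epsilon}$. The paper instead sets $e=\hat w - w^*$, applies the triangle inequality $\| X\hat w - y\|\leq \| Xw^*-y\| + \| Xe\|$, and bounds $\| Xe\|/\| Xw^*-y\|\leq 2\epsilon/(1-\epsilon)$ by expanding the optimality inequality $\| S_A(X\hat w - y)\|^2\leq\| S_A(Xw^*-y)\|^2$, invoking the first-order optimality condition $\langle \nabla f(w^*),\hat w - w^*\rangle\geq 0$ at $w^*$ (this is the one place convexity of $C$ is actually used), introducing an optimized spectral shift $c=\frac{\lambda_{\max}+\lambda_{\min}}{2}$, and applying Cauchy--Schwarz; squaring the triangle inequality then produces $\kappa^2$. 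Your argument needs only that $w^*\in C$ and that both minimizations are over the same set --- convexity of $C$ plays no role --- and it delivers the strictly stronger constant $\kappa\leq\kappa^2$, so the stated bound follows a fortiori, as you observe. What the paper's longer computation buys is a by-product control of the residual perturbation $\| X(\hat w - w^*)\|$ itself, but that quantity is not reused elsewhere; substituting your version would in fact tighten the constants propagated through Lemma~\ref{lem:reg_solution_ball}, Lemma~\ref{lem:final_arg}, and the $\kappa^2$ factors in Theorems~\ref{th:gd} and~\ref{th:lbfgs}.
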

\begin{proof}
Define $e = \hat w - w^*$ and note that
\begin{align*}
\| X\hat w-y\| = \| X w^*-y + Xe\| \leq \| X w^*-y\| + \| Xe\| 
\end{align*}
by triangle inequality, which implies
\begin{align}
\| X\hat w-y\|^2 \leq \lp 1 + \frac{\| Xe\|}{\| X w^*-y\|}\rp^2 \| X w^*-y\|^2 = \lp 1 + \frac{\| Xe\|}{\| X w^*-y\|}\rp^2 \| Xw^* - y\|^2 \label{eq:sol_ball_main}.
\end{align}
Now, for any $c>0$, consider
\begin{align*}
\| Xe \|^2 &\leq \frac{\| S_A Xe\|^2}{1-\epsilon} \overset{\aaaa}{\leq} -2\frac{e^\top X^\top  S_A^\top S_A (Xw^*-y)}{1-\epsilon} \\
&= -2\frac{e^\top X^\top \lp S_A^\top  S_A -cI\rp (Xw^*-y)}{1-\epsilon} - \frac{2c}{1-\epsilon} e^\top X^\top  (Xw^*-y) \\
&\overset{\bbbb}{\leq} -2\frac{e^\top X^\top \lp S_A^\top S_A -cI\rp (Xw^*-y)}{1-\epsilon} \\
&\overset{\cccc}{\leq} 2\frac{ \left\| e^\top X^\top \lp cI -  S_A^\top S_A \rp \right\|}{1-\epsilon} \|Xw^*-y \| \\
&\overset{\dddd}{\leq} 2\frac{ \left\| cI - S_A^\top S_A \right\|}{1-\epsilon} \|Xw^*-y \| \|Xe \|,
\end{align*}
where (a) follows by expanding and re-arranging $\left\| S_A \lp X \hat w -  y \rp\right\|^2 \leq \left\| S_A \lp Xw^* - y\rp \right\|^2$, which is true since $\hat w$ is the minimizer of this function; (b) follows by the fact that since $\hat w \in C$, $e$ represents a feasible direction of the constrained optimization, and thus the convex optimality condition implies $\langle \nabla f(w^*), \hat w - w^* \rangle = e^\top X^\top (Xw^*-y) \geq 0$; (c) follows by Cauchy-Schwarz inequality; and (d) follows by the definition of matrix norm.

Since this is true for any $c>0$, we make the minimizing choice $c=\frac{\lambda_{\max} + \lambda_{\min}}{2}$ (where $\lambda_{\max}$ and $\lambda_{\min}$ represent the largest and smallest eigenvalues of $S_A^\top S_A$, respectively), which gives
\begin{align*}
\frac{\| Xe \|}{\| X\hat w-y\|} \leq \frac{\lambda_{\max} - \lambda_{\min}}{\lambda_{\min}} \leq \frac{2\epsilon}{1-\epsilon}.
\end{align*}
Plugging this back in \eqref{eq:sol_ball_main}, we get the desired result.
\end{proof}

% regularized solution_ball
\begin{lemma}\label{lem:reg_solution_ball}
 If $S$ satisfies \eqref{eq:rip} for any $A \subseteq [m]$ with $\left| A \right| \geq k$,
\begin{align*}
    f(\hat w) \leq \kappa^2 f(w^*),
\end{align*}
where $\kappa = \frac{1+\epsilon}{1-\epsilon}$, $\hat w = \argmin_{w} \| S_A\lp Xw - y\rp\|^2 + \lambda h(w)$, and $w^* = \argmin_w \| Xw - y\|^2 + \lambda h(w)$. 
\end{lemma}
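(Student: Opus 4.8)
The plan is to avoid the delicate error-direction analysis of Lemma~\ref{lem:solution_ball} altogether and instead exploit the fact that $\hat w$ minimizes the \emph{entire} encoded objective, regularizer included. I adopt the appendix convention that the $\frac{1}{\sqrt{\eta}}$ factor is folded into $S_A$, so that \eqref{eq:rip} reads $(1-\epsilon)\|u\|^2 \le \|S_A u\|^2 \le (1+\epsilon)\|u\|^2$ for all $u$, and I write $\wtild f^A(w) = \|S_A(Xw-y)\|^2 + \lambda h(w)$ and $f(w) = \|Xw-y\|^2 + \lambda h(w)$.

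First I would record the optimality inequality $\wtild f^A(\hat w) \le \wtild f^A(w^*)$, namely
\begin{align*}
\|S_A(X\hat w - y)\|^2 + \lambda h(\hat w) \;\le\; \|S_A(Xw^* - y)\|^2 + \lambda h(w^*).
\end{align*}
Next, I apply \eqref{eq:rip} separately to the two quadratic terms --- the lower bound to the left-hand term and the upper bound to the right-hand term --- to obtain
\begin{align*}
(1-\epsilon)\|X\hat w - y\|^2 + \lambda h(\hat w) \;\le\; (1+\epsilon)\|Xw^* - y\|^2 + \lambda h(w^*).
\end{align*}
Finally, using $h \ge 0$, $\lambda \ge 0$ and $0 \le 1-\epsilon \le 1 \le 1+\epsilon$, I lower-bound the left side by $(1-\epsilon)f(\hat w)$ and upper-bound the right side by $(1+\epsilon)f(w^*)$, which gives $f(\hat w) \le \frac{1+\epsilon}{1-\epsilon} f(w^*) = \kappa f(w^*)$; since $\epsilon \ge 0$ forces $\kappa \ge 1$, this is in particular $\le \kappa^2 f(w^*)$, as claimed.

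There is no genuine obstacle in this route: the presence of $\lambda h$ inside the objective (rather than as a constraint, as in Lemma~\ref{lem:solution_ball}) is precisely what lets the crude two-sided BRIP sandwich go through without loss. The only points needing a word of care are (i) the extended-real-valued case --- if $h(w^*) = \infty$ then $f(w^*) = \infty$ and the claim is vacuous, while otherwise the optimality inequality forces $h(\hat w) < \infty$ --- and (ii) keeping the normalization convention explicit so that \eqref{eq:rip} is invoked in the un-scaled form above. I note that this argument in fact delivers the sharper bound $f(\hat w) \le \kappa f(w^*)$; stating it as $\kappa^2 f(w^*)$ merely keeps the constant uniform with Lemma~\ref{lem:solution_ball} for use in the convergence proofs.
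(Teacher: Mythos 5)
Your proof is correct, but it takes a genuinely different and more elementary route than the paper. The paper proves this lemma by reducing the regularized problem to a family of level-set-constrained problems ($\hat w(r)$ and $w^*(r)$ minimizing over $\{w : \lambda h(w) \le r\}$), writing $f(\hat w) = \min_r \lp \|X\hat w(r)-y\|^2 + r\rp$, and then invoking Lemma~\ref{lem:solution_ball} on the constrained problem at the optimal level $r^*$ --- which is where the $\kappa^2$ and all of the delicate work (the error-direction bound, the convex optimality condition, the optimized centering constant $c$) come from. You instead sandwich the single optimality inequality $\wtild f^A(\hat w) \le \wtild f^A(w^*)$ between the two sides of the BRIP bound and absorb the nonnegative regularizer into the factors $1\mp\epsilon$; this bypasses Lemma~\ref{lem:solution_ball} entirely and yields the sharper constant $\kappa = \frac{1+\epsilon}{1-\epsilon}$ in place of $\kappa^2$, which trivially implies the stated bound since $\kappa \ge 1$ and $f \ge 0$. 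Your handling of the extended-real-valued case and the $\frac{1}{\sqrt{\eta}}$ normalization is also correct. The only thing the paper's heavier machinery buys is reuse of Lemma~\ref{lem:solution_ball}, which it needs anyway as a standalone statement for constrained problems; for this particular lemma your argument is strictly simpler and, if propagated into Lemma~\ref{lem:final_arg} and Theorems~\ref{th:gd} and \ref{th:lbfgs}, would mildly improve the constants there.
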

\begin{proof}
Consider a fixed $A_t = A$, and a corresponding 
\begin{align*}
\hat w = \wtild w_t^* \in \argmin_w \| S_A\lp Xw-y\rp \|^2 + \lambda h(w)
\end{align*}
Define
\begin{align*}
\hat w(r) &= \argmin_{w: \lambda h(w) \leq r} \| S_A\lp Xw-y\rp \|^2 \\
 w^*(r) &= \argmin_{w: \lambda h(w) \leq r} \| Xw-y \|^2.
\end{align*}
Finally, define
\begin{align*}
r^* = \argmin_r \| Xw^*(r) - y\|^2 + r.
\end{align*}
Now, consider
\begin{align*}
f(\hat w) &= \| X\hat w - y\|^2 + \lambda h(w) = \min_r \lp \| X\hat w(r) - y\|^2  + r\rp \\
&\leq \| X\hat w(r^*) - y\|^2  + r^* \overset{\aaaa}{\leq} \kappa^2 \| Xw^*(r^*) - y\|^2  + r^* \\
&\leq \kappa^2 \lp  \| Xw^*(r^*) - y\|^2  + r^*\rp = \kappa^2 f(w^*),
\end{align*}
which shows the desired result, where (a) follows by Lemma~\ref{lem:solution_ball}, and by the fact that the set $\lbp w: \lambda h(w) \leq r\rbp$ is a convex set. 
\end{proof}

% constrained solution ball

\begin{lemma}\label{lem:final_arg}
    If
    \begin{align*}
        \wtild f^A_{t+1} - \wtild f^A\lp \wtild w^*_t\rp \leq \gamma \lp \wtild f^A_{t} - \wtild f^A\lp \wtild w^*_t\rp\rp
    \end{align*}
    for all $t>0$, and for some $0<\gamma<1$, where $\wtild w^*_t \in \argmin_w \wtild f_t^A$, then
    \begin{align*}
        f(w_t) \leq \lp \kappa\gamma\rp^t f(w_0) + \frac{\kappa^2\lp \kappa-\gamma\rp}{1-\kappa\gamma}f\lp w^*\rp,
    \end{align*}
    where $\kappa = \frac{1+\epsilon}{1-\epsilon}$.
\end{lemma}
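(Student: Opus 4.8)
The plan is to convert the per-iteration contraction in the hypothesis, which is stated for the \emph{instantaneous} encoded objective $\wtild f^{A}$ whose active set $A=A_t$ changes with $t$, into a contraction for the single fixed objective $f$ by sandwiching every instantaneous objective between constant multiples of $f$. The first step is the observation that, for the active set $A_t$ used at iteration $t$ (with the $1/\sqrt\eta$ normalization absorbed into $S_{A_t}$), the BRIP condition \eqref{eq:rip} gives $(1-\epsilon)\|v\|^2 \le \|S_{A_t}v\|^2 \le (1+\epsilon)\|v\|^2$ for all $v$, and hence, since $\lambda h(w)\ge 0$,
\begin{align*}
(1-\epsilon)\, f(w) \;\le\; \wtild f^{A_t}(w) \;=\; \|S_{A_t}(Xw-y)\|^2 + \lambda h(w) \;\le\; (1+\epsilon)\, f(w)
\end{align*}
for every $w$. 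This is the only place the encoding enters, and it applies simultaneously at the iterates $w_t, w_{t+1}$ and at the reference point $w^*$.

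The second step turns the hypothesis into a one-step recursion for $f$. Rewrite the hypothesis as $\wtild f^{A_t}(w_{t+1}) \le \gamma\,\wtild f^{A_t}(w_t) + (1-\gamma)\,\wtild f^{A_t}(\wtild w^*_t)$. The left-hand side is at least $(1-\epsilon) f(w_{t+1})$; the first term on the right is at most $(1+\epsilon) f(w_t)$; and since $\wtild w^*_t$ minimizes $\wtild f^{A_t}$ we have $\wtild f^{A_t}(\wtild w^*_t) \le \wtild f^{A_t}(w^*) \le (1+\epsilon) f(w^*)$. Dividing by $1-\epsilon$ yields
\begin{align*}
f(w_{t+1}) \;\le\; \kappa\gamma\, f(w_t) + \kappa(1-\gamma)\, f(w^*), \qquad \kappa = \frac{1+\epsilon}{1-\epsilon}.
\end{align*}

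The third step is to unroll this geometric recursion from $w_0$. Since $\kappa\gamma<1$ by assumption, $\sum_{j\ge 0}(\kappa\gamma)^j = \frac{1}{1-\kappa\gamma}$, so $f(w_t) \le (\kappa\gamma)^t f(w_0) + \frac{\kappa(1-\gamma)}{1-\kappa\gamma} f(w^*)$. To recover the constant in the statement it then suffices to note the elementary inequality $\kappa(1-\gamma) \le \kappa^2(\kappa-\gamma)$, valid because $\kappa\ge 1$ and $\gamma<1$ (the difference equals $\kappa(\kappa-1)(\kappa+1-\gamma)\ge 0$), and to replace $\kappa(1-\gamma)$ by $\kappa^2(\kappa-\gamma)$ in the numerator. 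One could equally reach a bound of the stated shape by using Lemma~\ref{lem:reg_solution_ball} to write $\wtild f^{A_t}(\wtild w^*_t) \le (1+\epsilon) f(\wtild w^*_t) \le (1+\epsilon)\kappa^2 f(w^*)$.

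The only conceptual obstacle is precisely that the contracted quantity $\wtild f^{A_t}$ depends on the time-varying set $A_t$, so the contraction does not telescope by itself; the fix is to route through the common reference $f$ at every step, which costs a factor $\kappa$ each time one trades $\wtild f^{A_t}$ for $f$ or vice versa and explains the $\kappa$ appearing in both the geometric rate and the additive term. The remaining bookkeeping — the geometric sum, the scalar inequality on $\kappa,\gamma$, and matching the index range so the bound holds for the intended values of $t$ — is routine.
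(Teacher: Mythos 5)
Your proof is correct and follows the same skeleton as the paper's: sandwich the instantaneous objective via $(1-\epsilon)f(w)\le \wtild f^{A_t}(w)\le (1+\epsilon)f(w)$, convert the hypothesis into a linear recursion $f(w_{t+1})\le \kappa\gamma f(w_t)+(\text{const})\cdot f(w^*)$, and sum the geometric series. The one genuine difference is how the reference term is handled: the paper keeps $f(\wtild w^*_t)$ in the recursion and invokes Lemma~\ref{lem:reg_solution_ball} to bound $f(\wtild w^*_t)\le \kappa^2 f(w^*)$, arriving at the additive constant $\kappa^2(\kappa-\gamma)$, whereas you exploit the minimality of $\wtild w^*_t$ directly via $\wtild f^{A_t}(\wtild w^*_t)\le \wtild f^{A_t}(w^*)\le (1+\epsilon)f(w^*)$, which bypasses Lemma~\ref{lem:reg_solution_ball} entirely and yields the smaller constant $\kappa(1-\gamma)$; your algebraic check that $\kappa^2(\kappa-\gamma)-\kappa(1-\gamma)=\kappa(\kappa-1)(\kappa+1-\gamma)\ge 0$ is correct, so the stated bound follows a fortiori. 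Your route is therefore slightly cleaner and strictly tighter here; the paper's route has the advantage that Lemma~\ref{lem:reg_solution_ball} is needed elsewhere anyway and keeps the recursion in the form $f(w_{t+1})\le\kappa\gamma f(w_t)+(\kappa-\gamma)f(\wtild w^*_t)$, which isolates where the approximation error relative to the true optimum enters. One small bookkeeping point you correctly flag in passing: the hypothesis as stated holds for $t>0$ but the unrolling down to $f(w_0)$ uses it at $t=0$ as well (the paper has the same indexing slip), which is harmless.
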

\begin{proof}
    Since for any $w$,
\begin{align*}
\lp 1 - \epsilon\rp \| Xw-y\|^2 \leq \lp Xw-y\rp^\top  S_A^\top \wtild S_A \lp Xw-y\rp,
\end{align*}
we have
\begin{align*}
\lp 1 - \epsilon\rp f(w) \leq \wtild f^A(w).
\end{align*}
Similarly $\wtild f^A(w) \leq \lp 1 + \epsilon\rp f(w)$, and therefore, using the assumption of the theorem
\begin{align*}
\lp 1 - \epsilon\rp f(w_{t+1}) - \lp 1 + \epsilon\rp f\lp \wtild w_t^*\rp \leq \gamma \lp \lp 1 + \epsilon\rp f(w_t) - \lp 1 - \epsilon\rp f\lp \wtild w_t^*\rp\rp,
\end{align*}
which can be re-arranged into the linear recursive inequality
\begin{align*}
f(w_{t+1}) \leq \kappa\gamma f_t + (\kappa-\gamma) f\lp \wtild w_t^*\rp \overset{\aaaa}{\leq} \kappa\gamma f(w_t) + \kappa^2(\kappa-\gamma) f\lp w^*\rp,
\end{align*}
where $\kappa=\frac{1+\epsilon}{1-\epsilon}$ and (a) follows by Lemma~\ref{lem:reg_solution_ball}. By considering such inequalities for $0\leq \tau \leq t$, multiplying each by $\lp \kappa\gamma\rp^{t-\tau}$ and summing, we get
\begin{align*}
f(w_t) &\leq \lp \kappa \gamma\rp^t f(w_0) + \kappa^2(\kappa-\gamma)f\lp  w^*\rp\sum_{\tau=0}^{t-1} \lp \kappa\gamma\rp^{\tau}\\
&\leq \lp \kappa \gamma\rp^t f(w_0) + \frac{\kappa^2\lp \kappa-\gamma\rp}{1-\kappa\gamma}f\lp w^*\rp.
\end{align*}
\end{proof}

\begin{lemma}\label{lem:strong_conv}
Under the assumptions of Theorem~\ref{th:lbfgs}, $\wtild f^A(w)$ is $\lp 1 - \epsilon\rp(\mu+\lambda)$-strongly convex.
\end{lemma}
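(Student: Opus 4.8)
The plan is to use that $\wtild f^A$ is a \emph{quadratic} function of $w$, so that checking strong convexity amounts to a single exact second-order expansion, with the only nontrivial ingredient being the BRIP lower bound. Under the hypotheses of Theorem~\ref{th:lbfgs} the regularizer is $h(w)=\|w\|^2$, so (dropping the normalization constant and absorbing the $1/\sqrt{\eta}$ factor into $S_A$, as declared at the start of the appendix) we have $\wtild f^A(w)=\|S_A\lp Xw-y\rp\|^2+\lambda\|w\|^2$, and the BRIP condition reads $(1-\epsilon)I_n\preceq S_A^\top S_A\preceq(1+\epsilon)I_n$.

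The steps I would carry out, in order: \textbf{(i)} For the quadratic $\wtild f^A$, compute the exact remainder
\begin{align*}
\wtild f^A(w')-\wtild f^A(w)-\langle \nabla\wtild f^A(w),w'-w\rangle=\|S_A X(w'-w)\|^2+\lambda\|w'-w\|^2,
\end{align*}
which follows by expanding the squares (the cross terms cancel against the gradient term). \textbf{(ii)} Apply the BRIP lower bound $\|S_A u\|^2\geq(1-\epsilon)\|u\|^2$ with $u=X(w'-w)$ to get $\|S_A X(w'-w)\|^2\geq(1-\epsilon)\|X(w'-w)\|^2$. \textbf{(iii)} Use that $\mu$ is the smallest eigenvalue of $X^\top X$, i.e.\ $X^\top X\succeq\mu I_n$, to obtain $(1-\epsilon)\|X(w'-w)\|^2\geq(1-\epsilon)\mu\|w'-w\|^2$. \textbf{(iv)} Add the two contributions and use $\lambda\geq(1-\epsilon)\lambda$ (valid since $\lambda\geq0$, $\epsilon\geq0$):
\begin{align*}
\wtild f^A(w')-\wtild f^A(w)-\langle \nabla\wtild f^A(w),w'-w\rangle\geq\lp(1-\epsilon)\mu+\lambda\rp\|w'-w\|^2\geq(1-\epsilon)(\mu+\lambda)\|w'-w\|^2,
\end{align*}
which is precisely the asserted $(1-\epsilon)(\mu+\lambda)$-strong convexity. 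Equivalently, since $\wtild f^A$ is quadratic one can instead note $\nabla^2\wtild f^A(w)=2X^\top S_A^\top S_A X+2\lambda I_n\succeq 2(1-\epsilon)(\mu+\lambda)I_n$ for all $w$, and invoke the standard equivalence between a uniform Hessian lower bound and strong convexity.

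I do not expect any real obstacle here; the argument is essentially bookkeeping. The two points that need care are: making sure the $1/\sqrt{\eta}$ normalization is folded into $S_A$ so that the usable BRIP bound is $(1-\epsilon)I_n\preceq S_A^\top S_A$ rather than $(1-\epsilon)\eta I_n$; and using the hypothesis $\mu+\lambda>0$ (not merely $\mu\geq0$), which is what guarantees the strong-convexity constant is strictly positive even when $X$ has a nontrivial null space. The slight discrepancy between the $\frac{\nu}{2}\|x-y\|^2$ convention in the strong-convexity definition and the stated constant is harmless: the Hessian computation above shows $\wtild f^A$ is in fact $2(1-\epsilon)(\mu+\lambda)$-strongly convex under that convention, which only strengthens the claim.
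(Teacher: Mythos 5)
Your proof is correct and follows essentially the same route as the paper: both arguments reduce to the spectral lower bound $u^\top X^\top S_A^\top S_A X u \geq (1-\epsilon)\|Xu\|^2 \geq (1-\epsilon)\mu\|u\|^2$ supplied by BRIP and the minimum eigenvalue of $X^\top X$. The only cosmetic difference is that you keep the $\lambda\|w\|^2$ term explicit (obtaining the marginally stronger constant $(1-\epsilon)\mu+\lambda$ before relaxing it), whereas the paper absorbs the quadratic regularizer into an augmented data matrix so that $X^\top X\succeq(\mu+\lambda)I$ from the outset.
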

\begin{proof}
It is sufficient to show that the minimum eigenvalue of $\wtild X_A^\top \wtild X_A$ is bounded away from zero. This can easily be shown by the fact that
\begin{align*}
u^\top\wtild X_A^\top \wtild X_Au = u^\top X^\top S_A^\top S_A X u \geq \lp 1 - \epsilon\rp \| Xu\|^2 \geq \lp 1 - \epsilon\rp\mu \|u\|^2,
\end{align*}
for any unit vector $u$.
\end{proof}

\begin{lemma}\label{lem:rotation_bound}
Let $M \in \mathbb{R}^{p \times p}$ be a symmetric positive definite matrix, with the condition number (ratio of maximum eigenvalue to the minimum eigenvalue) given by $\kappa$. Then, for any unit vector $u$,
\begin{align*}
\frac{u^\top Mu}{\| Mu\|} \geq \frac{2\sqrt{\kappa}}{\kappa+1}.
\end{align*}
\end{lemma}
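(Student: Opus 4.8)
The statement to prove is Lemma~\ref{lem:rotation_bound}: for a symmetric positive definite $M \in \mathbb{R}^{p \times p}$ with condition number $\kappa = \lambda_{\max}/\lambda_{\min}$, we have $\tfrac{u^\top M u}{\|Mu\|} \geq \tfrac{2\sqrt{\kappa}}{\kappa+1}$ for every unit vector $u$.

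\medskip
\noindent\textbf{Proof proposal.} The plan is to diagonalize $M$ and reduce the inequality to a one-dimensional convexity/Cauchy--Schwarz estimate over the eigenvalue ``distribution'' of $u$. First I would write $M = \sum_i \lambda_i q_i q_i^\top$ in its spectral decomposition, and for a unit vector $u$ set $c_i = \langle u, q_i\rangle^2 \geq 0$, so that $\sum_i c_i = 1$. Then $u^\top M u = \sum_i \lambda_i c_i$ and $\|Mu\|^2 = \sum_i \lambda_i^2 c_i$. Thus the claim becomes
\begin{align*}
\frac{\sum_i \lambda_i c_i}{\sqrt{\sum_i \lambda_i^2 c_i}} \geq \frac{2\sqrt{\kappa}}{\kappa+1},
\end{align*}
i.e., $\bigl(\sum_i \lambda_i c_i\bigr)^2 \bigl(\kappa+1\bigr)^2 \geq 4\kappa \sum_i \lambda_i^2 c_i$. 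Treating $\lambda$ as a random variable taking value $\lambda_i$ with probability $c_i$, this is exactly the statement that $\bigl(\mathbb{E}\lambda\bigr)^2 / \mathbb{E}[\lambda^2] \geq \tfrac{4\kappa}{(\kappa+1)^2}$, a Kantorovich-type inequality.

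\medskip
\noindent The key step is the following elementary bound: since every eigenvalue lies in $[\lambda_{\min}, \lambda_{\max}]$, we have $(\lambda - \lambda_{\min})(\lambda_{\max} - \lambda) \geq 0$, hence $\lambda^2 \leq (\lambda_{\min}+\lambda_{\max})\lambda - \lambda_{\min}\lambda_{\max}$. Taking the $c_i$-weighted average gives $\mathbb{E}[\lambda^2] \leq (\lambda_{\min}+\lambda_{\max})\mathbb{E}\lambda - \lambda_{\min}\lambda_{\max}$. Writing $t = \mathbb{E}\lambda \in [\lambda_{\min},\lambda_{\max}]$, it then suffices to show $t^2(\kappa+1)^2 \geq 4\kappa\bigl((\lambda_{\min}+\lambda_{\max})t - \lambda_{\min}\lambda_{\max}\bigr)$, or equivalently, dividing by $\lambda_{\min}^2$ and using $\kappa = \lambda_{\max}/\lambda_{\min}$, that $s^2(\kappa+1)^2 - 4\kappa(1+\kappa)s + 4\kappa^2 \geq 0$ where $s = t/\lambda_{\min} \in [1,\kappa]$. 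This quadratic in $s$ factors as $\bigl((\kappa+1)s - 2\kappa\bigr)^2 \geq 0$, which is manifestly true. Chaining the inequalities back gives the result, and I would note that equality holds when $u$ puts mass only on $\lambda_{\min}$ and $\lambda_{\max}$ with $\mathbb{E}\lambda = 2\lambda_{\min}\lambda_{\max}/(\lambda_{\min}+\lambda_{\max})$, i.e., the harmonic-mean point.

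\medskip
\noindent The main obstacle here is really just recognizing the right reduction: once one passes to the ``spectral measure'' viewpoint and invokes the $(\lambda-\lambda_{\min})(\lambda_{\max}-\lambda)\geq 0$ trick, the rest is a perfect-square identity. An alternative route that avoids the probabilistic framing is to prove the scalar inequality $\tfrac{x}{\sqrt{x^2}}$... more precisely, to bound $u^\top M u$ below using $M + \lambda_{\min}\lambda_{\max} M^{-1} \preceq (\lambda_{\min}+\lambda_{\max}) I$ (valid since $M$ is SPD with spectrum in $[\lambda_{\min},\lambda_{\max}]$), combined with $\|Mu\| \leq$ (something)$\,\cdot u^\top M u$; but the eigenvalue-average argument above is cleaner and I would present that one.
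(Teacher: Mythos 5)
Your proof is correct. The opening reduction is the same one the paper uses: diagonalize $M$, put the squared coordinates $c_i=\langle u,q_i\rangle^2$ on the simplex, and restate the claim as a Kantorovich-type bound $\bigl(\sum_i \lambda_i c_i\bigr)^2(\kappa+1)^2\geq 4\kappa\sum_i\lambda_i^2 c_i$. Where you diverge is in how this scalar problem is finished. The paper asserts, via Lagrange multipliers, that the minimum of $\sum_i d_i y_i/\sqrt{\sum_i d_i^2 y_i}$ over the simplex is attained at a two-point distribution supported on the extreme eigenvalues with weights $\tfrac{1}{1+\kappa}$ and $\tfrac{\kappa}{1+\kappa}$, and then evaluates there; the verification that this stationary point is the global minimum (rather than a boundary artifact of the simplex) is left implicit. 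You instead use the P\'olya--Szeg\H{o}/Kantorovich device $(\lambda-\lambda_{\min})(\lambda_{\max}-\lambda)\geq 0$, which yields $\mathbb{E}[\lambda^2]\leq(\lambda_{\min}+\lambda_{\max})\mathbb{E}\lambda-\lambda_{\min}\lambda_{\max}$, and then close with the perfect-square identity $s^2(\kappa+1)^2-4\kappa(\kappa+1)s+4\kappa^2=\bigl((\kappa+1)s-2\kappa\bigr)^2\geq 0$. This is fully rigorous, avoids any optimality verification, and recovers the same equality case (two-point spectral mass with mean at the harmonic mean of the extreme eigenvalues) as a byproduct; the paper's route is shorter to state but leans on an unproved claim about the constrained minimizer. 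Both are fine for the lemma's role in bounding step (e) of the L-BFGS convergence proof.
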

\begin{proof}
We point out that this is a special case of Kantorovich inequality, but provide a dedicated proof here for completeness.

Let $M$ have the eigen-decomposition $M=Q^\top DQ$, where $Q$ has orthonormal columns, and $D$ is a diagonal matrix with positive, decreasing entries $d_1 \geq d_2 \geq \dots \geq d_n$, with $\frac{d_1}{d_n}=\kappa$. Let $y = \lp Qu \rp^{\circ 2}$, where $^{\circ 2}$ denotes entry-wise square. Then the quantity we are interested in can be represented as
\begin{align*}
\frac{\sum_{i=1}^n d_i y_i}{\sqrt{ \sum_{i=1}^n d_i^2 y_i } },
\end{align*}
which we would like to minimize subject to a simplex constraint $\mathbf{1}^\top y = 1$. Using Lagrange multipliers, it can be seen that the minimum is attained where $y_1 = \frac{1}{1+\kappa}$, $y_n = \frac{\kappa}{1+\kappa}$, and $y_i = 0$ for $i \neq 1,n$. Plugging this back the objective, we get the desired result
\begin{align*}
\frac{u^\top Mu}{\| Mu\|} \geq \frac{2\sqrt{\kappa}}{\kappa+1}.
\end{align*}
\end{proof}

\begin{proof}
%[Proof of Lemma~\ref{lem:hessian_stability}]
[Proof of Lemma~1]
Define $\breve S_t := S_{A_t \cap A_{t-1}}$. First note that
\begin{align}
r_t^\top u_t &= \lp X^\top \breve S_t^\top \breve S_t \lb (Xw_t - y) - (Xw_{t-1} - y)\rb\rp^\top \lp w_t - w_{t-1}\rp \notag\\
& = \lp w_t - w_{t-1}\rp^\top X^\top \breve S_t^\top \breve S_t X\lp w - w_{t-1}\rp \notag\\
&\geq \delta \mu \|u_t\|^2, \label{eq:bd1}
\end{align}
by 
%\eqref{eq:overlap}. 
(5)
Also consider
\begin{align*}
\frac{\|r_t \|^2}{r_t^\top u_t} = \frac{\lp w_t - w_{t-1}\rp^\top \lp X^\top \breve S_t^\top \breve S_t X \rp^2\lp w_t - w_{t-1}\rp }{\lp w_t - w_{t-1}\rp^\top X^\top \breve S_t^\top \breve S_t X\lp w_t - w_{t-1}\rp},
\end{align*}
which implies
\begin{align*}
\epsilon \mu \leq \frac{\|r_t \|^2}{r_t^\top u_t}  \leq (1+\epsilon) M,
\end{align*}
again by 
(4).
%\eqref{eq:rip}.
Now, setting $j_\ell = t - \wtild \sigma + \ell$, consider the trace
\begin{align*}
\trace{B_t^{(\ell+1)}} &= \trace{ B_t^{(\ell)}} - \trace{\frac{B_t^{(\ell)}u_{j_\ell} u_{j_\ell}^\top B_t^{(\ell)} }{u_{j_\ell}^\top B_t^{(\ell)} u_{j_\ell}}} + \trace{\frac{r_{j_\ell} r_{j_\ell}^\top}{r_{j_\ell}^\top u_{j_\ell}} } \\
&\leq \trace{ B_t^{(\ell)}} +  \trace{\frac{r_{j_\ell} r_{j_\ell}^\top}{r_{j_\ell}^\top u_{j_\ell}} } \\
&=  \trace{ B_t^{(\ell)}} +\frac{\|r_{j_\ell} \|^2}{r_{j_\ell}^\top u_{j_\ell}} \\
&\leq  \trace{ B_t^{(\ell)}} +(1+\epsilon) M,
\end{align*}
which implies $\trace{B_t} \leq (1+\epsilon) M \lp \wtild \sigma + d\rp$. It can also be shown (similar to \cite{BerahasNocedal_16}) that
\begin{align*}
\det\lp B_t^{(\ell+1)}  \rp &=  \det\lp B_t^{(\ell)}\rp \cdot\frac{r^\top_{j_\ell} u_{j_\ell}}{u_{j_\ell}^\top B_t^{(\ell)} u_{j_\ell}} \\
&= \det\lp B_t^{(\ell)}\rp \cdot \frac{r^\top_{j_\ell} u_{j_\ell}}{\| u_{j_\ell}\|^2} \cdot \frac{\| u_{j_\ell}\|^2}{u_{j_\ell}^\top B_t^{(\ell)} u_{j_\ell}} \\
&\geq  \det\lp B_t^{(\ell)}\rp \frac{\delta \mu}{(1+\epsilon) M \lp \wtild \sigma + d\rp},
\end{align*}
which implies $\det\lp B_t \rp \geq \det\lp B_t^{(0)}\rp\lp\frac{\delta \mu}{(1+\epsilon) M \lp \wtild \sigma + d\rp}\rp^{\wtild \sigma}$. Since $B_t \geq 0$, its trace is bounded above, and its determinant is bounded away from zero, there must exist $0 < c_1 \leq c_2$ such that
\begin{align*}
c_1 I \preceq B_t \preceq c_2 I.
\end{align*}
\end{proof}

\subsection{Proof of Theorem~\ref{th:gd}}
The proof of the first part of the theorem is a special case of the proof of Theorem~\ref{th:prox} (with $\lambda=0$, and the smooth regularizer incorporated into $p(w)$) and thus we omit this proof and refer the reader to Appendix~\ref{ap:prox}. We prove the second part here.

Note that because of the condition in \eqref{eq:rip}, we have
\begin{align*}
(1-\epsilon) \preceq S_A^\top S_A \preceq (1+\epsilon) I, \\
(1-\epsilon) \preceq S_D^\top S_D \preceq (1+\epsilon) I.
\end{align*}
Using smoothness of the objective, and the choices $d_t =- \nabla \wtild f^A(w_t) (w_t)$ and $\alpha_t = \alpha$, we have
\begin{align*}
&\wtild f^A\lp w_{t+1}\rp - \wtild f^A(w_t) \leq \alpha \nabla \wtild f^A(w_t)(w_t)^\top d_t + \frac{1}{2} \alpha^2 d_t^\top X^\top S_A^\top S_A X d_t + \frac{L}{2} \alpha^2\| d_t\|^2 \\
&\leq -\alpha \lp 1 - \frac{(1+\epsilon)M + L}{2} \alpha \rp \left\|\nabla \wtild f^A(w_t)\right \|^2 = -\frac{2\zeta \lp 1 - \zeta\rp}{(1+\epsilon)M + L}  \|\nabla \wtild f^A(w_t) \|^2 \\
&\overset{\aaaa}{\leq} -\frac{4\nu \zeta \lp 1 - \zeta\rp}{M\lp 1 + \epsilon\rp + L} \lp \wtild f^A\lp w_t\rp - \wtild f^A\lp \wtild w_t^*\rp\rp,
\end{align*}
where (a) follows by strong convexity.
Re-arranging this inequality, and using the definition of $\gamma$, we get
\begin{align*}
    \wtild f^A_{t+1} - \wtild f^A\lp \wtild w_t^*\rp \leq \gamma \lp \wtild f^A_t - \wtild f^A\lp \wtild w_t^*\rp\rp,
\end{align*}
which, using Lemma~\ref{lem:final_arg}, implies the result.

\subsection{Proof of Theorem~\ref{th:lbfgs}}
Since $h(w)$ is constrained to be quadratic, we can absorb this term into the error term to get
\begin{align*}
\min_w \left\| \lb \begin{array}{cc} S & 0 \\ 0 & I\end{array}\rb \lp \lb \begin{array}{c} X \\ \sqrt{\lambda} I\end{array}\rb w - \lb\begin{array}{c} y \\ 0 \end{array} \rb\rp\right\|.
\end{align*}
Note that as long as $S$ satisfies \eqref{eq:rip}, the effective encoding matrix $\diag\lp \lb S, I\rb\rp$ also satisfies the same. Therefore, without loss of generality we can ignore $h(w)$, and assume
\begin{align*}
(\mu + \lambda) I \preceq X^\top X \preceq (M+\lambda) I.
\end{align*}
% here.... finish it then go to lemmas.
We also define $\lambda_{\min}=1-\epsilon$ and $\lambda_{\max}=1+\epsilon$ for convenience. Using convexity and the closed-form expression for the step size, we have
\begin{align*}
&\wtild f^A\lp w_{t+1}\rp - \wtild f^A(w_t) \leq \alpha_t \nabla \wtild f^A(w_t)^\top d_t + \frac{1}{2} \alpha_t^2 d_t^\top X^\top  S_A^\top  S_A X d_t \\
&= - \frac{ \rho\lp \nabla \wtild f^A(w_t)^\top d_t \rp^2}{d_t^\top X^\top  S_D^\top  S_D X d_t} + \frac{1}{2} \frac{\rho^2\lp \nabla \wtild f^A(w_t)^\top d_t\rp^2}{d_t^\top X^\top S_D^\top  S_D X d_t}\cdot \frac{d_t^\top X^\top S_A^\top S_A X d_t}{d_t^\top X^\top S_D^\top S_D X d_t} \\
&= \lp \frac{d_t^\top X^\top \lp \rho^2 S_A^\top S_A - 2\rho S_D^\top S_D\rp  X d_t}{2\lp d_t^\top X^\top S_D^\top S_D X d_t\rp^2}\rp\lp d_t^\top \nabla \wtild f^A(w_t)\rp^2 \\
& \overset{\aaaa}{=} -\rho\lp \frac{z^\top \lp  S_D^\top S_D - \frac{\rho}{2} S_A^\top S_A\rp z}{\lp z^\top S_D^\top S_D z\rp^2}\rp \frac{\lp d_t^\top \nabla \wtild f^A(w_t)\rp^2}{\| Xd_t\|^2} \\
&\overset{\bbbb}{\leq} -\rho\lp \frac{\lambda_{\min}-\frac{\rho}{2}\lambda_{\max}}{\lambda_{\min}^2}\rp \frac{\lp d_t^\top \nabla \wtild f^A(w_t)\rp^2}{\| Xd_t\|^2} \overset{\cccc}{\leq} -\frac{\rho}{M+\lambda}\lp \frac{\lambda_{\min}-\frac{\rho}{2}\lambda_{\max}}{\lambda_{\min}^2}\rp \frac{\lp d_t^\top \nabla \wtild f^A(w_t)\rp^2}{\| d_t\|^2} \\ &\overset{\dddd}{=} -\frac{\rho}{M+\lambda}\lp \frac{\lambda_{\min}-\frac{\rho}{2}\lambda_{\max}}{\lambda_{\min}^2}\rp \frac{\lp \nabla \wtild f^A(w_t)^\top B_t \nabla \wtild f^A(w_t)\rp^2}{\| B_t \nabla \wtild f^A(w_t)\|^2} \\
&\overset{\eeee}{\leq} -\frac{4\rho}{M+\lambda}\lp \frac{\lambda_{\min}-\frac{\rho}{2}\lambda_{\max}}{\lambda_{\min}^2}\rp \frac{c_1 c_2}{\lp c_1 + c_2\rp^2}\| \nabla \wtild f^A(w_t)\|^2 \\
&\overset{\ffff}{\leq} -\frac{8(\mu+\lambda)\rho}{M+\lambda}\lp \frac{\lambda_{\min}-\frac{\rho}{2}\lambda_{\max}}{\lambda_{\min}^2}\rp\frac{c_1 c_2}{\lp c_1 + c_2\rp^2} \lp \wtild f\lp w_t\rp - \wtild f\lp \wtild w_t^*\rp\rp\\
&\overset{\gggg}{=} -\frac{4(\mu+\lambda) c_1 c_2}{(M+\lambda)(1+\epsilon)\lp c_1 + c_2\rp^2}\lp \wtild f\lp w_t\rp - \wtild f\lp \wtild w_t^*\rp\rp \overset{\hhhh}{=} -\lp 1-\gamma\rp \lp \wtild f^A\lp w_t\rp - \wtild f^A\lp \wtild w_t^*\rp\rp.
\end{align*}
where (a) follows by defining $z = \frac{Xd_t}{\| Xd_t\|}$; (b) follows by \eqref{eq:rip}; (c) follows by the assumption that $X^\top X \preceq (M+\lambda )I$; (d) follows by the definition of $d_t$; (e) follows by 
Lemmas~\ref{lem:rotation_bound} 
and \ref{lem:hessian_stability};
%\ref{lem:hessian_stability}; 
(f) follows by strong convexity of $\wtild f$ (by Lemma~\ref{lem:strong_conv}), which implies $\| \nabla \wtild f^A (w_t)\|^2 \geq 2(\mu+\lambda )\lp \wtild f\lp \theta_t\rp - \wtild f\lp \wtild w_t^*\rp\rp$; (g) follows by choosing $\rho = \frac{\lambda_{\min}}{\lambda_{\max}}$; and (h) follows using the definition of $\gamma$.

Re-arranging the inequality, we obtain
\begin{align*}
    \wtild f^A_{t+1} - \wtild f^A\lp \wtild w_t^*\rp \leq \gamma \lp \wtild f^A_t - \wtild f^A\lp \wtild w_t^*\rp\rp,
\end{align*}
and hence applying first Lemma~\ref{lem:final_arg}, we get the desired result.

%  with $\bar w = \wtild w_t$, and then Lemma~\ref{lem:solution_ball}, we get the desired result.

\section{Proofs of Theorem~\ref{th:prox}}\label{ap:prox}
Throughout this appendix, we will define $p(w) = \frac{1}{2}\| Xw - y\|^2$ and $\wtild p_t(w) = \frac{1}{2}\| S_{A_t} \lp Xw-y \rp\|^2$ for convenience, where the normalization by $\sqrt{\eta}$ is absorbed into $S_A$. We will omit the normalization by $n$ for brevity. Let us also define 
\begin{align*}
w^* = \argmin_w p(w) + \lambda h(w)
\end{align*}
to be the true solution of the optimization problem.

By $M$-smoothness of $p(w)$,
\begin{align}
p(w_{t+1}) &\leq p(w_{t}) + \langle \nabla p(w_t), w_{t+1}-w_t\rangle + \frac{M}{2} \| w_{t+1} - w_t\|^2 \notag \\
&\leq p(w^*) - \langle \nabla p(w_t), w^* - w_t\rangle + \langle \nabla p(w_t), w_{t+1}-w_t\rangle + \frac{M}{2} \| w_{t+1} - w_t\|^2 \notag\\
&\leq p(w^*) - \langle \nabla p(w_t), w^* - w_t\rangle + \langle \nabla p(w_t), w_{t+1}-w_t\rangle + \frac{1}{2\alpha} \| w_{t+1} - w_t\|^2 \label{eq:one}
\end{align}
where the second line follows by convexity of $p$, and the third line follows since $\alpha < \frac{1}{M}$. Since $w_{t+1} = \argmin_w \wtild F_t(w)$, by optimality conditions
\begin{align}
0 \in \partial h(w_{t+1}) + \nabla \wtild p_t(w_t) + \frac{1}{\alpha} \lp w_{t+1} - w_t\rp. \label{eq:optimality}
\end{align}
Since $h$ is convex, any subgradient $g \in \partial h$ at $w=w_{t+1}$ satisfies
\begin{align*}
h(w^*) \geq h(w_{t+1}) + \langle g, w^* - w_{t+1}\rangle,
\end{align*}
and therefore \eqref{eq:optimality} implies
\begin{align}
h(w^*) \geq h(w_{t+1}) - \langle \nabla \wtild p_t(w_t) , w^* - w_{t+1}\rangle -   \frac{1}{\alpha} \langle w_{t+1} - w_t , w^* - w_{t+1}\rangle \label{eq:two}.
\end{align}
Combining \eqref{eq:one} and \eqref{eq:two},we have
\begin{align}
f(w_{t+1}) &\leq f(w^*) + \langle \nabla p(w_t) - \nabla \wtild p_t(w_t), w_{t+1} - w^*\rangle \notag \\
&\quad - \frac{1}{\alpha} \langle w_t - w_{t+1}  , w^* - w_{t+1}\rangle + \frac{1}{2\alpha} \| w_t - w_{t+1}\|^2  \notag\\
&= f(w^*) + \langle \nabla p(w_t) - \nabla \wtild p_t(w_t), w_{t+1} - w^*\rangle \notag\\
&\quad + \frac{1}{2\alpha} \lp \| w_t \|^2 -2 w_t^\top w^* + \| w^*\|^2 +2 w_{t+1}^\top w^* - \| w^*\|^2 - \| w_{t+1}\|^2 \rp \notag\\
&= f(w^*) + \langle \nabla p(w_t) - \nabla \wtild p_t(w_t), w_{t+1} - w^*\rangle \notag\\
&\quad + \frac{1}{2\alpha} \lp \| w_t - w^* \|^2 - \| w_{t+1} - w^*\|^2 \rp \label{eq:main}
\end{align}
Define $\Delta = I - S_A^\top S_A$, and consider the second term on the right-hand side of \eqref{eq:main}.
\begin{align*}
&\left\langle \nabla p(w_t) - \nabla \wtild p_t(w_t), w_{t+1} - w^*\right\rangle = \left\langle X^\top \Delta (Xw_t - y), w_{t+1} - w^*\right\rangle \\
&\quad = \left\langle \Delta (Xw_t - y),  Xw_{t+1} - y \rangle - \langle \Delta (Xw_t - y),  Xw^* - y \right\rangle \\
&\quad= \frac{1}{2} \lb \lp X\lp w_t + w_{t+1}\rp - 2y\rp^\top \Delta \lp X\lp w_t + w_{t+1}\rp - 2y\rp \right. \\
&\quad\quad - \lp Xw_{t+1} - y\rp^\top\Delta\lp Xw_{t+1} - y\rp + \lp Xw^* - y\rp^\top\Delta\lp Xw^* - y\rp \\
&\quad\quad \left. - \lp X\lp w_t + w^*\rp - 2y\rp^\top \Delta \lp X\lp w_t + w^*\rp - 2y\rp\rb \\
&\quad= 2 \lp X\lp \frac{w_t + w_{t+1}}{2}\rp - y\rp^\top \Delta \lp X\lp \frac{w_t + w_{t+1}}{2}\rp - y\rp \\
&\quad\quad - 2 \lp X\lp \frac{w_t + w^*}{2}\rp - y\rp^\top \Delta \lp X\lp \frac{w_t + w^*}{2}\rp - y\rp \\
&\quad\quad - \frac{1}{2}\lp Xw_{t+1} - y\rp^\top\Delta\lp Xw_{t+1} - y\rp + \frac{1}{2}\lp Xw^* - y\rp^\top\Delta\lp Xw^* - y\rp \\
&\quad\leq 4\epsilon p \lp \frac{w_t + w_{t+1}}{2}\rp + 4\epsilon p \lp \frac{w_t + w^*}{2}\rp + \epsilon p(w_{t+1}) +  \epsilon p(w^*) \\
&\quad\overset{\aaaa}{\leq} \epsilon \lb 4p(w_t) + 3p(w_{t+1}) + 3p(w^*) \rb \\
&\quad\leq  \epsilon \lb 4f(w_t) + 3f(w_{t+1}) + 3f(w^*) \rb,
\end{align*}
where (a) if by convexity of $p(w)$ and Jensen's inequality, and the last line follows by non-negativity of $h$. Plugging this back in \eqref{eq:main},
\begin{align*}
\lp 1 - 3\epsilon\rp f(w_{t+1}) - 4\epsilon f(w_t) &\leq \lp 1 + 3\epsilon\rp f(w^*)  + \frac{1}{2\alpha} \lp \| w_t - w^* \|^2 - \| w_{t+1} - w^*\|^2 \rp.
\end{align*}
Adding this for $t=1,\dots,(T-1)$,
\begin{align*}
\lp 1 - 7\epsilon \rp \sum_{t=1}^T f(w_t) &\leq  (T-1) \lp 1 + 3\epsilon\rp f(w^*) + 4\epsilon f(w_0) + \frac{1}{2\alpha} \lp \| w_{0} - w^* \|^2 - \| w_{T} - w^* \|^2\rp \\
& \leq T  \lp 1 + 3\epsilon\rp f(w^*)  + 4\epsilon f(w_0)+ \frac{1}{2\alpha} \| w_{0} - w^* \|^2.
\end{align*}
Defining $\bar f_t = \frac{1}{T}\sum_{t=1}^T f(w_t)$, and $\kappa = \frac{1+3\epsilon}{1-7\epsilon}$, we get
\begin{align*}
\bar f_T - \kappa f(w^*) \leq \frac{ 4\epsilon f(w_0) + \frac{1}{2\alpha} \| w_{0} - w^* \|^2}{\lp 1-7\epsilon \rp T},
\end{align*}
which proves the first part of the theorem. To establish the second part of the theorem, note that the
convexity of $h$ implies
\begin{align*}
h(w_t) \geq h(w_{t+1}) + \langle g, w_t - w_{t+1}\rangle,
\end{align*}
where $g \in \partial h(w_{t+1})$. By the optimality condition \eqref{eq:optimality}, this implies
\begin{align*}
h(w_t) \geq h(w_{t+1}) - \langle \nabla \wtild p_t(w_t) , w_t - w_{t+1}\rangle + \frac{1}{\alpha} \| w_{t+1} - w_t \|^2.
\end{align*}
Combining this with the smoothness condition of $p(w)$,
\begin{align*}
p(w_{t+1}) \leq p(w_t) + \langle \nabla p(w_t), w_{t+1}- w_t\rangle + \frac{M}{2} \| w_{t+1} - w_{t}\|^2
\end{align*}
and using the fact that $\alpha < \frac{1}{M}$, we have
\begin{align*}
f(w_{t+1}) \leq f(w_t) + \left\langle \nabla p(w_t) - \nabla \wtild p_t(w_t), w_{t+1} - w_t\right\rangle - \frac{1}{2\alpha} \| w_t - w_{t+1}\|^2.
\end{align*}
As in the previous analysis, we can show that
\begin{align*}
 \left\langle \nabla p(w_t) - \nabla \wtild p_t(w_t), w_{t+1} - w_t\right\rangle \leq \epsilon \lb 7f(w_t) + 3f(w_{t+1}) \rb,
\end{align*}
and therefore
\begin{align*}
f(w_{t+1}) &\leq \frac{1+7\epsilon}{1-3\epsilon}  f(w_t) - \frac{1}{2\alpha (1-3\epsilon)} \| w_t - w_{t+1}\|^2 \\
&\leq \frac{1+7\epsilon}{1-3\epsilon}  f(w_t).
\end{align*}

\section{Proof of Theorem~\ref{th:bcd}}\label{ap:bcd}
%Let $v_t^*$ be an optimum that has the the values in $A_t^c$ equal to their current value, \emph{i.e.}, $\min g(v) = g(v_t^*)$, and $\lb v_t^* \rb_{A_t^c} = \lb v_t \rb_{A_t^c}$. We will show in Lemma~\ref{lem:opt_equivalence} that such a point always exists. For a subset of nodes $A \subseteq [m]$, we will use the notation $\nabla_A$ to mean gradient with respect to variables held by the nodes in $A$, \emph{i.e.}, for a function $f$,
%\begin{align*}
%\lb \nabla_A f \rb_i := \lbp
%\begin{array}{ll}
%\nabla_i f, & i \in A \\
%0, & i \in A^c.
%\end{array}
%\right.
%\end{align*}
For an iterate $v_t$, let $w_t := Sv_t$. Define the solution set $\mathcal{S} = \argmin_w g(w)$, and $w^*_t = \mathcal{P}_\mathcal{S} \lp w_t\rp$, where $\mathcal{P}_\mathcal{S}\lp \cdot \rp$ is the projection operator onto the set $\mathcal{S}$. Let $v^*_t$ be such that $w_t^* = S^\top v_t^*$, which always exists since $S$ has full column rank.

We also define $L':=L(1+\epsilon)$, and $g^* = \min_w g(w) = g(w_t^*)$ for any $t$. %Throughout this appendix, for a subset of nodes $A \subseteq [m]$, $S_A$ refers to the submatrix formed by horizontally stacking the blocks $S_i$, for $i \in A$.

\subsection{Lemmas}

\begin{lemma}\label{lem:bcd_smooth}
$\wtild g(v)$ is $L'$-smooth.
\end{lemma}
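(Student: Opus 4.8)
The plan is to use the identity $\wtild g(v) = g(S^\top v)$ (since $g(w) = \phi(Xw)$ and $\wtild g(v) = \phi(XS^\top v)$) and to track the smoothness constant through the fixed linear map $S^\top$. First I would compute the gradient by the chain rule,
\[
\nabla \wtild g(v) = S X^\top \nabla \phi\lp XS^\top v\rp = S\,\nabla g\lp S^\top v\rp.
\]
Then, applying the quadratic-upper-bound characterization of $L$-smoothness of $g$ (the same inequality stated for smooth functions in Section~\ref{sec:encoded}) at the points $w = S^\top v$ and $w' = S^\top v'$, I obtain
\[
\wtild g(v') = g\lp S^\top v'\rp \leq g\lp S^\top v\rp + \langle \nabla g\lp S^\top v\rp, S^\top(v'-v)\rangle + \frac{L}{2}\| S^\top(v'-v)\|^2,
\]
and the middle term is exactly $\langle \nabla \wtild g(v), v'-v\rangle$ by the gradient formula above.

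It then remains to bound $\| S^\top(v'-v)\|^2 = (v'-v)^\top SS^\top(v'-v)$ by $(1+\epsilon)\| v'-v\|^2$, i.e. to show $\| S\|^2 = \lambda_{\max}\lp S^\top S\rp \leq 1+\epsilon$. This is the one place where the BRIP hypothesis \eqref{eq:rip} is invoked: taking $\eta = 1$ the set $A = [m]$ satisfies $|A| = \eta m$, so \eqref{eq:rip} gives $S^\top S \preceq (1+\epsilon) I$ directly; for $\eta < 1$ one instead partitions $[m]$ into $1/\eta$ blocks each of size $\eta m$ and sums the per-block bounds $S_A^\top S_A \preceq \eta(1+\epsilon) I$ to reach the same conclusion. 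Substituting this spectral bound yields
\[
\wtild g(v') \leq \wtild g(v) + \langle \nabla \wtild g(v), v'-v\rangle + \frac{L(1+\epsilon)}{2}\| v'-v\|^2 = \wtild g(v) + \langle \nabla \wtild g(v), v'-v\rangle + \frac{L'}{2}\| v'-v\|^2,
\]
which is precisely $L'$-smoothness of $\wtild g$. (Equivalently, one can argue $\| \nabla \wtild g(v) - \nabla \wtild g(v')\| \leq \| S\|\, L\,\| S^\top\|\,\| v - v'\| = L\| S\|^2\| v-v'\|$ and invoke the same bound $\| S\|^2 \leq 1+\epsilon$.)

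There is no genuinely hard step here: the argument is a routine change of variables through $S^\top$. The only point that deserves a sentence of justification is the spectral estimate $\| S\|^2 \leq 1+\epsilon$, since the BRIP definition constrains submatrices $S_A$ of size $\eta m$ rather than $S$ itself — the partition-and-sum observation above is what closes that gap.
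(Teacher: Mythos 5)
Your proposal is correct and follows essentially the same route as the paper: apply the quadratic upper bound of the $L$-smooth function $g$ at $w=S^\top v$, $w'=S^\top v'$, identify the linear term with $\langle \nabla\wtild g(v), v'-v\rangle$ via the chain rule, and absorb the factor $\|S^\top(v'-v)\|^2\leq(1+\epsilon)\|v'-v\|^2$ coming from the BRIP condition. Your partition-and-sum justification of the spectral bound $S^\top S\preceq(1+\epsilon)I$ is in fact more explicit than the paper, which simply cites the BRIP property at that step.
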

\begin{proof}
For any $u,v,$
\begin{align*}
\wtild g(u) &= g( S^\top u) \leq g( S^\top v) + \langle \nabla g( S^\top v), S^\top (u-v)\rangle + \frac{L}{2}\| S^\top (u-v)\|^2, \\
&\stackrel{(a)}{\leq} g( S^\top v) + \langle  S \nabla g( S^\top v), u-v\rangle + \frac{L (1+\epsilon)}{2}\| u-v\|^2, \\
&\stackrel{(b)}=  \wtild g(v) + \langle \nabla \wtild g(v), u-v\rangle + \frac{L (1+\epsilon)}{2}\| u-v\|^2,
\end{align*}
where $(a)$ follows from smoothness of $g$, and from $(m, \eta, \epsilon)$-BRIP property,
and $(b)$ is by the chain rule of derivatives and the definition of $\wtild g(v)$.
Therefore $\wtild g$ is $L(1+\epsilon)$-smooth.
\end{proof}

\begin{lemma}\label{lem:opt_equivalence}
For any $t$,
\begin{align*}
\wtild g^* := \min_{v} \wtild g(v) = \min_{w} g(w) =: g^*.
\end{align*}
\end{lemma}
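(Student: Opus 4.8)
The plan is to reduce the claim to the observation that the change of variables $w = S^\top v$ is \emph{onto} $\mathbb{R}^p$, so that minimizing $\wtild g$ over $v$ is literally the same problem as minimizing $g$ over $w$.

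First I would record the identity $\wtild g(v) = \phi\lp XS^\top v\rp = g\lp S^\top v\rp$, which is immediate from the definitions of $g$ and $\wtild g$. Next, I would note that $S$ has full column rank; this is already invoked right before the lemma statement (to define $v_t^*$), and it follows from the $(m,\eta,\epsilon)$-BRIP condition applied with $A = [m]$, which gives $(1-\epsilon) I \preceq S^\top S$. Consequently $S^\top$, viewed as a linear map from $\mathbb{R}^{\beta p}$ to $\mathbb{R}^p$, is surjective, i.e. $\lbp S^\top v : v \in \mathbb{R}^{\beta p}\rbp = \mathbb{R}^p$.

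With surjectivity in hand, both inequalities are one line each. For $\wtild g^* \geq g^*$: every $v$ satisfies $\wtild g(v) = g(S^\top v) \geq \min_w g(w) = g^*$, hence $\min_v \wtild g(v) \geq g^*$. For $\wtild g^* \leq g^*$: choose any minimizer $\bar w \in \argmin_w g(w)$ (which exists since $\mathcal{S} = \argmin_w g(w)$ is assumed nonempty), pick $\bar v$ with $S^\top \bar v = \bar w$ by surjectivity, and observe $\wtild g(\bar v) = g(\bar w) = g^*$, so $\min_v \wtild g(v) \leq g^*$. Combining the two gives $\wtild g^* = g^*$; since the right-hand side does not depend on $t$, the displayed equality holds for every $t$.

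There is essentially no real obstacle in this lemma; the only point worth a moment's care is that the minimum of $\wtild g$ is genuinely attained rather than merely approached, which the argument above also settles, since the $S^\top$-preimage of any minimizer of $g$ is a minimizer of $\wtild g$. This lemma is exactly what lets the Theorem~\ref{th:bcd} analysis compare $g(w_t) = \wtild g(v_t)$ against the \emph{original} optimum $g^*$ with no encoding-induced gap.
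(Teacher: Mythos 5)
Your proposal is correct and follows essentially the same route as the paper: both directions rest on the identity $\wtild g(v)=g(S^\top v)$ together with the full column rank of $S$, the paper merely exhibiting the preimage explicitly as $v^*=S(S^\top S)^{-1}w^*$ where you invoke surjectivity of $S^\top$ abstractly. No gaps.
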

\begin{proof}
It is clear that 
\begin{align*}
\min_v \tilde g(v) = \min_v g(S^\top v) \geq \min_w g(w).
\end{align*}
To show the other direction, set $v^* = S (S^\top S)^{-1} w^*$, where $S^\top S$ is invertible since $S$ has full column rank. Then $g(w^*) = \tilde g(v^*) \geq \min_v \tilde g(v)$. 
%Let $w^* \in \argmin g(w)$, and thus $g(w^*) = \min_w g(w)$. Since $\wtild g(v) = g(Sv)$, to prove the result it is sufficient to show that there exists $v$ such that $Sv = w^*$. This holds since $S$ has full row rank and $\wtild g(S^\dagger w^*) = g(w^*)$, where $S^\dagger$ represents the Moore-Penrose pseudo-inverse (other choices are possible).
\end{proof}

\begin{lemma}\label{lem:rsc}
If $g$ is $\nu$-restricted-strongly convex, then
\begin{align*}
g(w) - g^* \geq \nu \| w - w^*\|^2,
\end{align*}
where $w^* = \mathcal{P}_{\mathcal{S}}(w)$. 
\end{lemma}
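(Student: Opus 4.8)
The plan is to upgrade the gradient form of restricted strong convexity to the quadratic-growth (error-bound) inequality by a one-dimensional argument along the segment joining $w^*$ to $w$. First I would record the two structural facts that make this work: the minimizer set $\mathcal{S}=\argmin_w g(w)$ is closed and convex, since it coincides with the sublevel set $\{w:g(w)\le g^*\}$ of the continuous convex function $g$; hence $\mathcal{P}_{\mathcal{S}}$ is well defined and single-valued, and $w^*=\mathcal{P}_{\mathcal{S}}(w)$ is characterized by the obtuse-angle condition $\langle w-w^*,\,s-w^*\rangle\le 0$ for every $s\in\mathcal{S}$. Also, because $g$ is $L$-smooth and $w^*\in\mathcal{S}$, we have $\nabla g(w^*)=0$.

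The key step is to show that $w^*$ stays the projection of \emph{every} point of the segment. For $\theta\in[0,1]$ set $w_\theta:=w^*+\theta(w-w^*)$, so that $w_\theta-w^*=\theta(w-w^*)$ and therefore, for all $s\in\mathcal{S}$,
\[
\langle w_\theta-w^*,\,s-w^*\rangle=\theta\,\langle w-w^*,\,s-w^*\rangle\le 0 ,
\]
which by the same characterization gives $w^*=\mathcal{P}_{\mathcal{S}}(w_\theta)$. Now the hypothesis of $\nu$-restricted-strong convexity, which holds at \emph{every} point with that point's own projection, applies at $w_\theta$ with projection exactly $w^*$, yielding $\langle \nabla g(w_\theta),\,w_\theta-w^*\rangle\ge\nu\|w_\theta-w^*\|^2$; substituting $w_\theta-w^*=\theta(w-w^*)$ and dividing by $\theta>0$,
\[
\langle \nabla g(w_\theta),\,w-w^*\rangle\ \ge\ \nu\,\theta\,\|w-w^*\|^2 .
\]

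Finally I would integrate. Since $g$ is differentiable, $\varphi(\theta):=g(w_\theta)$ is $C^1$ on $[0,1]$ with $\varphi'(\theta)=\langle\nabla g(w_\theta),\,w-w^*\rangle$, so by the fundamental theorem of calculus
\[
g(w)-g^*=\varphi(1)-\varphi(0)=\int_0^1\varphi'(\theta)\,d\theta\ \ge\ \nu\,\|w-w^*\|^2\int_0^1\theta\,d\theta ,
\]
which delivers the quadratic lower bound on $g(w)-g^*$. The only nontrivial ingredient is the projection-invariance along the segment in the middle step; everything else is bookkeeping, and the main care needed is to verify that the RSC hypothesis truly applies verbatim at each $w_\theta$ — which is precisely what that step secures. (Carrying the integral out literally gives $\tfrac{\nu}{2}\|w-w^*\|^2$; matching the constant in the statement is a matter of the normalization adopted in the definition of $\nu$-restricted-strong convexity.)
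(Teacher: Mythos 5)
Your proposal is correct and follows essentially the same route as the paper: parametrize the segment $w^*+\tau(w-w^*)$, observe that $w^*$ remains the projection of every point on it, apply restricted strong convexity pointwise, and integrate (your justification of the projection-invariance step via the obtuse-angle characterization is in fact more explicit than the paper's). Your closing remark about the constant is well taken: the integral $\int_0^1 \tau\,d\tau = \tfrac{1}{2}$ genuinely yields $\tfrac{\nu}{2}\|w-w^*\|^2$, and the paper's own displayed computation asserts $\nu\|w-w^*\|^2$ in its last line despite containing exactly the same integral, so the factor of two you flag is a real slip in the paper rather than a gap in your argument.
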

\begin{proof}
We follow the proof technique in \cite{ZhangYin_13}. We have
\begin{align*}
g(w) &= g^* + \int_0^1 \langle \nabla g(w^* + \tau (w - w^*)), w- w^*\rangle d\tau \\
&= g^* + \int_0^1 \frac{1}{\tau}\langle \nabla g(w^* + \tau (w - w^*)), \tau (w- w^*)\rangle d\tau \\
&\geq g^* + \int_0^1 \frac{1}{\tau}\nu \tau^2 \| w - w^*\|^2 d\tau \\
& = g^* + \nu \| w - w^*\|^2,
\end{align*}
which is the desired result, where in the third line we used $\nu$-restricted strong convexity, and the fact that
\begin{align*}
\mathcal{P}_{\mathcal{S}}(w^* + \tau (w - w^*)) = w^*,
\end{align*}
for all $\tau \in [0,1]$, since $w^* = \mathcal{P}_{\mathcal{S}}(w)$ is thr orthogonal projection.
\end{proof}

\subsection{Proof of Theorem~\ref{th:bcd}}
Recall that the step for block $i$ at time $t$, $\Delta_{i,t}$, is defined by
\begin{align*}
\Delta_{i,t} := \lbp 
\begin{array}{cl}
-\alpha\nabla_i  \wtild g(v_{t-1}), & \text{if $i \in A_t$} \\
0, & \text{otherwise.} \\
\end{array}
\right.
\end{align*}
By smoothness and definition of $\Delta_t$,
\begin{align}
 \wtild g (v_{t+1}) -\wtild g (v_{t})& \leq \langle \nabla \wtild g (v_t), \Delta_t\rangle + \frac{L'}{2} \| \Delta_t\|^2\notag \\
&= \sum_{i \in A_t} \lp\langle \nabla_i \wtild g (v_t), \Delta_{i,t}\rangle + \frac{L'}{2} \| \Delta_{i,t}\|^2\rp \notag\\
&= \sum_{i \in A_t} \lp -\frac{1}{\alpha}\langle \Delta_{i,t}, \Delta_{i,t}\rangle + \frac{L'}{2} \| \Delta_{i,t}\|^2\rp\notag \\
&= -\lp \frac{1}{\alpha} - \frac{L'}{2}\rp  \| \Delta_t\|^2. \label{eq:bcd_initial}
\end{align}

%Define $\wtild v_t^* = \mathcal{P}_{\argmin \wtild g_t}\lp \wtild v_t\rp$, where $\mathcal{P}_C(x) := \argmin_{z \in C} \| z-x\|^2$ is a projection operator. 
Now, for any $t$,
\begin{align}
\wtild g(v_t) - \wtild g^* &\leq \left\langle \nabla \wtild g( v_t), v_t^* - v_t\right\rangle = \left\langle S \nabla g(S^\top v_t), v_t^* - v_t\right\rangle \notag\\ 
&\stackrel{(a)}{\leq} \left\|\nabla g(S^\top v_t) \right\| \cdot \left\| S^\top \lp v_t^* - v_t\rp \right\| = \left\|\nabla g(S^\top v_t) \right\| \cdot \left\| w_t^* - w_t \right\|, \label{eq:opt_gap}
\end{align}
where $(a)$ is due to Cauchy-Schwartz inequality. %Since
%\begin{align*}
%\Delta_i^t = -\frac{\gamma}{L'}\nabla_i g( v^t) = -\frac{\gamma}{L'}\nabla_i f( Sv^t) = -\frac{\gamma}{L'} %s_i^\top \nabla f(Sv^t),
%\end{align*}
Using
\begin{align*}
\Delta_t = -\alpha P_t\lb
\begin{array}{c}
 S_{A_t} \nabla g(S^\top v_t) \\
0
\end{array}
\rb,
\end{align*}
where $P_t$ is a block permutation matrix mapping $\lbp 1,\dots,k\rbp$ to the node indices in $A_t$, we have
\begin{align}
\| \Delta_t\|^2 = \alpha^2 \nabla g(S^\top v_t)^\top S^\top_{A_t} P_t^\top P_t S_{A_t}\nabla g(S^\top v_t) \geq (1-\epsilon) \alpha^2  \left\|\nabla g( S^\top v_t)\right\|^2. \label{eq:delta_f}
\end{align}

Because of \eqref{eq:bcd_initial}, we have
\begin{align*}
\wtild g(v_{t+1}) - \wtild g(v_t) = g(w_{t+1}) - g(w_t) \leq 0,
\end{align*}
and hence $w_t$ is contained in the level set defined by the initial iterate for all $t$, \emph{i.e.},
\begin{align*}
w_t \in \lbp w: g(w) \leq g(w_0)\rbp.
\end{align*}
By the diameter assumption on this set, we have $\| w_t - w_t^*\| \leq R$ for all $t$.
%Since $g$ is coercive, \emph{i.e.}, $\| w \| \to \infty \implies g(w) \to \infty$, this means that there must exist a constant $R$ such that $\| w_t \| \leq R$ for all $t$. Further, since $w_t^* = \mathcal{P}_\mathcal{S} \lp w_t\rp$ is a projection, $\| w_t^*\| \leq \| w_t\|$ and thus by triangle inequality
%\begin{align}
%\| w_t - w_t^* \| \leq 2\| w_t\| \leq 2R. \label{eq:param_norm_bound}
%\end{align}
Using this and \eqref{eq:delta_f} in \eqref{eq:opt_gap}, we get
\begin{align*}
\wtild g( v_t) - \wtild g^* \leq \frac{R}{\alpha} \sqrt{\frac{1}{1-\epsilon}}  \| \Delta_t\|.
\end{align*}

Combining this with \eqref{eq:bcd_initial},
\begin{align*}
\wtild g ( v_{t+1}) - \wtild g( v_{t}) \leq - \frac{(1-\epsilon)\alpha}{R}\lp 1 - \frac{\alpha L'}{2}\rp \lp \wtild g ( v_t) - \wtild g^*\rp^2.
\end{align*}

Defining $\pi_t := \wtild g ( v_t) - \wtild g^*$, and $C := \frac{(1-\epsilon)\alpha}{R}\lp 1 - \frac{\alpha L'}{2}\rp$, this implies
\begin{align*}
\pi_{t+1} \leq \pi_t - C \pi_t^2.
\end{align*}
Dividing both sides by $\pi_t \pi_{t+1}$, and noting that $\pi_{t+1} \leq \pi_t$ due to \eqref{eq:bcd_initial},
\begin{align*}
\frac{1}{\pi_{t}} \leq \frac{1}{\pi_{t+1}} - C \frac{\pi_t}{\pi_{t+1}}  \leq \frac{1}{\pi_{t+1}} - C
\end{align*}
Therefore
\begin{align*}
\frac{1}{\pi_t} \geq \frac{1}{\pi_0} + Ct,
\end{align*}
which implies
\begin{align*}
\pi_t \leq \frac{1}{\frac{1}{\pi_0} + Ct}.
\end{align*}
Since $g(w_t) = g(S^\top v_t) = \wtild g(v_t)$ by definition, and $g^* = \wtild g^*$ by Lemma~\ref{lem:opt_equivalence}, $\pi_t = g(w_t) - g^*$, and therefore we have established the first part of the theorem.

To prove the second part, we make the additional assumption that $g$ satisfies $\nu$-restricted-strong convexity, which, through Lemma~\ref{lem:rsc}, implies
%\begin{align*}
%\langle \nabla g(w), w - w^* \rangle \geq \gamma \| w - w^*\|^2,
%\end{align*}
%for any $w$, where $w^* = \argmin g$. Together with convexity property $g(w) - g^* \geq \langle \nabla g(w), w - w^*\rangle$, this implies
$
g(w) - g^* \geq \nu \| w - w^*\|^2,
$
for $w^* =\mathcal{P}_\mathcal{S}(w)$. Plugging in $w = w_t$ then gives the bound
\begin{align*}
\| w_t - w_t^*\|^2 \leq \frac{1}{\nu} \pi_t.
\end{align*}
Using this bound as well as \eqref{eq:delta_f} in \eqref{eq:opt_gap}, we have
\begin{align*}
\pi_t^2 \leq \frac{\| \Delta_t\|^2}{\nu (1-\epsilon) \alpha^2}   \pi_t.
\end{align*}
Using \eqref{eq:bcd_initial}, this gives
\begin{align*}
\pi_t &\leq \frac{1}{\nu (1-\epsilon) \alpha^2} \lp \frac{1}{\alpha} - \frac{L'}{2}\rp^{-1} \lp \pi_t - \pi_{t+1}\rp,
%&= \frac{(1+\epsilon) L'}{(1-\epsilon) \gamma}\frac{2-\alpha}{2\alpha^3}\lp \pi_t - \pi_{t+1}\rp.
\end{align*}
which, defining $\xi = \frac{1}{\nu (1-\epsilon) \alpha} \lp 1 - \frac{L' \alpha}{2}\rp^{-1}$, results in
\begin{align*}
\pi_t \leq \lp 1 - \frac{1}{\xi}\rp^t \pi_0,
\end{align*}
which shows the desired result.

\section{Full results of the Matrix factorization experiment}\label{ap:movielens}
%\begin{figure}
%    \centering
%    \includegraphics[width=4in]{figs/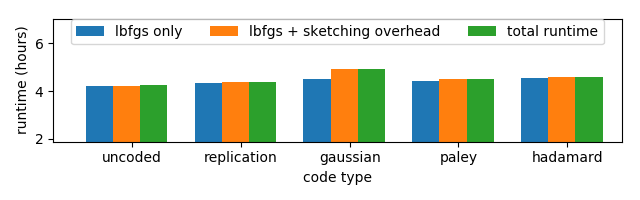}
%    \caption{Runtime with and without sketching overhead and time spent solving problems with $< 500$ rows, for different sketching techniques. Here, $m = 24$ nodes and $k = m/2$.  Gaussian has a slightly larger sketching overhead time, since it generates its sketch matrix for each new problem; in contrast, Paley and Hadamard draw from the sketch matrix bank.}
%    \label{fig:movielens_time}
%\end{figure}
Tables \ref{t-movielens8} and \ref{t-movielens24} give the test and train RMSE for the Movielens 1-M recommendation task, with a random 80/20 train/test split.
\begin{table}[h] \small
\centering
\begin{tabular}{|c|c|c|c|c|c|ccc}
\hline
& uncoded & replication & gaussian & paley & hadamard\\
\hline
&\multicolumn{5}{|c|}{$m=8$, $k=1$}\\
\hline
train RMSE &0.804 & 0.783  & 0.781 & \textbf{0.775} & 0.779 \\
test RMSE &0.898 & 0.889  & 0.877 & \textbf{0.873} & 0.874 \\
runtime &1.60 & 1.76 & 2.24 & 1.82 & 1.82  \\
\hline
&\multicolumn{5}{|c|}{$m=8$, $k=4$}\\
\hline
train RMSE &0.770 & 0.766 &  0.765 & \textbf{0.763} & 0.765 \\
test RMSE &0.872 & 0.872 & \textbf{0.866} & 0.868 & 0.870 \\
runtime &2.96 & 3.13 & 3.64 & 3.34 & 3.18  \\
\hline
&\multicolumn{5}{|c|}{$m=8$, $k=6$}\\
\hline
train RMSE &0.762 & 0.760  & 0.762 & \textbf{0.758} & 0.760 \\
test RMSE &0.866 & 0.871  & 0.864 & \textbf{0.860} & 0.864 \\
runtime &5.11 & 4.59 & 5.70 & 5.50 & 5.33 \\
\hline
\end{tabular}
\caption{Full results for Movielens 1-M, distributed over $m=8$ nodes total. Runtime is in hours. An uncoded scheme running full batch L-BFGS has a train/test RMSE of 0.756 / 0.861, and a runtime of 9.58 hours.}
\label{t-movielens8}
\end{table}

\begin{table}[h]\small
\centering
\begin{tabular}{|c|c|c|c|c|c|ccc}
\hline
& uncoded & replication  & gaussian & paley & hadamard\\
\hline
&\multicolumn{5}{|c|}{$m=24$, $k=3$}\\
\hline
train RMSE &0.805 & 0.791 & 0.783 & \textbf{0.780} & 0.782 \\
test RMSE &0.902 & 0.893 & 0.880 & \textbf{0.879} & 0.882 \\
runtime &2.60 & 3.22  & 3.98 & 3.49 & 3.49 \\
\hline
&\multicolumn{5}{|c|}{$m=24$, $k=12$}\\
\hline
train RMSE &0.770 &  \textbf{0.764} & 0.767 & \textbf{0.764} & 0.765 \\
test RMSE &0.872 & 0.870 & \textbf{0.866} & 0.868 & 0.868 \\
runtime &4.24 &4.38 & 4.92 & 4.50 & 4.61  \\
\hline
\end{tabular}
\caption{Full results for Movielens 1-M, distributed over $m=24$ nodes total. Runtime is in hours. An uncoded scheme running full batch L-BFGS has a train/test RMSE of 0.757 / 0.862, and a runtime of 14.11 hours.}
\label{t-movielens24}
\end{table}

\newpage

\bibliography{Ref}

%\vskip 0.2in
%\bibliographystyle{ieee}

\end{document}